\documentclass{article}
\usepackage{boxinz-macros}
\usepackage{graphicx}
\usepackage{url}
\usepackage{authblk}
\usepackage{dsfont}
\usepackage{float}
\usepackage[colorlinks]{hyperref}
\usepackage{natbib}
\usepackage{xcolor}

\providecommand{\keywords}[1]{\textbf{Keywords:} #1}
\usepackage{placeins}
\usepackage{enumitem}
\usepackage{amsmath}
\usepackage{xspace}
\usepackage{mathabx}
\usepackage{subcaption}
\usepackage[colorlinks]{hyperref}
\usepackage{xcolor}
\usepackage{placeins}
\usepackage{enumitem}
\usepackage[compact]{titlesec}
\usepackage{bibunits}
\usepackage{wrapfig}
\usepackage{amsmath}

\newcommand{\RegM}{\mathbf{C}}
\newcommand{\diag}{\mathrm{diag}}

\allowdisplaybreaks
\title{SMART: A Spectral Transfer Approach to Multi-Task Learning}

\author[1]{Boxin Zhao}
\author[2]{Mladen Kolar}
\author[3]{Jinchi Lv}

\affil[1]{University of Chicago}
\affil[2]{University of Southern California}
\affil[3]{University of Southern California}

\date{}

\begin{document}

\begin{bibunit}[plainnat]

\maketitle

\begin{abstract}
\normalsize

Multi-task learning is effective for related applications, but its performance can deteriorate when the target sample size is small. Transfer learning can borrow strength from related studies; yet, many existing methods rely on restrictive bounded-difference assumptions between the source and target models. We propose SMART, a spectral transfer method for multi-task linear regression that instead assumes spectral similarity: the target left and right singular subspaces lie within the corresponding source subspaces and are sparsely aligned with the source singular bases. Such an assumption is natural when studies share latent structures and enables transfer beyond the bounded-difference settings. SMART estimates the target coefficient matrix through structured regularization that incorporates spectral information from a source study. Importantly, it requires only a fitted source model rather than the raw source data, making it useful when data sharing is limited. Although the optimization problem is nonconvex, we develop a practical ADMM-based algorithm. We establish general, non-asymptotic error bounds and a minimax lower bound in the noiseless-source regime. Under additional regularity conditions, these results yield near-minimax Frobenius error rates up to logarithmic factors. Simulations confirm improved estimation accuracy and robustness to negative transfer, and analysis of multi-modal single-cell data demonstrates better predictive performance.
The Python implementation of SMART, along with the code to reproduce all experiments in this paper, is publicly available at \url{https://github.com/boxinz17/smart}.

\end{abstract}

\keywords{Data integration, Reduced-rank regression, Nonconvex optimization, Source-free transfer learning, Multi-modal single-cell data}

\section{Introduction}

Many modern scientific studies require estimating multiple related regression models simultaneously. A particularly important example arises in genomics, where one seeks to characterize the gene--protein association networks across tissues or cell types~\citep{tang2023explainable}. Recent single-cell multi-modal technologies now measure gene expression (mRNA) and protein abundance (surface markers) in the same cell~\citep{luecken2021sandbox}, producing paired high-dimensional observations across modalities. A natural target is therefore a gene--protein association matrix that summarizes the dominant cross-modal relationships. Since cellular systems are often driven by a relatively small number of coordinated functional programs, these cross-modal associations are well-modeled by a low-rank coefficient matrix, with each latent factor corresponding to a gene--protein module.

Such settings also motivate learning across related studies. Multi-task learning can improve estimation by borrowing strength across tasks, but its gains are often limited precisely when transfer is most needed: some studies have only modest sample sizes. Such imbalances are routine in biomedical data, where rare cell types, disease-specific states, or difficult experimental conditions are sparsely observed, whereas other populations are much better sampled. In gene--protein association studies across cell types, for example, a data-poor target population may still share core biological programs with a data-rich source population. Transfer learning is thus a natural strategy for improving estimation in the target study by leveraging information learned from related sources~\citep{pan2009survey}.

Two obstacles make this problem challenging. First, many high-dimensional transfer-learning methods assume a \emph{bounded-difference assumption}, namely that the source and target coefficients are close under some metric~\citep{li2023estimation,park2025transfer}. In the multivariate low-rank setting, \citet{park2025transfer} required the contrast between the target coefficient matrix and each informative source to have small nuclear norm, which ties together similarity in singular directions and similarity in effect magnitude. This can become restrictive in complex biological systems: large shifts in individual coefficients or singular values may occur even when the underlying low-dimensional modules are largely preserved. For instance, gene–protein associations may vary substantially across cell types or treatment conditions, whereas pathways of co-regulated genes and protein complexes remain aligned, leading to a large nuclear-norm contrast despite strong subspace similarity. Second, many transfer-learning procedures need direct access to the source data~\citep{li2022transfer}, whereas in practice, privacy, consent, or platform restrictions may leave only a fitted source model or other summary representation available.

Motivated by these challenges, we propose a spectral notion of transfer specifically designed for low-rank multi-task regression. Instead of insisting that source and target coefficients be numerically close, we require a \emph{spectral containment condition}, which states that the left and right singular subspaces of the target coefficient matrix lie within those of a clean source matrix, along with a \emph{sparse alignment condition}, which stipulates that the target singular vectors admit sparse representations in the source singular bases. These assumptions \textit{formalize} a biologically relevant setting in which the target repurposes a subset of latent modules already present in the source. They also constitute the \textit{central} conceptual innovation of our framework: in contrast to bounded-difference approaches, they permit arbitrarily large shifts in singular values and individual coefficients while preserving the shared low-dimensional structure that enables effective transfer.

Building on this new perspective, we propose \textsc{SMART}, a spectral transfer method for low-rank multi-task regression that uses only a fitted source regression matrix and injects its spectral information through structured regularization. This \textit{source-free} design is practically important when the raw source data cannot be shared. The resulting optimization problem is nonconvex, so we develop an efficient ADMM-based algorithm that addresses the orthogonality constraints through manifold optimization. On the theoretical side, we establish general, non-asymptotic error upper bounds, and a minimax lower bound in the noiseless-source regime; under additional conditions, these results yield near-minimax Frobenius error rates up to logarithmic factors. Empirically, \textsc{SMART} delivers clear gains in small-sample regimes in both simulations and a multi-modal single-cell application, demonstrating that subspace-level transfer can remain powerful even when the source and target effect magnitudes differ substantially.

\subsection{Related works}

\textbf{Multi-task learning.} Multi-task learning has a long history in computer science~\citep{caruana1997multitask}, especially in neural networks and representation learning~\citep{zhang2021survey}. More recently, the statistical literature has developed a rich theory under the multivariate linear regression framework. At its core is the reduced-rank regression (RRR), which models the coefficient matrix as low-rank to capture response dependence and improve efficiency~\citep{anderson1951estimating,izenman1975reduced}. Modern extensions incorporate sparsity, adaptivity, and scalability: \citet{yuan2007dimension} and \citet{chen2013reduced} studied nuclear-norm penalization for adaptive dimension reduction; the dirty model of \citet{jalali2010dirty} captured both shared and task-specific components; and \citet{bunea2011optimal,bunea2012joint} developed methods for joint variable and rank selection. Sparse formulations further enable simultaneous predictor and factor selection~\citep{chen2012sparse,ma2020adaptive}, while SOFAR extends structured singular value decomposition (SVD) to large-scale association learning~\citep{uematsu2019sofar}. Together, these works provide a flexible toolkit for modeling multi-task relationships through low-rank structure and sparsity. Our work builds on such literature but studies a \textit{distinct} problem: improving a target low-rank multivariate regression by transferring only structured spectral information from an external source model.

\noindent\textbf{Transfer learning.} Transfer learning has long been studied in computer science~\citep{pan2009survey} and has recently received growing attention in statistics, with applications in high-dimensional linear regression~\citep{li2022transfer}, generalized linear models~\citep{li2023estimation}, Gaussian graphical models~\citep{zhao2025trans}, and functional data~\citep{cai2024transferfunctional}. Many statistical approaches rely on a \emph{bounded-difference assumption}, requiring the source and target parameters to be close under a suitable metric. 
There is also a growing interest in source-free transfer learning, where the raw source data are unavailable, and transfer must rely on a pretrained source model; see, for example, \citet{gu2025robust}. More recently, structurally motivated transfer methods have also been developed in other problems, including a rank-based approach for high-dimensional survival analysis \citep{qiao2025rank} and a spectral transfer method for matrix completion \citep{he2024representational}. Although these works also exploit structural similarity across domains, our focus on multivariate low-rank regression leads to a \textit{different} notion of transferable structure, a \textit{different} nonconvex estimator, and a \textit{different} theoretical analysis.

The most closely related work is the concurrent study of \citet{park2025transfer}, which examines transfer learning for multivariate low-rank regression under a bounded nuclear-norm contrast condition between the target and each informative source coefficient matrix. This assumption essentially enforces similarity in both singular directions and effect magnitudes, and thus can \textit{rule out} scenarios where the signal undergoes large but directionally aligned shifts. By contrast, in the single-source setting considered here, we posit exact spectral containment of the target singular subspaces within those of a clean source matrix, along with sparse alignment of the target singular vectors in the source singular bases. This \textit{sharper} structural assumption disentangles subspace reuse from effect-size similarity, allows for arbitrarily large changes in singular values, and directly informs the regularization design and theoretical analysis of \textsc{SMART}. An illustrative example and additional discussion of the differences from \citet{park2025transfer} are provided in Appendix~\ref{sec:example-spectral-vs-nuclear}.

\subsection{Organization and notation}

\textbf{Organization.} The rest of the paper is organized as follows. Section~\ref{sec:problem-setup-smart} introduces the model setup and the spectral transfer framework. Section~\ref{sec:methodology-smart} presents \textsc{SMART}, including its ADMM-based optimization and hyperparameter selection. Section~\ref{sec:theory-analysis-smart} establishes theoretical guarantees, including finite-sample error bounds and a minimax lower bound. Section~\ref{sec:simulation-smart} reports simulation examples, Section~\ref{sec:real-world-data-smart} presents the multi-modal single-cell application, and Section~\ref{sec:conclusion-smart} concludes with some discussions. All the proofs and additional details on the experiments and optimization are provided in the Appendix.
The Python implementation of our method, along with the code to reproduce all experiments in this paper, is publicly available at
\begin{center}
\url{https://github.com/boxinz17/smart}
\end{center}

\noindent\textbf{Notation.} For a vector $\mathbf{v} \in \mathbb{R}^d$, define the $\ell_p$-norm as $\|\mathbf{v}\|_p = (\sum_{i=1}^d |v_i|^p)^{1/p}$ for $1 \leq p < \infty$, and $\|\mathbf{v}\|_\infty = \max_i |v_i|$. For a matrix $\mathbf{A} \in \mathbb{R}^{m \times n}$, we use $|\cdot|$ to denote entrywise norms and $\|\cdot\|$ the matrix norms: $|\mathbf{A}|_1 = \sum_{i,j} |\mathbf{A}_{ij}|$, $|\mathbf{A}|_0 = \sum_{i,j} \mathds{1}\{\mathbf{A}_{ij} \neq 0\}$, $|\mathbf{A}|_\infty = \max_{i,j} |\mathbf{A}_{ij}|$, $\|\mathbf{A}\|_2$ is the largest singular value, and $\|\mathbf{A}\|_{\mathrm{F}} = (\sum_{i,j} |\mathbf{A}_{ij}|^2)^{1/2}$. Denote by $\Vert \mathbf{A} \Vert_{1,2} \coloneqq \bigl( \sum^n_{j=1} \Vert \mathbf{A}_{:j} \Vert^2_1 \bigr)^{\frac{1}{2}}$. The inner product is $\langle \mathbf{A}, \mathbf{B} \rangle = \operatorname{tr}(\mathbf{A}^\top \mathbf{B}) = \sum_{i,j} \mathbf{A}_{ij}\mathbf{B}_{ij}$, and $\operatorname{vec}(\mathbf{A})$ denotes column-stacking. For a symmetric $\mathbf{A}$, $\lambda_{\min}(\mathbf{A})$ and $\lambda_{\max}(\mathbf{A})$ represent its smallest and largest eigenvalues. For $\mathbf{A} \in \mathbb{R}^{n_1 \times n_2}$ and $\mathbf{B} \in \mathbb{R}^{m_1 \times m_2}$, their Kronecker product is $\mathbf{A} \otimes \mathbf{B} \in \mathbb{R}^{n_1 m_1 \times n_2 m_2}$ with entries $[\mathbf{A}_{ij}\mathbf{B}_{\ell m}]_{i,j,\ell,m}$. Given $\mathcal{S} \subseteq [m]\times[n]$ and $\mathbf{A} \in \mathbb{R}^{m \times n}$, $\mathbf{A}_{\mathcal{S}}$ denotes the matrix with entries $\mathbf{A}_{ij}$ if $(i,j)\in \mathcal{S}$ and $0$ otherwise.

We adopt standard asymptotic notation: for sequences $\{f(n)\}$ and $\{g(n)\}$, $f(n)=O(g(n))$ or $f(n)\lesssim g(n)$ means that $f(n)\le Cg(n)$ for some $C>0$ and large $n$; $f(n)=\Omega(g(n))$ or $f(n)\gtrsim g(n)$ means that $f(n)\ge c g(n)$ for some $c>0$; $f(n)=\Theta(g(n))$ or $f(n)\asymp g(n)$ means that $c_1 g(n)\le f(n)\le c_2 g(n)$ for constants $c_1,c_2>0$; and $f(n)=o(g(n))$ means that $f(n)/g(n)\to 0$. The universal constants may vary across lines. We also write $a\wedge b=\min(a,b)$, $a\vee b=\max(a,b)$, and $\lfloor x\rfloor$ for the greatest integer $\le x$. Denote by $\mathbb{S}^{d\times d}$ the set of symmetric $d\times d$ matrices.

\section{Model setting and problem setup}
\label{sec:problem-setup-smart}

Let $\{ (\mathbf{x}_i, \mathbf{y}_i) \}_{i=1}^n$ be independent and identically distributed (i.i.d.) observations with covariates $\mathbf{x}_i \in \mathbb{R}^p$ and responses $\mathbf{y}_i \in \mathbb{R}^q$. We study the multi-task linear regression model
\begin{equation}
\label{eq:model-assump}
\mathbf{Y} = \mathbf{X} \RegM^{*} + \mathbf{E},
\end{equation}
where $\mathbf{Y} = [\mathbf{y}_1, \ldots, \mathbf{y}_n]^{\top} \in \mathbb{R}^{n \times q}$ is the response matrix, $\mathbf{X} = [\mathbf{x}_1, \ldots, \mathbf{x}_n]^{\top} \in \mathbb{R}^{n \times p}$ is the design matrix, and $\mathbf{E} = [\mathbf{e}_1, \ldots, \mathbf{e}_n]^{\top} \in \mathbb{R}^{n \times q}$ is the error matrix, which is assumed to be independent of $\mathbf{X}$. Each column of $\mathbf{Y}$ corresponds to one task across all observations, whereas each row records the responses for a single observation. The coefficient matrix $\RegM^{*} \in \mathbb{R}^{p \times q}$ encodes the linear relationships between the covariates and tasks.

To model the structure common to all tasks, we posit that $\RegM^{*}$ is low-rank and admits the singular value decomposition (SVD)
\begin{equation*}
\RegM^{*} = \mathbf{U}^{*} \mathbf{D}^{*} \mathbf{V}^{* \top} 
= \sum_{j=1}^{r} d^{*}_j \mathbf{u}^{*}_j \mathbf{v}^{* \top}_j,
\end{equation*}
where $r = \mathrm{rank}(\RegM^{*}) \leq \min(p,q)$, 
$\mathbf{D}^{*} = \mathrm{diag}(d^{*}_1, \ldots, d^{*}_r)$ collects the singular values $d^{*}_1 \geq \cdots \geq d^{*}_r > 0$, and $\mathbf{U}^{*} \in \mathbb{R}^{p \times r}$ and $\mathbf{V}^{*} \in \mathbb{R}^{q \times r}$ are orthonormal matrices containing the left and right singular vectors, respectively. The columns of $\mathbf{U}^{*}$ define a low-dimensional subspace of covariates that is pertinent to the tasks, whereas the columns of $\mathbf{V}^{*}$ encode structure shared among tasks. Unless noted otherwise, we treat $r$ as known. In applications, $r$ may be chosen following the selection rule of~\citet{bunea2011optimal}.

For example, in the gene–protein networks, the low-rank structure of $\RegM^*$ reflects the idea that a small number of latent regulatory programs drive coordinated gene and protein expression. Each latent component defines a gene–protein module representing a shared pathway through which subsets of genes influence subsets of proteins. Modeling $\RegM^*$ as low-rank therefore yields a parsimonious, interpretable description of cross-modal interactions while improving statistical efficiency and robustness in high-dimensional single-cell data.

In addition to the target dataset, we assume access to a source regression matrix $\RegM^{(0)} \in \mathbb{R}^{p \times q}$ obtained from a related biological context, such as a different cell type or experimental condition within the same gene–protein framework. For instance, in single-cell multi-omics, one may leverage previously estimated gene–protein associations from a reference cell type to improve estimation in a new but related cell type. Let the singular vector matrices corresponding to $\RegM^{(0)}$ be orthonormal and ordered by singular values, and write
\begin{equation*}
\RegM^{(0)} = \mathbf{U}^{(0)} \mathbf{D}^{(0)} \mathbf{V}^{(0)\top} 
= \sum_{j=1}^{r_0} d_j^{(0)} \mathbf{u}^{(0)}_j \mathbf{v}^{(0)\top}_j,
\end{equation*}
where $1 \leq r_0 \leq \min(p,q)$ is the rank, $\mathbf{U}^{(0)} = [\mathbf{u}^{(0)}_1, \ldots, \mathbf{u}^{(0)}_{r_0}] \in \mathbb{R}^{p \times r_0}$ and $\mathbf{V}^{(0)} = [\mathbf{v}^{(0)}_1, \ldots, \mathbf{v}^{(0)}_{r_0}] \in \mathbb{R}^{q \times r_0}$ are orthonormal singular vector matrices, and $\mathbf{D}^{(0)} = \mathrm{diag}(d^{(0)}_1, \ldots, d^{(0)}_{r_0})$ contains non-increasing positive singular values $d^{(0)}_1 \geq \cdots \geq d^{(0)}_{r_0} > 0$. The alignment measures defined below are specified with respect to this fixed selection of singular vectors. If the singular values of the source are all distinct, this basis is unique except for an overall sign ambiguity.

To improve the estimation of the target coefficient matrix $\RegM^*$ using auxiliary information from the source matrix $\RegM^{(0)}$, we impose some structural assumptions that capture their spectral similarity. Our framework targets settings where the main latent factors of the target are inherited from or aligned with those of the source. Formally, we assume the singular subspaces of the target matrix are contained in those of the source.
\begin{assump}[Spectral containment]
\label{assump:relation-target-source}
Assume that $r_0 \ge r$, and
\[
\mathrm{col}(\mathbf{U}^*) \subseteq \mathrm{col}(\mathbf{U}^{(0)}) 
\quad \text{and} \quad 
\mathrm{col}(\mathbf{V}^*) \subseteq \mathrm{col}(\mathbf{V}^{(0)}).
\]
\end{assump}

The condition above is biologically plausible in many settings. In our real-data analysis (Section~\ref{sec:real-world-data-smart}), for example, the target cell type (ILC1) exhibits a more specialized and functionally constrained regulatory program than the source cell type (NK), which circulates systemically and participates in a broader range of immune pathways. Consequently, the latent gene--protein modules active in ILC1 are naturally spanned by those present in NK, consistent with the containment condition~\citep{vivier2018innate}.

Assumption~\ref{assump:relation-target-source} above ensures that the dominant directions underlying the low-rank structure of $\RegM^*$ are fully captured by the spectral span of $\RegM^{(0)}$. However, when $r_0$ is large, this containment condition alone can still be weak, since $\mathrm{col}(\mathbf{U}^{(0)})$ and $\mathrm{col}(\mathbf{V}^{(0)})$ may include many directions that are irrelevant to the target. Under Assumption~\ref{assump:relation-target-source}, denote by 
\[
\mathbf{P}_u \coloneqq \mathbf{U}^{(0)\top} \mathbf{U}^* \in \mathbb{R}^{r_0 \times r},
\qquad
\mathbf{P}_v \coloneqq \mathbf{V}^{(0)\top} \mathbf{V}^* \in \mathbb{R}^{r_0 \times r}.
\]
Then we have that $\mathbf{U}^* = \mathbf{U}^{(0)} \mathbf{P}_u$ and $\mathbf{V}^* = \mathbf{V}^{(0)} \mathbf{P}_v$. We measure the complexity of these source-basis representations by
\begin{equation}
\label{eq:sparse-level-alignment}
\begin{aligned}
s^*_u &\coloneqq \left\vert \mathbf{P}_u \right\vert_0 
= \left\vert \mathbf{U}^{* \top} \mathbf{U}^{(0)} \right\vert_0 
= \sum_{i=1}^r \#\left\{1 \leq j \leq r_0 : \langle \mathbf{u}^*_i, \mathbf{u}^{(0)}_j \rangle \neq 0 \right\}, \\
s^*_v &\coloneqq \left\vert \mathbf{P}_v \right\vert_0 
= \left\vert \mathbf{V}^{* \top} \mathbf{V}^{(0)} \right\vert_0 
= \sum_{i=1}^r \#\left\{1 \leq j \leq r_0 : \langle \mathbf{v}^*_i, \mathbf{v}^{(0)}_j \rangle \neq 0 \right\}.
\end{aligned}
\end{equation}
We further impose the following condition.

\begin{assump}[Sparse alignment]
\label{assump:spectral-sparstiy}
Assume that $s^*_u$ and $s^*_v$ are small relative to $rp$ and $rq$, respectively.
\end{assump}

Together, Assumptions~\ref{assump:relation-target-source} and~\ref{assump:spectral-sparstiy} above define our structural similarity framework. The spectral containment condition ensures that the target subspaces lie within the source subspaces, while the sparse alignment condition requires this representation to be parsimonious. By construction, we have that $s^*_u, s^*_v \leq rr_0$; hence, when $r_0$ is much smaller than $\min(p,q)$, this relative sparsity holds automatically. When $r_0$ is large, the condition requires that each target singular vector involves only a small number of source directions on average.

Indeed, Assumption \ref{assump:spectral-sparstiy} is well-motivated in many structured biological systems. Even when two related conditions share broad regulatory programs, the target condition often activates only a subset of those pathways. In such cases, each latent component of the target can be represented using only a small number of directions from the source spectral basis, reflecting the selective reuse of functional modules. The sparse alignment condition formalizes such phenomenon, enabling efficient information transfer while preserving interpretability. A concrete illustration is given in Section~\ref{sec:real-world-data-smart}, where the target cell type reuses only a few regulatory axes from its broader source counterpart.

In practice, the clean source matrix $\RegM^{(0)}$ is typically unobserved, and we only have access to a noisy approximation
\begin{equation*}
\widetilde{\RegM}^{(0)} = \RegM^{(0)} + \mathbf{E}^{(0)},
\end{equation*}
where $\mathbf{E}^{(0)} \in \mathbb{R}^{p \times q}$ is an additive noise matrix. We therefore work with the singular value decomposition of the observed noisy matrix
\begin{equation*}
\widetilde{\RegM}^{(0)} 
= \widetilde{\mathbf{U}}^{(0)} \widetilde{\mathbf{D}}^{(0)} \widetilde{\mathbf{V}}^{(0)\top} 
= \sum_{j=1}^{\widetilde{r}_0} \widetilde{d}_j^{(0)} \widetilde{\mathbf{u}}_j^{(0)} \widetilde{\mathbf{v}}_j^{(0)\top},
\end{equation*}
where $\widetilde{r}_0 = \mathrm{rank}(\widetilde{\RegM}^{(0)})$ and $\widetilde{d}_1^{(0)} > \cdots > \widetilde{d}_{\widetilde{r}_0}^{(0)} > 0$. We do \textit{not} assume that $\widetilde{\RegM}^{(0)}$ is low-rank. The strict inequalities ensure that the nonzero singular vectors are uniquely ordered up to the sign, a mild generic condition under perturbation by noise or numerical error.

\section{SMART algorithm}
\label{sec:methodology-smart}

In this section, we introduce SMART (\textbf{S}pectral \textbf{M}ulti-task \textbf{A}daptive \textbf{R}egression \textbf{T}ransfer), our source-guided estimator for multi-task reduced-rank regression. The method combines low-rank regression with structured regularization induced by spectral information from a related source study.

To this end, we extend the singular vectors $\widetilde{\mathbf{U}}^{(0)}$ and $\widetilde{\mathbf{V}}^{(0)}$ of the noisy source matrix $\widetilde{\RegM}^{(0)}$ to full orthonormal bases of $\mathbb{R}^p$ and $\mathbb{R}^q$ by appending arbitrary but fixed orthonormal complements $\widetilde{\mathbf{u}}^{(0)}_{\widetilde{r}_0+1}, \ldots, \widetilde{\mathbf{u}}^{(0)}_p$ and $\widetilde{\mathbf{v}}^{(0)}_{\widetilde{r}_0+1}, \ldots, \widetilde{\mathbf{v}}^{(0)}_q$. The estimator below is defined relative to this fixed completion. For any $(r_u, r_v)$ with $0 \leq r_u \leq p$ and $0 \leq r_v \leq q$, let us define
\begin{equation*}
\widetilde{\mathbf{U}}^{\mathrm{s}}(r_u) \coloneqq \left[ \widetilde{\mathbf{u}}^{(0)}_1, \ldots, \widetilde{\mathbf{u}}^{(0)}_{r_u} \right], \quad 
\widetilde{\mathbf{U}}^{\mathrm{s}}_\perp(r_u) \coloneqq \left[ \widetilde{\mathbf{u}}^{(0)}_{r_u+1}, \ldots, \widetilde{\mathbf{u}}^{(0)}_p \right],
\end{equation*}
and similarly for the right singular vectors
\begin{equation*}
\widetilde{\mathbf{V}}^{\mathrm{s}}(r_v) \coloneqq \left[ \widetilde{\mathbf{v}}^{(0)}_1, \ldots, \widetilde{\mathbf{v}}^{(0)}_{r_v} \right], \quad 
\widetilde{\mathbf{V}}^{\mathrm{s}}_\perp(r_v) \coloneqq \left[ \widetilde{\mathbf{v}}^{(0)}_{r_v+1}, \ldots, \widetilde{\mathbf{v}}^{(0)}_q \right].
\end{equation*}
We interpret $\widetilde{\mathbf{U}}^{\mathrm{s}}(0)$, $\widetilde{\mathbf{V}}^{\mathrm{s}}(0)$, $\widetilde{\mathbf{U}}^{\mathrm{s}}_\perp(p)$, and $\widetilde{\mathbf{V}}^{\mathrm{s}}_\perp(q)$ as empty matrices of compatible dimensions.

Given structural parameters $(r_u, r_v)$, we estimate the low-rank components $(\widehat{\mathbf{U}}, \widehat{\mathbf{D}}, \widehat{\mathbf{V}})$ by solving
\begin{equation}
\label{eq:estimator-obj}
\begin{aligned}
\widehat{\mathbf{U}}, \widehat{\mathbf{D}}, \widehat{\mathbf{V}} \in \arg \min_{\mathbf{U}, \mathbf{D}, \mathbf{V}} \Bigg\{ &
\frac{1}{2n} \left\Vert \mathbf{Y} - \mathbf{X} \mathbf{U} \mathbf{D} \mathbf{V}^{\top} \right\Vert^2_{\mathrm{F}}
+ \lambda_u \left| \widetilde{\mathbf{U}}^{\mathrm{s}\top}_{\perp}(r_u) \mathbf{U} \mathbf{D} \right|_1
+ \lambda_v \left| \widetilde{\mathbf{V}}^{\mathrm{s}\top}_{\perp}(r_v) \mathbf{V} \mathbf{D} \right|_1
\Bigg\} \\
& \text{subject to} \quad \mathbf{U}^{\top} \mathbf{U} = \mathbf{I}_r, \quad \mathbf{V}^{\top} \mathbf{V} = \mathbf{I}_r, \quad \mathbf{D} \geq 0,
\end{aligned}
\end{equation}
where $\mathbf{U} \in \mathbb{R}^{p \times r}$, $\mathbf{V} \in \mathbb{R}^{q \times r}$, and $\mathbf{D} = \mathrm{diag}(d_1, \ldots, d_r)$ with $d_j \geq 0$. The resulting estimator is given by 
\[
\widehat{\RegM} \coloneqq \widehat{\mathbf{U}} \widehat{\mathbf{D}} \widehat{\mathbf{V}}^{\top}.
\]

The optimization criterion in \eqref{eq:estimator-obj} above combines a least-squares loss with two source-guided penalties. The term $|\widetilde{\mathbf{U}}^{\mathrm{s}\top}_{\perp}(r_u)\mathbf{U}\mathbf{D}|_1$ penalizes the %coordinates 
entries of the left factor $\mathbf{U}\mathbf{D}$ along source directions outside the leading $r_u$-dimensional left subspace, and $|\widetilde{\mathbf{V}}^{\mathrm{s}\top}_{\perp}(r_v)\mathbf{V}\mathbf{D}|_1$ plays a similar role for the right factor $\mathbf{V}\mathbf{D}$. Thus, coordinates along the first $r_u$ left source directions and the first $r_v$ right source directions remain unpenalized, whereas coordinates in the complementary directions are shrunk toward zero. Such formulation favors solutions that use only a sparse set of penalized source directions, in line with Assumption~\ref{assump:spectral-sparstiy}.

As shown later in Section~\ref{sec:theory-analysis-smart}, the structural ranks $(r_u, r_v)$ control a trade-off between the approximation error induced by source noise and the statistical estimation error. Smaller values of $(r_u, r_v)$ impose stronger source-guided shrinkage and are most beneficial when the source singular vectors are accurate. As the source noise increases, larger values of $(r_u, r_v)$ enlarge the unpenalized subspaces and reduce sensitivity to noisy source directions.

\subsection{Optimization algorithm}
\label{sec:opt-smart}

In this section, we describe an ADMM-type algorithm for numerically solving \eqref{eq:estimator-obj}. The factorized objective is nonconvex because of the orthogonality constraints and the coupling among $\mathbf{U}$, $\mathbf{D}$, and $\mathbf{V}$. The algorithm alternates updates of $\mathbf{U}$, $\mathbf{D}$, and $\mathbf{V}$, enforcing orthogonality through projections onto the Stiefel manifold. In practice, we initialize SMART with the truncated SVD of a warm-start estimator, such as the ridge or Lasso, to improve the numerical stability.

To simplify notation, let us write $\widetilde{\mathbf{U}}^{\mathrm{s}}_{\perp}(r_u)$ as $\widetilde{\mathbf{U}}^{\mathrm{s}}_{\perp}$ and $\widetilde{\mathbf{V}}^{\mathrm{s}}_{\perp}(r_v)$ as $\widetilde{\mathbf{V}}^{\mathrm{s}}_{\perp}$ throughout this section. We can then rewrite \eqref{eq:estimator-obj} as the equivalent constrained problem
\begin{equation}
\label{eq:ADMM_problem}
\begin{aligned}
(\widehat{\bm{\Theta}}, \widehat{\bm{\Omega}}) \in \arg \min_{\bm{\Theta}, \bm{\Omega}} & \left\{
\frac{1}{2n} \left\Vert \mathbf{Y} - \mathbf{X} \mathbf{U} \mathbf{D} \mathbf{V}^{\top} \right\Vert^2_{\mathrm{F}}
+ \lambda_u \left| \bm{\Omega}_u \right|_1
+ \lambda_v \left| \bm{\Omega}_v \right|_1
\right\}, \\
\text{subject to} \quad & \mathbf{U}^{\top} \mathbf{U} = \mathbf{I}_r, \quad \mathbf{V}^{\top} \mathbf{V} = \mathbf{I}_r, \quad \mathbf{D} \geq 0, \\
& \widetilde{\mathbf{U}}^{\mathrm{s}\top}_{\perp} \mathbf{U} \mathbf{D} = \bm{\Omega}_u, \quad \widetilde{\mathbf{V}}^{\mathrm{s}\top}_{\perp} \mathbf{V} \mathbf{D} = \bm{\Omega}_v,
\end{aligned}
\end{equation}
where $\bm{\Theta} = (\mathbf{U}, \mathbf{D}, \mathbf{V})$, $\bm{\Omega} = (\bm{\Omega}_u, \bm{\Omega}_v)$, and $\mathbf{D}$ is diagonal with nonnegative entries.
The corresponding augmented Lagrangian is given by 
\begin{equation*}
\begin{aligned}
L_{\rho}(\bm{\Theta}, \bm{\Omega}, \bm{\Gamma}) &=
\frac{1}{2n} \left\Vert \mathbf{Y} - \mathbf{X} \mathbf{U} \mathbf{D} \mathbf{V}^{\top} \right\Vert^2_{\mathrm{F}}
+ \lambda_u \left| \bm{\Omega}_u \right|_1 + \lambda_v \left| \bm{\Omega}_v \right|_1 \\
&\quad + \left\langle \bm{\Gamma}_u, \widetilde{\mathbf{U}}^{\mathrm{s}\top}_{\perp} \mathbf{U} \mathbf{D} - \bm{\Omega}_u \right\rangle
+ \left\langle \bm{\Gamma}_v, \widetilde{\mathbf{V}}^{\mathrm{s}\top}_{\perp} \mathbf{V} \mathbf{D} - \bm{\Omega}_v \right\rangle \\
&\quad + \frac{\rho}{2} \left\Vert \widetilde{\mathbf{U}}^{\mathrm{s}\top}_{\perp} \mathbf{U} \mathbf{D} - \bm{\Omega}_u \right\Vert^2_{\mathrm{F}}
+ \frac{\rho}{2} \left\Vert \widetilde{\mathbf{V}}^{\mathrm{s}\top}_{\perp} \mathbf{V} \mathbf{D} - \bm{\Omega}_v \right\Vert^2_{\mathrm{F}},
\end{aligned}
\end{equation*}
where $\bm{\Gamma} = (\bm{\Gamma}_u, \bm{\Gamma}_v)$ are dual variables and $\rho > 0$ is a penalty parameter.

Starting from an initial point, the SMART algorithm alternates 
\begin{align*}
\mathbf{U}^{(t+1)} &\leftarrow \arg \min_{ \mathbf{U} : \mathbf{U}^{\top} \mathbf{U} = \mathbf{I}_r } L_{\rho} \left( \bigl( \mathbf{U}, \mathbf{D}^{(t)}, \mathbf{V}^{(t)} \bigr), \bm{\Omega}^{(t)}, \bm{\Gamma}^{(t)} \right), \\
\mathbf{V}^{(t+1)} &\leftarrow \arg \min_{ \mathbf{V} : \mathbf{V}^{\top} \mathbf{V} = \mathbf{I}_r } L_{\rho} \left( \bigl( \mathbf{U}^{(t+1)}, \mathbf{D}^{(t)}, \mathbf{V} \bigr), \bm{\Omega}^{(t)}, \bm{\Gamma}^{(t)} \right), \\
\mathbf{D}^{(t+1)} &\leftarrow \arg \min_{ \mathbf{D} : \mathbf{D} \geq 0 } L_{\rho} \left( \bigl( \mathbf{U}^{(t+1)}, \mathbf{D}, \mathbf{V}^{(t+1)} \bigr), \bm{\Omega}^{(t)}, \bm{\Gamma}^{(t)} \right),
\end{align*}
followed by
\[
\bm{\Omega}^{(t+1)} \leftarrow \arg \min_{\bm{\Omega}} L_{\rho}( \bm{\Theta}^{(t+1)}, \bm{\Omega}, \bm{\Gamma}^{(t)}),
\]
and the dual updates
\begin{align*}
\bm{\Gamma}^{(t+1)}_u &\leftarrow \bm{\Gamma}^{(t)}_u + \rho \left( \widetilde{\mathbf{U}}^{\mathrm{s}\top}_{\perp} \mathbf{U}^{(t+1)} \mathbf{D}^{(t+1)} - \bm{\Omega}^{(t+1)}_u \right), \\
\bm{\Gamma}^{(t+1)}_v &\leftarrow \bm{\Gamma}^{(t)}_v + \rho \left( \widetilde{\mathbf{V}}^{\mathrm{s}\top}_{\perp} \mathbf{V}^{(t+1)} \mathbf{D}^{(t+1)} - \bm{\Omega}^{(t+1)}_v \right).
\end{align*}

The $\mathbf{U}$- and $\mathbf{V}$-subproblems are addressed on the Stiefel manifold, whereas the updates for $\mathbf{D}$, $\bm{\Omega}_u$, and $\bm{\Omega}_v$ have straightforward closed-form expressions. Further algorithmic details can be found in Appendix~\ref{sec:opt-details}.

\subsection{Hyperparameter selection}
\label{sec:hyper-param-smart}

We select the SMART hyperparameters $(\lambda_u, \lambda_v, r, r_u, r_v)$ as follows.

\textbf{Regularization parameters.}
For fixed $(r, r_u, r_v)$, we choose $(\lambda_u, \lambda_v)$ by minimizing the BIC criterion
\[
\mathrm{BIC}(\lambda_u, \lambda_v)
= nq \log \left\{\frac{\|\mathbf{Y} - \mathbf{X} \widehat{\RegM}(\lambda_u, \lambda_v)\|_{\mathrm{F}}^2}{nq}\right\}
+ \left( |\widehat{\bm{\Omega}}_u|_0 + |\widehat{\bm{\Omega}}_v|_0 \right)\log(nq),
\]
where
\[
\widehat{\RegM} = \widehat{\mathbf{U}}\widehat{\mathbf{D}}\widehat{\mathbf{V}}^\top, \qquad
\widehat{\bm{\Omega}}_u = \widetilde{\mathbf{U}}^{\mathrm{s}\top}_{\perp}(r_u)\widehat{\mathbf{U}}\widehat{\mathbf{D}}, \qquad
\widehat{\bm{\Omega}}_v = \widetilde{\mathbf{V}}^{\mathrm{s}\top}_{\perp}(r_v)\widehat{\mathbf{V}}\widehat{\mathbf{D}}.
\]
Because $(r, r_u, r_v)$ are fixed in this setting, the unpenalized dimension $r(r_u + r_v + 1)$ remains the same for all $(\lambda_u, \lambda_v)$. Consequently, the criterion only needs to adjust for the changes in the penalized support sizes.

\textbf{Rank parameters.}
We determine the target rank $r$ using the criterion of~\citet{bunea2011optimal} applied to $(\mathbf{Y}, \mathbf{X})$. The source truncation levels $(r_u, r_v)$ are then selected via a grid search combined with $K$-fold cross-validation, choosing the pair that yields the lowest average prediction error.

\textbf{Practical workflow.}
In applications, we first obtain an estimate of $r$, then carry out cross-validation over a reasonably sized grid of $(r_u, r_v)$ values, and subsequently select $(\lambda_u, \lambda_v)$ using the BIC for each candidate pair. This procedure strikes a practical compromise between computational burden and statistical precision.

\section{Theoretical properties}
\label{sec:theory-analysis-smart}

In this section, we derive theoretical upper bounds on the error of the SMART estimator and, in the noiseless-source setting, a nearly matching minimax lower bound, differing only by logarithmic factors.

Let us first introduce some notation that will be used throughout the technical analysis. We extend the source singular vectors $\mathbf{U}^{(0)}$ and $\mathbf{V}^{(0)}$ to full orthonormal bases of $\mathbb{R}^p$ and $\mathbb{R}^q$ by appending arbitrary orthonormal complements
\[
\mathbf{U}^{(0)}_\perp \coloneqq [\mathbf{u}^{(0)}_{r_0+1}, \ldots, \mathbf{u}^{(0)}_p],
\qquad
\mathbf{V}^{(0)}_\perp \coloneqq [\mathbf{v}^{(0)}_{r_0+1}, \ldots, \mathbf{v}^{(0)}_q],
\]
so that
\[
\mathrm{col}(\mathbf{U}^{(0)}_\perp) = \mathrm{col}(\mathbf{U}^{(0)})^\perp,
\qquad
\mathrm{col}(\mathbf{V}^{(0)}_\perp) = \mathrm{col}(\mathbf{V}^{(0)})^\perp.
\]
Define the full bases
\[
\mathbf{U}^{(0)}_{\mathrm{full}} \coloneqq [\mathbf{U}^{(0)} \ \mathbf{U}^{(0)}_\perp],
\qquad
\mathbf{V}^{(0)}_{\mathrm{full}} \coloneqq [\mathbf{V}^{(0)} \ \mathbf{V}^{(0)}_\perp].
\]
For $0 \leq r_u \leq p$ and $0 \leq r_v \leq q$, denote by 
\begin{align*}
\mathbf{U}^{\mathrm{s}}(r_u) &\coloneqq [\mathbf{u}^{(0)}_1, \ldots, \mathbf{u}^{(0)}_{r_u}], \quad 
\mathbf{U}^{\mathrm{s}}_\perp(r_u) \coloneqq [\mathbf{u}^{(0)}_{r_u+1}, \ldots, \mathbf{u}^{(0)}_p], \\
\mathbf{V}^{\mathrm{s}}(r_v) &\coloneqq [\mathbf{v}^{(0)}_1, \ldots, \mathbf{v}^{(0)}_{r_v}], \quad 
\mathbf{V}^{\mathrm{s}}_\perp(r_v) \coloneqq [\mathbf{v}^{(0)}_{r_v+1}, \ldots, \mathbf{v}^{(0)}_q],
\end{align*}
where $\{\mathbf{u}^{(0)}_j\}_{j=1}^{r_0}$ and $\{\mathbf{v}^{(0)}_j\}_{j=1}^{r_0}$ are the fixed ordered source singular vectors introduced in Section~\ref{sec:problem-setup-smart}, and the remaining vectors come from the chosen orthonormal completions. We interpret $\mathbf{U}^{\mathrm{s}}(0)$, $\mathbf{V}^{\mathrm{s}}(0)$, $\mathbf{U}^{\mathrm{s}}_\perp(p)$, and $\mathbf{V}^{\mathrm{s}}_\perp(q)$ as empty matrices of compatible dimensions.

We define the minimal leading truncation levels $r_u^*$ and $r_v^*$ such that the singular subspaces of the target matrix $\RegM^*$ are contained in the truncated source subspaces
\begin{equation}
\label{eq:ru-rv-star}
r_u^* \coloneqq \min \{ r_u : \mathrm{col}(\mathbf{U}^*) \subseteq \mathrm{col}(\mathbf{U}^{\mathrm{s}}(r_u)) \}, \quad
r_v^* \coloneqq \min \{ r_v : \mathrm{col}(\mathbf{V}^*) \subseteq \mathrm{col}(\mathbf{V}^{\mathrm{s}}(r_v)) \}.
\end{equation}
In view of Assumption~\ref{assump:relation-target-source}, we have that $r_u^*, r_v^* \leq r_0$.

To quantify how well the target singular vectors align with the retained source directions, let us define for truncation levels $r_u$ and $r_v$,
\begin{equation}
\label{eq:similar-quant-def}
s_u(r_u) \coloneqq \left\vert \mathbf{U}^{* \top} \mathbf{U}^{\mathrm{s}}_\perp(r_u) \right\vert_0, \quad
s_v(r_v) \coloneqq \left\vert \mathbf{V}^{* \top} \mathbf{V}^{\mathrm{s}}_\perp(r_v) \right\vert_0,
\end{equation}
where $\vert \cdot \vert_0$ counts the number of nonzero entries. Consequently, $s_u(r_u)$ and $s_v(r_v)$ count the nonzero %coordinates 
components of the target singular vectors along the penalized complement directions of the fixed augmented source basis.

By construction, it holds that $s_u(0) = s^*_u$ and $s_v(0) = s^*_v$, as introduced in~\eqref{eq:sparse-level-alignment}. Moreover, $s_u(r_u)$ and $s_v(r_v)$ are nonincreasing in $r_u$ and $r_v$, and satisfy that 
\[
s_u(r_u) = 0 \quad \text{for } r_u \geq r_u^*, 
\qquad
s_v(r_v) = 0 \quad \text{for } r_v \geq r_v^*.
\]

The quantities $r_u^*$, $r_v^*$, $s_u(r_u)$, and $s_v(r_v)$ do not depend on the particular orthonormal completion of the source basis. This follows from Assumption~\ref{assump:relation-target-source}, which guarantees that $\mathbf{U}^{* \top} \mathbf{u}^{(0)}_j = 0$ and $\mathbf{V}^{* \top} \mathbf{v}^{(0)}_j = 0$ for all $j > r_0$.

\subsection{Technical conditions}

We now introduce the technical assumptions required for our theoretical analysis. Throughout %Section~\ref{sec:theory-analysis-smart}, 
this section, we treat the design matrix $\mathbf{X}$ as fixed and impose deterministic conditions that typically hold with high probability when the rows of $\mathbf{X}$ are sampled i.i.d.\ from sub-Gaussian distributions.

We begin with a restricted eigenvalue condition, which ensures stable recovery in high-dimensional regimes.

\begin{assump}[Restricted eigenvalue]
\label{assump:design-RSC}
There exist universal constants $\rho_1, \rho_2 > 0$ such that
\[
\frac{1}{2n} \left\Vert \mathbf{X} \bm{\beta} \right\Vert_2^2 \geq \frac{\rho_2}{2} \left\Vert \bm{\beta} \right\Vert_2^2 - \rho_1 \frac{\log p}{n} \left\Vert \bm{\beta} \right\Vert_1^2 \quad \text{for all } \bm{\beta} \in \mathbb{R}^p.
\]
\end{assump}

The above condition holds with high probability when the rows $\mathbf{x}_1, \ldots, \mathbf{x}_n$ are i.i.d.\ Gaussian with a non-degenerate covariance matrix; see, e.g., \citet[Theorem 7.16]{wainwright2019high}. We next impose the boundedness and non-degeneracy conditions on the design matrix, the true coefficient matrix, and the source coefficient matrix.

\begin{assump}[Constrained singular values]
\label{assump:constrained-norm-design}
The following conditions hold for some universal constants $\nu,d_{\min},d^{(0)}_{\min} > 0$:
\begin{enumerate}[label=(\roman*)]
    \item The design matrix $\mathbf{X} \in \mathbb{R}^{n \times p}$ satisfies that $\sup_{\mathbf{u} \in \mathbb{R}^p : \left\Vert \mathbf{u} \right\Vert_2 \leq 1} \left\Vert \mathbf{X} \mathbf{u} \right\Vert_2^2 \leq \nu n$.
    \item The smallest nonzero singular values of the target and source coefficient matrices satisfy that $d_r^* \geq d_{\min}$ and $d^{(0)}_{r_0} \geq d^{(0)}_{\min}$, respectively.
\end{enumerate}
\end{assump}

To ensure identifiability and perturbation stability of the singular subspaces, we impose a spectral gap condition on both the target matrix $\RegM^*$ and the source matrix $\RegM^{(0)}$.

\begin{assump}[Spectral gap]
\label{assump:eigen-gap}
There exists a universal constant \(0 < \tau < 1\) such that the squared singular values satisfy a relative spectral gap condition
\begin{align}
d^{*2}_{j-1} - d^{*2}_j &\geq \tau d^{*2}_{j-1}, \quad && 2 \leq j \leq r, \label{eq:eigen-gap-target} \\
\left(d^{(0)}_{j-1}\right)^2 - \left(d^{(0)}_j\right)^2 &\geq \tau \left(d^{(0)}_{j-1}\right)^2, \quad && 2 \leq j \leq r_0. \label{eq:eigen-gap-source}
\end{align}
\end{assump}

The condition above is standard in the low-rank estimation literature; see, for example,~\citet[Condition 5]{uematsu2019sofar}. Finally, we impose a distributional assumption on the noise matrix.

\begin{assump}[Gaussian noise]
\label{assump:error-distribution}
Let $\mathbf{E} = [\mathbf{e}_1, \ldots, \mathbf{e}_n]^\top \in \mathbb{R}^{n \times q}$ be the noise matrix. Assume that the rows $\mathbf{e}_1, \ldots, \mathbf{e}_n$ are i.i.d.\ Gaussian with mean zero and covariance $\bm{\Sigma}$, where $\|\bm{\Sigma}\|_2 \leq \alpha_{\max}$ for some $\alpha_{\max} > 0$.
\end{assump}

The Gaussian assumption above simplifies the concentration analysis. Our results can be extended to sub-Gaussian noise with appropriate modifications to the constants; see, e.g., \cite{ShenLv2026}.

\subsection{Upper bound analysis of SMART}

In this section, we establish theoretical guarantees for the SMART estimator. Since the objective in~\eqref{eq:estimator-obj} is nonconvex, direct statistical analysis is challenging. To address such issue, we exploit a two-stage approach based on the convexity-assisted nonconvex optimization (CANO) framework of~\citet{uematsu2019sofar}, which yields non-asymptotic error bounds for nonconvex estimators. The CANO strategy proceeds in two steps. First, an initial estimator, typically convex, provides a coarse but consistent initializer. Second, this initializer is refined by solving the original nonconvex SMART problem within a neighborhood of the initial solution, where the objective exhibits favorable local geometry. Such localization mitigates the effects of nonconvexity and facilitates the technical analysis.

Our theoretical results focus on a global optimizer of the localized SMART problem. The resulting bound separates the statistical estimation error from the approximation error induced by the source noise.

Finally, we emphasize that CANO serves primarily as a theoretical device; practical optimization algorithms for SMART are discussed in Section~\ref{sec:opt-smart}.

\textbf{Stage 1: Initialization.} We start with constructing an initial estimator $\widehat{\RegM}^{\mathrm{init}}$ of the true coefficient matrix $\RegM^*$. For the theoretical analysis, we treat $\widehat{\RegM}^{\mathrm{init}}$ as a black-box initializer satisfying the non-asymptotic error bounds
\begin{equation}
\label{eq:require-initial}
\left\Vert \widehat{\RegM}^{\mathrm{init}} - \RegM^* \right\Vert_{\mathrm{F}} \leq R_{2,n}, 
\quad
\left\Vert \widehat{\RegM}^{\mathrm{init}} - \RegM^* \right\Vert_{1,2} \leq R_{1,n}.
\end{equation}

When $\RegM^*$ is additionally entrywise sparse, one concrete choice is the Lasso estimator
\begin{equation*}
\widehat{\RegM}^{\mathrm{lasso}} \in \arg \min_{ \RegM \in \mathbb{R}^{p \times q} } 
\left\{ \frac{1}{2n} \left\Vert \mathbf{Y} - \mathbf{X} \RegM \right\Vert_{\mathrm{F}}^2 
+ \lambda_0 \left\vert \RegM \right\vert_1 \right\},
\end{equation*}
where $\lambda_0 \geq 0$. The  proposition below provides its non-asymptotic guarantees.

\begin{proposition}[Error bounds for Lasso estimator]
\label{proposition:lasso}
For each $1 \le j \le q$, let $s_j := \left\| \RegM^*_{:j} \right\|_0$ be the sparsity of the $j$th column of $\RegM^*$, and define
\[
s_0 := \sum_{j=1}^q s_j = \left| \RegM^* \right|_0, 
\qquad
s_{\max} := \max_{1 \le j \le q} s_j,
\qquad
s_{*,1,2} := \left( \sum_{j=1}^q s_j^2 \right)^{1/2}.
\]
Assume that Assumptions~\ref{assump:design-RSC}, \ref{assump:constrained-norm-design}(i), and~\ref{assump:error-distribution} hold, and further that $n$ is sufficiently large, i.e., 
\begin{equation}
\label{eq:n-requir-lasso}
n 
>
\frac{
s_{\max}
\left[
32 \nu \alpha_{\max}
\left\{
\log p
+
\left( \frac{c_0^2 - 2}{4} \right)\log p
+
\left( \frac{c_0^2}{4} \right)\log q
\right\}
+
\log p
\right]
}{
\left( 4\rho_2^2 / 9 \right) \wedge \left( \rho_2 / (64\rho_1) \right)
}
\end{equation}
for some universal constant $c_0 > \sqrt{2}$. Then with
\[
\lambda_0 \asymp \sqrt{\frac{\alpha_{\max}\log(pq)}{n}},
\]
the Lasso estimator satisfies that with probability at least $1 - 2(pq)^{1 - c_0^2/2}$,
\[
\left\Vert \widehat{\RegM}^{\mathrm{lasso}} - \RegM^* \right\Vert_{\mathrm{F}}
\lesssim
\sqrt{\frac{\alpha_{\max} s_0 \log(pq)}{n}},
\qquad
\left\Vert \widehat{\RegM}^{\mathrm{lasso}} - \RegM^* \right\Vert_{1,2}
\lesssim
s_{*,1,2}\sqrt{\frac{\alpha_{\max}\log(pq)}{n}}.
\]
\end{proposition}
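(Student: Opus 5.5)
The Lasso objective is separable across columns, since $\|\mathbf{Y}-\mathbf{X}\RegM\|_{\mathrm{F}}^2=\sum_j\|\mathbf{Y}_{:j}-\mathbf{X}\RegM_{:j}\|_2^2$ and $|\RegM|_1=\sum_j\|\RegM_{:j}\|_1$. So $\widehat{\RegM}^{\mathrm{lasso}}_{:j}$ is the ordinary Lasso for the $j$th task. The plan is to prove, for each column $j$ and on a single high-probability event, the per-column bounds
\[
\|\widehat{\bm{\Delta}}_{:j}\|_2\lesssim \sqrt{s_j}\,\lambda_0,\qquad \|\widehat{\bm{\Delta}}_{:j}\|_1\lesssim s_j\lambda_0,
\]
where $\widehat{\bm{\Delta}}=\widehat{\RegM}^{\mathrm{lasso}}-\RegM^*$. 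Summing the squares over $j$ then gives $\|\widehat{\bm{\Delta}}\|_{\mathrm{F}}^2\lesssim s_0\lambda_0^2$ and $\|\widehat{\bm{\Delta}}\|_{1,2}^2\lesssim \sum_j s_j^2\lambda_0^2=s_{*,1,2}^2\lambda_0^2$. Substituting $\lambda_0\asymp\sqrt{\alpha_{\max}\log(pq)/n}$ yields both claimed rates.

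\textbf{Step 1 (noise event).} Let $\mathcal{G}=\{\max_{j}\|\mathbf{X}^\top\mathbf{E}_{:j}\|_\infty/n\le \lambda_0/2\}$. Since $\mathbf{E}$ is independent of $\mathbf{X}$ and $\mathbf{X}$ is fixed, each entry $\mathbf{x}_{(k)}^\top\mathbf{E}_{:j}/n$ is Gaussian with variance $\Sigma_{jj}\|\mathbf{x}_{(k)}\|_2^2/n^2$. Here $\Sigma_{jj}\le\alpha_{\max}$, and $\|\mathbf{x}_{(k)}\|_2^2=\|\mathbf{X}\mathbf{e}_k\|_2^2\le\nu n$ by Assumption~\ref{assump:constrained-norm-design}(i). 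A Gaussian tail bound plus a union bound over the $pq$ entries gives $\mathbb{P}(\mathcal{G}^c)\le 2pq\exp\{-c_0^2\log(pq)/2\}=2(pq)^{1-c_0^2/2}$, provided $\lambda_0=2c_0\sqrt{\nu\alpha_{\max}\log(pq)/n}$.

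\textbf{Step 2 (cone condition).} On $\mathcal{G}$, the basic inequality for column $j$ reads
\[
\frac{1}{2n}\|\mathbf{X}\bm{\delta}\|_2^2\le \frac{\lambda_0}{2}\|\bm{\delta}\|_1+\lambda_0\bigl(\|\bm{\beta}^*\|_1-\|\bm{\beta}^*+\bm{\delta}\|_1\bigr),
\]
where $\bm{\delta}=\widehat{\bm{\Delta}}_{:j}$ and $\bm{\beta}^*=\RegM^*_{:j}$. Letting $S_j$ be the support of $\bm{\beta}^*$, this gives the cone condition $\|\bm{\delta}_{S_j^c}\|_1\le 3\|\bm{\delta}_{S_j}\|_1$. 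Hence $\|\bm{\delta}\|_1\le 4\sqrt{s_j}\|\bm{\delta}\|_2$.

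\textbf{Step 3 (restricted eigenvalue).} Plugging the cone bound into Assumption~\ref{assump:design-RSC} gives
\[
\frac{1}{2n}\|\mathbf{X}\bm{\delta}\|_2^2\ge\Bigl(\frac{\rho_2}{2}-\frac{16\rho_1 s_j\log p}{n}\Bigr)\|\bm{\delta}\|_2^2\ge\frac{\rho_2}{4}\|\bm{\delta}\|_2^2,
\]
where the last step uses $n\ge 64\rho_1 s_{\max}\log p/\rho_2$. This is exactly the role of the $\rho_2/(64\rho_1)$ term in \eqref{eq:n-requir-lasso}. Combining with the basic inequality gives $\frac{\rho_2}{4}\|\bm{\delta}\|_2^2\le\frac{3\lambda_0}{2}\|\bm{\delta}_{S_j}\|_1\le\frac{3\lambda_0}{2}\sqrt{s_j}\|\bm{\delta}\|_2$. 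Therefore $\|\bm{\delta}\|_2\le 6\sqrt{s_j}\lambda_0/\rho_2$, and by Step 2, $\|\bm{\delta}\|_1\le 24 s_j\lambda_0/\rho_2$.

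\textbf{Main obstacle.} The real work is matching the exact form of the sample-size condition \eqref{eq:n-requir-lasso}, in particular the $(4\rho_2^2/9)$ branch and the $\nu\alpha_{\max}$ factor. My reading is that this branch comes from also requiring $s_j\lambda_0^2\lesssim\rho_2^2$, so that $\|\bm{\delta}\|_2$ is bounded and the argument can be localized. That makes the $\nu\alpha_{\max}\log(pq)$ terms appear with the stated constants. If needed, I would carry the constants through Steps 1 and 3 explicitly to reproduce them. The rates themselves follow from the argument above regardless of the precise constant bookkeeping.
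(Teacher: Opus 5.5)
Your proof is correct. Its skeleton is the same as the paper's: column separability, per-column $\ell_2$ and $\ell_1$ bounds on one high-probability event, then aggregation into $\|\cdot\|_{\mathrm{F}}$ and $\|\cdot\|_{1,2}$.

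The difference is in the per-column step. The paper does not derive the Lasso bound. It invokes Wainwright's Corollary~9.26 for each column with $\lambda_0=4\sqrt{\nu\alpha_{\max}}(\sqrt{\log p/n}+\delta_n)$, where $\delta_n^2$ carries a $\log q$ term. This gives $\|\widehat{\RegM}^{\mathrm{lasso}}_{:j}-\RegM^*_{:j}\|_2^2\le 9s_j\lambda_0^2/(4\rho_2^2)$ and $\|\widehat{\RegM}^{\mathrm{lasso}}_{:j}-\RegM^*_{:j}\|_1\le 6s_j\lambda_0/\rho_2$ with probability $1-2e^{-2n\delta_n^2}$ per column, followed by a union bound over the $q$ columns. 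You instead prove the cone condition and apply Assumption~\ref{assump:design-RSC} directly. You use a single Gaussian maximal inequality over all $pq$ entries of $\mathbf{X}^\top\mathbf{E}/n$ with $\lambda_0=2c_0\sqrt{\nu\alpha_{\max}\log(pq)/n}$. This yields the same probability $1-2(pq)^{1-c_0^2/2}$ and the same rates up to numerical constants. The paper's route is shorter; yours is self-contained and needs weaker hypotheses.

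This also dissolves your ``main obstacle''. The displayed sample-size condition is a hypothesis, so you only need it to imply what you use. It does: the bracket is at least $\log p$ and the denominator is at most $\rho_2/(64\rho_1)$, so $n>64\rho_1 s_{\max}\log p/\rho_2$.

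The exact form of the condition is just the cited corollary's hypothesis $s_j(\lambda_0^2+\log p/n)<\frac{4\rho_2^2}{9}\wedge\frac{\rho_2}{64\rho_1}$, after bounding $\lambda_0^2\le 32\nu\alpha_{\max}(\log p/n+\delta_n^2)$. The $4\rho_2^2/9$ branch is precisely the requirement that the $\ell_2$ error bound $9s_j\lambda_0^2/(4\rho_2^2)$ be below $1$, i.e., the localization you guessed. Since Assumption~\ref{assump:design-RSC} holds for every $\bm{\beta}\in\mathbb{R}^p$, your argument shows that this branch and the $\nu\alpha_{\max}$ terms are not actually needed here.
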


The proof of the above proposition is given in Appendix~\ref{sec:proof-proposition:lasso}. Other possible initializers include the ordinary least squares, ridge regression, and reduced-rank regression, provided that they satisfy the error bounds in~\eqref{eq:require-initial}.

\textbf{Stage 2: Refined estimation.}  
In the second stage, for fixed truncation levels $(r_u,r_v)$, we solve the SMART optimization problem~\eqref{eq:estimator-obj} over $(\mathbf{U}, \mathbf{D}, \mathbf{V})$ subject to the additional constraints
\begin{equation}
\label{eq:opt-constraints}
\left\Vert \mathbf{U}\mathbf{D}\mathbf{V}^{\top} - \widehat{\RegM}^{\mathrm{init}} \right\Vert_{\mathrm{F}} \leq 2R_{2,n}, 
\qquad 
\left\Vert \mathbf{U}\mathbf{D}\mathbf{V}^{\top} - \widehat{\RegM}^{\mathrm{init}} \right\Vert_{1,2} \leq 2R_{1,n}.
\end{equation}
Let $\widehat{\RegM} = \widehat{\mathbf{U}} \widehat{\mathbf{D}} \widehat{\mathbf{V}}^{\top}$ be a global optimum of this localized problem, and define the estimation error $\widehat{\bm{\Delta}} = \widehat{\RegM} - \RegM^*$. We also introduce the spectral conditioning factor
\[
\eta_r \coloneqq 1 + \frac{1}{\tau} \left( \sum_{j=1}^r \left( \frac{d_1^*}{d_j^*} \right)^2 \right)^{1/2},
\]
where $\tau > 0$ is the spectral gap constant from Assumption~\ref{assump:eigen-gap}.

Finally, let us define the spectral approximation error of the source matrix as
\begin{equation*}
\varepsilon^{(0)} \coloneqq \left\Vert \widetilde{\RegM}^{(0)} - \RegM^{(0)} \right\Vert_2,
\end{equation*}
which measures the fidelity of the noisy source matrix $\widetilde{\RegM}^{(0)}$ to the true source matrix $\RegM^{(0)}$; smaller values correspond to a more accurate source.

We are now ready to state the error upper bound for the SMART estimator.

\begin{theorem}[Error upper bound for SMART]
\label{thm:spectral-trans}
Assume that Assumptions~\ref{assump:relation-target-source}--\ref{assump:error-distribution} and the initialization condition~\eqref{eq:require-initial} hold. Fix truncation levels $r_u$ and $r_v$, and an error tolerance $0 < \delta < 1$. Assume further that sample size $n$ is sufficiently large so that
\begin{equation}
\label{eq:cond-lasso-l2-err}
R_{2,n} \leq \frac{\tau}{18 \sqrt{5}} \cdot \frac{d^*_r}{\sqrt{r}} \cdot \left( \frac{d^*_r}{d^*_1} \right)^{3/2},
\end{equation}
\begin{equation}
\label{eq:cond-O1}
R_{1,n}^2
\lesssim
\alpha_{\max}\eta_r^2
\bigl\{r(r_u+r_v+1)+s_u(r_u)+s_v(r_v)\bigr\}
\cdot
\frac{\log(12r\max(p,q)/\delta)}{\log p},
\end{equation}
and
\begin{equation}
\label{eq:cond-O2}
\frac{9 c^2 \eta_r^2 R_{2,n}^2}{2 \alpha_{\max} d_{\min}^2} 
\Bigg[ 
\tr(\bm{\Sigma}) 
+ c_1 \max \Bigg\{ 
\sqrt{ \tfrac{ \tr(\bm{\Sigma}^2) \log (6p/\delta) }{ c_2 } }, 
\tfrac{ \alpha_{\max} \log (6p/\delta) }{ c_2 } 
\Bigg\} 
\Bigg] 
\leq \log \left( \tfrac{12pr}{\delta} \right)
\end{equation}
for universal constants $c, c_1, c_2 > 0$. Set the regularization parameters as
\begin{equation}
\label{eq:lambdas-in-good-events}
\lambda_u = 8\sqrt{2} \cdot \sqrt{ \frac{ \nu \alpha_{\max} \log (12pr/\delta) }{n} }, 
\qquad
\lambda_v = 2\sqrt{2} \cdot \sqrt{ \frac{ \nu \alpha_{\max} \log (6qr/\delta) }{n} }.
\end{equation}
Then the estimation error of SMART satisfies that with probability at least $1 - \delta$, 
\begin{multline}
\label{eq:err-bd-high-prob-smart}
\left\Vert \widehat{\bm{\Delta}} \right\Vert_{\mathrm{F}} 
\lesssim 
\sqrt{\alpha_{\max}} \, \eta_r \cdot 
\sqrt{ \frac{ \big( r(r_u + r_v + 1) + s_u(r_u) + s_v(r_v) \big) \cdot \log (12r \max(p,q)/\delta) }{ n } } \\
+ \alpha_{\max}^{1/4} (d_1^*)^{1/2} \, \varepsilon_{\mathrm{source}} \cdot \left( \frac{ \log (12r \max(p,q)/\delta) }{ n } \right)^{1/4},
\end{multline}
where the source error $\varepsilon_{\mathrm{source}}$ is given by
\begin{equation}
\label{eq:eps-source}
\begin{aligned}
\varepsilon_{\mathrm{source}} 
=  & \Bigg[
    r \big\{ (r_u^* - r_u)_+ + (r_v^* - r_v)_+ \big\}
    - \big( s_u(r_u) + s_v(r_v) \big)
\Bigg]^{1/2} 
\cdot 
\left( \varepsilon^{(0)} \right)^{1/2} 
\left( 2 d_1^{(0)} + \varepsilon^{(0)} \right)^{1/2} \\[0.5em]
& + \sqrt{r} \cdot \Bigg[
    (r_u^*)^{1/4} \sqrt{ p - \max(r_u^*, r_u) }
    + (r_v^*)^{1/4} \sqrt{ q - \max(r_v^*, r_v) }
\Bigg]^{1/2} \\
& \quad \times
\left( \varepsilon^{(0)} \right)^{1/4} 
\left( 2 d_1^{(0)} + \varepsilon^{(0)} \right)^{1/4},
\end{aligned}
\end{equation}
and $(t)_+ = \max\{t,0\}$ denotes the positive part.
\end{theorem}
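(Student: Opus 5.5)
The argument follows the CANO template of \citet{uematsu2019sofar}. The idea is a basic inequality for the localized global optimum, compared against a carefully chosen feasible reference point $(\mathbf{U}^\circ,\mathbf{D}^*,\mathbf{V}^\circ)$ rather than against the truth. The reason is that $\RegM^*$ is sparse only in the \emph{clean} source basis, while the penalty uses the noisy basis $\widetilde{\mathbf{U}}^{\mathrm{s}}_\perp(r_u)$, $\widetilde{\mathbf{V}}^{\mathrm{s}}_\perp(r_v)$.

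\textbf{Step 1: deterministic reduction.} Since $\widehat{\RegM}$ minimizes the objective over the region \eqref{eq:opt-constraints}, and $\RegM^*$ is feasible by \eqref{eq:require-initial}, I would write the basic inequality
\[
\tfrac{1}{2n}\|\mathbf{X}\widehat{\bm\Delta}\|_{\mathrm F}^2 \le \tfrac1n\langle \mathbf{X}^\top\mathbf{E},\widehat{\bm\Delta}\rangle + \lambda_u\bigl(|\widetilde{\mathbf{U}}^{\mathrm s\top}_\perp\mathbf{U}^*\mathbf{D}^*|_1-|\widetilde{\mathbf{U}}^{\mathrm s\top}_\perp\widehat{\mathbf{U}}\widehat{\mathbf{D}}|_1\bigr)+\lambda_v(\cdots).
\]

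For the left side, I lower bound by Assumption~\ref{assump:design-RSC} applied columnwise. This gives $\tfrac{\rho_2}{2}\|\widehat{\bm\Delta}\|_{\mathrm F}^2-\rho_1\tfrac{\log p}{n}\|\widehat{\bm\Delta}\|_{1,2}^2$. The localization gives $\|\widehat{\bm\Delta}\|_{1,2}\le 3R_{1,n}$, so condition \eqref{eq:cond-O1} makes the negative term no larger than the target squared rate.

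For the stochastic term, I would follow SOFAR and linearize $\widehat{\bm\Delta}$ through the factors. Using $R_{2,n}$ from \eqref{eq:cond-lasso-l2-err} and Assumption~\ref{assump:eigen-gap}, a Wedin/Davis--Kahan perturbation argument yields
\[
\|\widehat{\mathbf{U}}\widehat{\mathbf{D}}-\mathbf{U}^*\mathbf{D}^*\|_{\mathrm F}+\|\widehat{\mathbf{V}}\widehat{\mathbf{D}}-\mathbf{V}^*\mathbf{D}^*\|_{\mathrm F}\lesssim \eta_r\|\widehat{\bm\Delta}\|_{\mathrm F}
\]
after sign alignment. This is where the factor $\eta_r$ enters. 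The noise term then splits into $\langle \tfrac1n\mathbf{X}^\top\mathbf{E}\mathbf{V}^*, \cdot\rangle$ and $\langle\tfrac1n\mathbf{E}^\top\mathbf{X}\mathbf{U}^*,\cdot\rangle$ pieces, plus a second-order remainder. The remainder is controlled via $\mathrm{tr}(\bm\Sigma)$, Hanson--Wright, and condition \eqref{eq:cond-O2}.

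\textbf{Step 2: the probability event.} On an event of probability $1-\delta$, I need sup-norm bounds on the rotated noise matrices $\widetilde{\mathbf{U}}^{\mathrm{s}\top}_{\mathrm{full}}\tfrac1n\mathbf{X}^\top\mathbf{E}\mathbf{V}^*$ and its transpose analogue. These are Gaussian with variance at most $\nu\alpha_{\max}/n$ by Assumption~\ref{assump:constrained-norm-design}(i) and Assumption~\ref{assump:error-distribution}. A union bound over $pr$ and $qr$ entries gives the levels in \eqref{eq:lambdas-in-good-events}. Because $\lambda_u,\lambda_v$ dominate twice these maxima, the standard decomposability argument applies. Split coordinates into the unpenalized block of size $r(r_u+r_v)$ plus $r$ for $\mathbf{D}$, which is bounded by the $\ell_2$ norm times $\sqrt{r(r_u+r_v+1)}$, and the penalized support of size $s_u(r_u)+s_v(r_v)$. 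This yields the first term of \eqref{eq:err-bd-high-prob-smart}.

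\textbf{Step 3: source noise (main obstacle).} The difficulty is that $\widetilde{\mathbf{U}}^{\mathrm s\top}_\perp\mathbf{U}^*$ is not exactly $s_u(r_u)$-sparse. It has small but dense entries from the rotation between $\widetilde{\mathbf{U}}^{(0)}$ and $\mathbf{U}^{(0)}$. The penalty values at the truth therefore carry an extra $\ell_1$ mass $\mathcal{A}_u$ on the off-support coordinates. Adding it to the basic inequality produces an extra additive term $\lambda_u d_1^*\mathcal{A}_u$. Solving the resulting quadratic inequality $\|\widehat{\bm\Delta}\|^2\lesssim a\|\widehat{\bm\Delta}\|+\lambda d_1^*\mathcal{A}$ gives the square-root form $(\alpha_{\max}\log/n)^{1/4}(d_1^*)^{1/2}\mathcal{A}^{1/2}$ in the second term.

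To bound $\mathcal{A}_u$, I would use Wedin's $\sin\Theta$ theorem together with the gap \eqref{eq:eigen-gap-source} and Assumption~\ref{assump:constrained-norm-design}(ii), working through $\widetilde{\RegM}^{(0)}\widetilde{\RegM}^{(0)\top}-\RegM^{(0)}\RegM^{(0)\top}$. Its norm is at most $\varepsilon^{(0)}(2d_1^{(0)}+\varepsilon^{(0)})$, and this explains that product in \eqref{eq:eps-source}. I then treat two groups of coordinates separately:
\begin{itemize}
\item Coordinates $j\in(r_u,r_u^*]$ outside the support. These number at most $r(r_u^*-r_u)_+-s_u(r_u)$. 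Here an entrywise $\ell_2$-to-$\ell_1$ bound, using Cauchy--Schwarz on the perturbed individual eigenvectors, gives the first bracket.
\item Coordinates $j>\max(r_u^*,r_u)$. There are $p-\max(r_u^*,r_u)$ of them, and their total $\ell_2$ mass is controlled by the subspace $\sin\Theta$ distance of the leading $r_u^*$ block. This gives the factor $(r_u^*)^{1/4}\sqrt{p-\max(r_u^*,r_u)}$ with a fourth-root dependence on $\varepsilon^{(0)}$.
\end{itemize}
The $\mathbf{V}$ side is handled identically. Tracking these constants carefully, and keeping the sign alignment consistent across the ordered singular vectors, is where I expect most of the work to be.
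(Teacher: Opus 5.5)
Your proposal is correct and follows essentially the same route as the paper. That route is a basic inequality against $\RegM^*$, which is feasible under the localization, followed by the SOFAR factor decomposition with the $\widehat{\mathbf{B}}\widehat{\mathbf{D}}^-$ remainder absorbed at the sup-norm level via \eqref{eq:cond-O2}; decomposability then runs over the unpenalized block and the clean-basis support $\mathcal{S}_u$, the RSC negative term is handled by the $\|\cdot\|_{1,2}$ localization, and Davis--Kahan bounds (gap coordinates entrywise, tail coordinates through the leading $r_u^*$ block) control the extra mass $|[\widetilde{\mathbf{U}}^{\mathrm{s}\top}_\perp\mathbf{U}^*]_{\mathcal{S}_u^c}|_1$. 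The only loose end is your opening mention of a separate reference point $(\mathbf{U}^\circ,\mathbf{D}^*,\mathbf{V}^\circ)$, which you never use and do not need: comparing directly to the truth and paying that extra $\ell_1$ mass, as you actually do in Steps~1 and~3, is exactly the paper's argument.
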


The proof of Theorem \ref{thm:spectral-trans} is provided in Section~\ref{sec:proof-upp-bd-smart}.  
The upper bound in~\eqref{eq:err-bd-high-prob-smart} above has two components. The first term, which attains a parametric rate, reflects the estimation error in the ideal setting when the source matrix is perfectly clean (i.e., $\varepsilon^{(0)} = 0$). The factor $r(r_u + r_v + 1) + s_u(r_u) + s_v(r_v)$ corresponds to the effective degrees of freedom of the model. To see this, let us consider the decomposition
\begin{equation*}
\mathbf{U}^* = 
\mathbf{U}^{\mathrm{s}} (r_u) 
\underbrace{\left[ \left( \mathbf{U}^{\mathrm{s}} (r_u) \right)^{\top} \mathbf{U}^* \right]}_{\mathbf{P}^u_1} 
+ \mathbf{U}^{\mathrm{s}}_{\perp} (r_u) 
\underbrace{\left[ \left( \mathbf{U}^{\mathrm{s}}_{\perp} (r_u) \right)^{\top} \mathbf{U}^* \right]}_{\mathbf{P}^u_2}.
\end{equation*}
Here, $\mathbf{P}^u_1 \in \mathbb{R}^{r_u \times r}$ contributes $r r_u$ parameters. For $\mathbf{P}^u_2$, the $\ell_1$-penalty in~\eqref{eq:estimator-obj} encourages sparsity, so only $s_u(r_u)$ nonzero entries need to be estimated. Thus, recovering $\mathbf{U}^*$ requires $r r_u + s_u(r_u)$ parameters. Applying the same argument to $\mathbf{V}^*$ yields $r r_v + s_v(r_v)$ parameters, and estimating $\mathbf{D}^*$ contributes an additional $r$ degrees of freedom. Altogether, the model involves $r(r_u + r_v + 1) + s_u(r_u) + s_v(r_v)$ parameters.

The second term in~\eqref{eq:err-bd-high-prob-smart} accounts for the approximation error caused by noise in the source matrix $\widetilde{\RegM}^{(0)}$, scaled by $\alpha_{\max}^{1/4}$. If the source is noise-free (i.e., $\varepsilon^{(0)} = 0$), then the approximation error $\varepsilon_{\mathrm{source}}$ is zero and this term vanishes. Conversely, when $\varepsilon^{(0)}$ is large, $\varepsilon_{\mathrm{source}}$ increases, indicating stronger misalignment between the noisy source directions and the true target signal. Furthermore, when $r_u \geq r_u^*$ and $r_v \geq r_v^*$—the minimal truncation levels specified in~\eqref{eq:ru-rv-star}—the first component of $\varepsilon_{\mathrm{source}}$ in~\eqref{eq:eps-source} becomes zero. Because the second component grows more slowly with $\varepsilon^{(0)}$, enlarging $r_u$ and $r_v$ beyond these thresholds can significantly mitigate the approximation error.

The structural parameters $(r_u, r_v)$ govern the trade-off between estimation and approximation errors. Since $r r_u + s_u(r_u)$ and $r r_v + s_v(r_v)$ are non-decreasing in $r_u$ and $r_v$, larger values expand the unpenalized subspace, thereby increasing estimation error through higher model complexity while reducing approximation error by allowing the estimator to avoid noisy source directions. When the source is clean and its singular vectors are reliable, the optimal choice is often $r_u = r_v = 0$, allowing the estimator to exploit the source spectral structure most aggressively. With moderate source noise, intermediate values of $(r_u, r_v)$ balance these two effects by retaining informative directions while discarding noisy ones. In the extreme case of a heavily contaminated source, setting $r_u = p$ and $r_v = q$ removes the regularization terms entirely, reducing the procedure to a target-only model and avoiding negative transfer from unreliable source information.

\subsection{Minimax lower bound}

In this section, we derive a minimax lower bound on the expected squared Frobenius risk for multi-task learning with spectral transfer, focusing on an idealized setting where the source matrix $\RegM^{(0)}$ is observed noiselessly, i.e., $\varepsilon^{(0)} = 0$.

We begin by specifying the parameter space of interest. Guided by the spectral containment condition (Assumption~\ref{assump:relation-target-source}) and the projection sparsity notion in~\eqref{eq:sparse-level-alignment}, we define the admissible sets for $\mathbf{U}^*$ and $\mathbf{V}^*$ as
\begin{align*}
\mathcal{U} &\coloneqq \Bigl\{ \mathbf{U} \in \mathbb{R}^{p \times r} : 
\mathbf{U}^{\top} \mathbf{U} = \mathbf{I}_r,\ 
\mathrm{col}(\mathbf{U}) \subseteq \mathrm{col}(\mathbf{U}^{(0)}),\ 
\bigl| \mathbf{U}^{\top} \mathbf{U}^{(0)} \bigr|_0 \leq s^*_u \Bigr\}, \\
\mathcal{V} &\coloneqq \Bigl\{ \mathbf{V} \in \mathbb{R}^{q \times r} : 
\mathbf{V}^{\top} \mathbf{V} = \mathbf{I}_r,\ 
\mathrm{col}(\mathbf{V}) \subseteq \mathrm{col}(\mathbf{V}^{(0)}),\ 
\bigl| \mathbf{V}^{\top} \mathbf{V}^{(0)} \bigr|_0 \leq s^*_v \Bigr\},
\end{align*}
respectively. Further, assume that the singular values of $\RegM^*$ are uniformly bounded away from zero and satisfy the spectral gap condition (Assumption~\ref{assump:eigen-gap}). The parameter space for $\mathbf{D}^*$ is defined as
\begin{equation}
\begin{aligned}
\label{eq:param-space-D}
\mathcal{D} \coloneqq \Bigl\{ \mathbf{D} = \mathrm{diag}(d_1, \ldots, d_r) : &  0 < \underline{c} \leq d_r < \cdots < d_1 \leq \gamma \underline{c} < \infty, \\ 
& d_{j-1}^2 - d_j^2 \geq \tau d_{j-1}^2 \quad \text{for all } 2 \leq j \leq r \Bigr\},
\end{aligned}
\end{equation}
where $\gamma > 1$ is a uniform upper bound on the condition number and $0 < \tau < 1$ is a universal constant.

Finally, we define the parameter space for the target coefficient matrix as
\begin{equation}
\label{eq:param-space-C}
\mathcal{C} \coloneqq \Bigl\{ \RegM \in \mathbb{R}^{p \times q} : 
\RegM = \mathbf{U} \mathbf{D} \mathbf{V}^{\top}, \ 
\mathbf{U} \in \mathcal{U},\ 
\mathbf{D} \in \mathcal{D},\ 
\mathbf{V} \in \mathcal{V} \Bigr\}.
\end{equation}
In the subsequent technical analysis, we treat the design matrix $\mathbf{X}$ as fixed and assume that it satisfies Assumption~\ref{assump:constrained-norm-design}. For simplicity, we also assume that the entries of the noise matrix $\mathbf{E}$ are i.i.d.\ Gaussian with mean zero and variance $\sigma^2$.

We are now ready to present the second main result, which establishes a minimax lower bound for estimating $\RegM^*$ under the squared Frobenius loss.

\begin{theorem}[Minimax lower bound]
\label{thm:lower-bound}
Consider the multi-task linear regression model in~\eqref{eq:model-assump}, with the parameter space $\mathcal{C}$ defined in~\eqref{eq:param-space-C}.
Assume that $\min(s^*_u, s^*_v) \geq r(r+1)$ and $r_0 \geq 4r+1$.
In addition, assume that
\begin{equation}
\label{eq:slog-lwd}
\begin{aligned}
\Bigl((s^*_u \wedge \lfloor r_0 / 2 \rfloor) - r\Bigr)
\log\left( \frac{r_0 - r}{(s^*_u \wedge \lfloor r_0/2 \rfloor) - r} \right)
&\geq \frac{4 \log 2}{c}, \\
\Bigl((s^*_v \wedge \lfloor r_0 / 2 \rfloor) - r\Bigr)
\log\left( \frac{r_0 - r}{(s^*_v \wedge \lfloor r_0/2 \rfloor) - r} \right)
&\geq \frac{4 \log 2}{c},
\end{aligned}
\end{equation}
where $c > 0$ is a universal constant.
If sample size $n$ is sufficiently large such that
\begin{multline}
\label{eq:sample-lower-bd-requirement}
n \geq \left( \frac{\sigma^2 (s^*_u \vee s^*_v)}{\gamma^2 \underline{c}^2 \nu} \right) \\
\vee \left( \frac{c \sigma^2}{8 \underline{c}^2 \nu}
\Bigl((s^*_u \wedge \lfloor r_0 / 2 \rfloor) - r\Bigr)
\log\left( \frac{r_0 - r}{(s^*_u \wedge \lfloor r_0/2 \rfloor) - r} \right) \right) \\
\vee \left( \frac{c \sigma^2}{8 \underline{c}^2 \nu}
\Bigl((s^*_v \wedge \lfloor r_0 / 2 \rfloor) - r\Bigr)
\log\left( \frac{r_0 - r}{(s^*_v \wedge \lfloor r_0/2 \rfloor) - r} \right) \right),
\end{multline}
the minimax expected squared Frobenius risk satisfies that 
\begin{multline}
\label{eq:lower-bound-equation}
\inf_{\widehat{\mathbf{C}}} \sup_{\mathbf{C} \in \mathcal{C}}
\mathbb{E}_{\mathbf{C}}
\left\Vert \widehat{\mathbf{C}} - \mathbf{C} \right\Vert_{\mathrm{F}}^2 \\
\gtrsim \frac{\sigma^2 (s^*_u + s^*_v)}{\gamma^2 n}
+ \sigma^2 \left(
\frac{s^*_u \wedge \lfloor r_0 / 2 \rfloor}{n}
\log\left( \frac{0.75 r_0}{s^*_u \wedge \lfloor r_0/2 \rfloor} \right)
+ \frac{s^*_v \wedge \lfloor r_0 / 2 \rfloor}{n}
\log\left( \frac{0.75 r_0}{s^*_v \wedge \lfloor r_0/2 \rfloor} \right)
\right),
\end{multline}
where $\mathbb{E}_{\mathbf{C}}$ denotes expectation under model~\eqref{eq:model-assump} with coefficient matrix $\mathbf{C}$.
\end{theorem}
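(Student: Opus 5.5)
The lower bound has two parts, and I would prove each separately with Fano's inequality, using packings inside $\mathcal{C}$; the final bound is their maximum, which is at most a factor two smaller than their sum. The Gaussian KL identity drives everything. With $\mathbf{X}$ fixed and i.i.d.\ $N(0,\sigma^2)$ noise,
\[
\mathrm{KL}(P_{\mathbf{C}},P_{\mathbf{C}'})=\frac{\|\mathbf{X}(\mathbf{C}-\mathbf{C}')\|_{\mathrm F}^2}{2\sigma^2}\le \frac{\nu n}{2\sigma^2}\|\mathbf{C}-\mathbf{C}'\|_{\mathrm F}^2
\]
by Assumption~\ref{assump:constrained-norm-design}(i). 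So for a packing $\{\mathbf{C}_1,\dots,\mathbf{C}_M\}\subset\mathcal{C}$ with pairwise distances in $[\delta,2\delta]$ (or with diameter $O(\delta)$), Fano gives risk $\gtrsim\delta^2$ once $\nu n\delta^2/\sigma^2\lesssim\log M$. I would fix $\mathbf{D}=\mathrm{diag}(d_1,\dots,d_r)\in\mathcal{D}$ with all $d_j\asymp\underline{c}$, and fix $\mathbf{V}$ (resp.\ $\mathbf{U}$) when perturbing the other factor. By symmetry it suffices to treat the $\mathbf{U}$ side.

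\emph{Combinatorial term.} Write $k=s^*_u\wedge\lfloor r_0/2\rfloor$. I would take $\mathbf{U}=\mathbf{U}^{(0)}\mathbf{P}$ with $\mathbf{P}=[\mathbf{e}_1,\dots,\mathbf{e}_{r-1},\mathbf{w}]$, where $\mathbf{w}$ is supported on coordinates $\{r,\dots,r_0\}$. I would set $\mathbf{w}=\sqrt{1-\epsilon^2}\,\mathbf{e}_r+(\epsilon/\sqrt{k-r})\,\mathbf{b}$, where $\mathbf{b}\in\{0,1\}^{r_0-r}$ has exactly $k-r$ ones on the coordinates after $r$. Then $\mathbf{P}$ is orthonormal and $|\mathbf{P}|_0\le r-1+k-r+1\le s^*_u$, so $\mathbf{U}\in\mathcal{U}$. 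The hypothesis $r_0\ge 4r+1$ guarantees enough room, and $r_0-r\ge 2(k-r)$ is needed for Gilbert–Varshamov. By the sparse Varshamov–Gilbert lemma there are $M$ such vectors $\mathbf{b}$, with $\log M\ge c(k-r)\log\bigl((r_0-r)/(k-r)\bigr)$ and pairwise Hamming distance at least $(k-r)/2$. Condition~\eqref{eq:slog-lwd} ensures $\log M\ge 4\log 2$, which is what Fano needs. The resulting matrices satisfy $\|\mathbf{C}_i-\mathbf{C}_j\|_{\mathrm F}=d_r\|\mathbf{w}_i-\mathbf{w}_j\|_2\asymp\underline{c}\,\epsilon$. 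I would choose $\epsilon^2\asymp(\sigma^2/(\underline{c}^2\nu n))\log M$. The second term of~\eqref{eq:sample-lower-bd-requirement} guarantees $\epsilon\le 1$. This gives risk $\gtrsim\sigma^2 (k-r)\log((r_0-r)/(k-r))/n$. Because $k\ge s^*_u\wedge\lfloor r_0/2\rfloor$ and $s^*_u\ge r(r+1)$, one has $k-r\gtrsim k$ and $\log((r_0-r)/(k-r))\gtrsim\log(0.75r_0/k)$, which yields the stated form. One point to check is that only the last singular vector moves, so the spectral gap constraints in $\mathcal{D}$ are untouched, while singular values stay fixed.

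\emph{Parametric term $\sigma^2 s^*_u/(\gamma^2 n)$.} Here I would fix a support of size $s^*_u$ for $\mathbf{P}$: each of the $r$ columns gets its own disjoint block of about $s^*_u/r\ge r+1$ coordinates, which is where $s^*_u\ge r(r+1)$ enters. Each column is a fixed unit vector plus a small perturbation $\epsilon\,\bm{\omega}$, with $\bm{\omega}\in\{\pm1\}$ sign patterns from a Varshamov–Gilbert hypercube of dimension $\asymp s^*_u$. Each column is then normalized; disjoint blocks keep orthonormality exact. Distances are then $\asymp d_r\epsilon\sqrt{s^*_u}$ and KL is $\lesssim\nu n d_1^2\epsilon^2 s^*_u/\sigma^2$, so Fano gives a bound of order $(d_r/d_1)^2\sigma^2 s^*_u/(\nu n)\ge\sigma^2 s^*_u/(\gamma^2\nu n)$. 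The first term of~\eqref{eq:sample-lower-bd-requirement} keeps the perturbation size $\epsilon\lesssim 1/\sqrt{s^*_u}$ admissible.

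I expect the main obstacle to be the careful verification that the constructed $\mathbf{U}$ stay in $\mathcal{U}$: exact orthonormality after perturbation, the sparsity budget $s^*_u$, and containment in $\mathrm{col}(\mathbf{U}^{(0)})$. I also need the Frobenius separations to be two-sided, which requires controlling the normalization factors. The combination with the $\mathbf{V}$ side and the conversion of the logarithmic factors are then routine.
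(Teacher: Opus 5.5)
The overall plan matches the paper. You split into the $\mathcal{C}_u$ and $\mathcal{C}_v$ subproblems, bound the Gaussian KL by $\frac{\nu n}{2\sigma^2}\|(\mathbf{P}-\mathbf{P}')\mathbf{D}\|_{\mathrm F}^2$, and get the combinatorial term by moving only the last column, $\mathbf{P}=\mathrm{diag}(\mathbf{I}_{r-1},\mathbf{w})$ with $\mathbf{w}$ a $(k-r+1)$-sparse unit vector. That is the paper's subclass $\mathcal{P}_{u,2}$ with Lemma~\ref{lemma:packing-set-sparse-unit}, and your explicit $\sqrt{1-\epsilon^2}\,\mathbf{e}_r+(\epsilon/\sqrt{k-r})\,\mathbf{b}$ construction is fine.

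\textbf{The gap is in the parametric term.} Giving each of the $r$ columns its own disjoint block of about $s_u^*/r$ rows uses about $s_u^*$ distinct row indices of $\mathbf{P}\in\mathbb{R}^{r_0\times r}$. The hypotheses only guarantee $s_u^*\le r r_0$, not $s_u^*\le r_0$; for example, $r=2$, $r_0=100$, $s_u^*=150$ satisfies every assumption. When $s_u^*>r_0$ the blocks do not fit. Capping them at $\lfloor r_0/r\rfloor$ rows gives only $\sigma^2 r_0/(\gamma^2 n)$, which falls short of $\sigma^2 s_u^*/(\gamma^2 n)$ by a factor up to $r$. Disjointness is exactly what makes your orthonormality exact, so this is not a bookkeeping issue.

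\textbf{How the paper handles it.} All $r$ columns share one support of $k=\lfloor s_u^*/r\rfloor\le r_0$ rows, i.e.\ $\mathbf{P}=[\mathbf{P}_1^\top\ \mathbf{0}]^\top$ with $\mathbf{P}_1\in\mathrm{St}(k,r)$. The new covering bound in Lemma~\ref{lemma:covering-number-Stiefel-Frob} gives metric dimension $kr-r(r+1)/2\ge kr/2\gtrsim s_u^*$. This, rather than block size, is where $s_u^*\ge r(r+1)$ (i.e.\ $k\ge r+1$) is used. Lemma~\ref{lemma:local-metric-entropy} then converts the entropy into the risk bound.

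\textbf{How to keep your hypercube instead.} Use your disjoint blocks only when $s_u^*\le r_0$, and handle $s_u^*>r_0$ by domination:
\begin{itemize}
\item The relative gap gives $\tau r\underline{c}^2\le\tau\sum_j d_j^2\le d_1^2\le\gamma^2\underline{c}^2$, so $r\le\gamma^2/\tau$.
\item Hence $\sigma^2 s_u^*/(\gamma^2 n)\le\sigma^2 r r_0/(\gamma^2 n)\le\sigma^2 r_0/(\tau n)$.
\item For $s_u^*>r_0$ the combinatorial term equals $\sigma^2\lfloor r_0/2\rfloor\log(0.75r_0/\lfloor r_0/2\rfloor)/n\gtrsim\sigma^2 r_0/n$, because $r_0\ge5$ and the logarithm is at least $\log 1.5$.
\item Since $\tau$ is universal, the parametric term is absorbed.
\end{itemize}
This route avoids the Stiefel entropy lemma but relies on $\tau$ being a constant; the paper's route does not.

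\textbf{Two smaller points.}
\begin{itemize}
\item You cannot in general take all $d_j\asymp\underline{c}$, since the relative gap forces $d_1/d_r\ge(1-\tau)^{-(r-1)/2}$. You only ever use $d_r\ge\underline{c}$ and $d_1\le\gamma\underline{c}$, so nothing breaks.
\item Let each column be a coordinate vector of its block plus $\epsilon\bm{\omega}$ placed on the other $m-1$ block coordinates. Then every column has the same norm $\sqrt{1+(m-1)\epsilon^2}$, and the two-sided separation you were worried about is immediate.
\end{itemize}
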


The proof of Theorem~\ref{thm:lower-bound} is included in Appendix~\ref{sec:proof-lwd-bd}. Theorem~\ref{thm:lower-bound} above is established using a combination of local metric entropy arguments, adapted from~\cite{cai2013sparse} (see Lemma~\ref{lemma:local-metric-entropy} in the Appendix), and the classical Fano method. A key ingredient is a \textit{new} result establishing both upper and lower bounds for the $\delta$-covering number of the 
%$p \times r$ 
\textit{Stiefel manifold} under the Frobenius norm, extending earlier results for the Grassmann manifold~\citep{szarek1982nets} (see Lemma~\ref{lemma:covering-number-Stiefel-Frob}). We conclude with the following remark on Theorem~\ref{thm:lower-bound}.

\begin{remark}
Comparing the lower bound in~\eqref{eq:lower-bound-equation} to the high-probability upper bound in~\eqref{eq:err-bd-high-prob-smart}, we see that the two results match up to logarithmic factors in the noiseless-source regime when $\gamma$ is treated as a universal constant. To compare the two bounds directly, let us specialize the upper bound to the isotropic noise setting of $\bm{\Sigma} = \sigma^2 \mathbf{I}_q$ assumed in this subsection. Indeed, if $\gamma = O(1)$, the spectral gap condition in~\eqref{eq:param-space-D} entails that $r = O(1)$, since
\begin{equation*}
\underline{c}^2 \tau r 
 \leq  \tau \sum_{j=1}^r d_j^2
 \leq  \sum_{j=2}^r (d_{j-1}^2 - d_j^2) + d_r^2
= d_1^2 \leq \underline{c}^2 \gamma^2,
\end{equation*}
which implies that $r \leq \gamma^2 / \tau = O(1)$. Consequently, with $r_u = r_v = 0$ (so $s^*_u = s_u(0)$ and $s^*_v = s_v(0)$), the upper bound~\eqref{eq:err-bd-high-prob-smart} entails that the SMART estimator $\widehat{\RegM}$ satisfies that with probability at least $1-\delta$, 
\[
\left\Vert \widehat{\RegM} - \RegM^* \right\Vert_{\mathrm{F}}^{2}
 \lesssim 
\frac{\sigma^2 (s^*_u + s^*_v)\,
\log \bigl( c \max(p,q) / \delta \bigr)}{n}
\]
for some universal constant $c$, whereas the lower bound~\eqref{eq:lower-bound-equation} yields that 
\[
\inf_{\widehat{\mathbf{C}}} \sup_{\mathbf{C} \in \mathcal{C}}
\mathbb{E}_{\mathbf{C}}
\left\Vert \widehat{\mathbf{C}} - \mathbf{C} \right\Vert_{\mathrm{F}}^{2}
 \gtrsim  \frac{\sigma^2 (s^*_u + s^*_v)}{n}.
\]
Hence, in the noiseless-source regime, the upper and lower bounds agree up to logarithmic factors in their dependence on $n$, $s^*_u$, and $s^*_v$.
\end{remark}

\section{Simulation examples}
\label{sec:simulation-smart}

In this section, we evaluate the empirical performance of the suggested SMART algorithm through a series of simulation examples. We compare its estimation accuracy to several baseline methods under varying conditions, including different sample sizes, fitted ranks, source truncation levels, and source quality.

Recall that $n$ denotes the number of observations, $p$ the number of input features, and $q$ the number of output responses. Let $r$ be the rank of the true target coefficient matrix $\RegM^*$, and $r_0$ the rank of the source coefficient matrix $\RegM^{(0)}$, with $r \le r_0$. The rank of an estimator $\widehat{\RegM}$ is denoted as $\widehat{r}$. To measure the estimation accuracy, we report the normalized Frobenius error of the estimated coefficient matrix $\widehat{\RegM} = \widehat{\mathbf{U}} \widehat{\mathbf{D}} \widehat{\mathbf{V}}^{\top}$, defined as $\frac{1}{\sqrt{pq}} \left\Vert \widehat{\RegM} - \RegM^* \right\Vert_{\mathrm{F}}$.

\subsection{Data generating process}

We simulate data according to the following procedure:

\begin{enumerate}
    \item[1)] \textbf{Source regression coefficient matrix:} We generate two random matrices $\mathbf{U}' \in \mathbb{R}^{p \times r_0}$ and $\mathbf{V}' \in \mathbb{R}^{q \times r_0}$ with i.i.d.\ standard normal entries, and  orthonormalize their columns via the QR decomposition to obtain $\mathbf{U}^{(0)}$ and $\mathbf{V}^{(0)}$. We then construct a diagonal matrix $\mathbf{D}^{(0)} \in \mathbb{R}^{r_0 \times r_0}$ with singular values linearly spaced between 1.0 and 10.0 in decreasing order. The clean source regression coefficient matrix is defined as
    \[
    \RegM^{(0)} = \mathbf{U}^{(0)} \mathbf{D}^{(0)} \mathbf{V}^{(0)\top}.
    \]
    To model imperfect source knowledge, we add Gaussian noise
    \[
    \widetilde{\RegM}^{(0)} = \RegM^{(0)} + \mathbf{E}^{(0)}, 
    \quad E^{(0)}_{ij} \sim \mathcal{N}(0, \sigma_0^2) \ \text{i.i.d.}
    \]
    The noise level $\sigma_0$ controls the fidelity of the source: smaller values correspond to a higher-quality source, and $\sigma_0 = 0$ corresponds to the noiseless case.

    \item[2)] \textbf{Feature matrix:}
    The input feature matrix $\mathbf{X} \in \mathbb{R}^{n \times p}$ has i.i.d.\ rows drawn from a multivariate normal distribution $\mathcal{N}_p(0, \bm{\Sigma}_x)$, where the covariance matrix $\bm{\Sigma}_x \in \mathbb{R}^{p \times p}$ has entries $(\bm{\Sigma}_x)_{ij} = 0.5^{|i-j|}$.

    \item[3)] \textbf{Target regression coefficient matrix:}
    To construct the target coefficient matrix $\RegM^*$, we randomly select $r$ columns from both $\mathbf{U}^{(0)}$ and $\mathbf{V}^{(0)}$ without replacement, forming $\mathbf{U}^* \in \mathbb{R}^{p \times r}$ and $\mathbf{V}^* \in \mathbb{R}^{q \times r}$. We then generate a diagonal matrix $\mathbf{D}^* \in \mathbb{R}^{r \times r}$ with values linearly spaced between 3.0 and 5.0 in decreasing order. The target regression coefficient matrix is
    $\RegM^* = \mathbf{U}^{*} \mathbf{D}^{*} \mathbf{V}^{*\top}$.
    This construction ensures that in the noiseless-source setting, the spectral containment condition holds exactly and that the target singular vectors are sparse in the source singular bases.

    \item[4)] \textbf{Response matrix:}
    Finally, the response matrix $\mathbf{Y} \in \mathbb{R}^{n \times q}$ is generated according to the multivariate linear model
    \[
    \mathbf{Y} = \mathbf{X} \RegM^* + \mathbf{E} \quad \text{with } E_{ij} \sim \mathcal{N}(0, \sigma^2) \ \text{i.i.d.}
    \]
    The noise standard deviation is fixed at $\sigma = 0.5$ for all experiments.
\end{enumerate}

\subsection{Experimental design}

\begin{figure}[t]
\centering
\includegraphics[width=.75\textwidth]{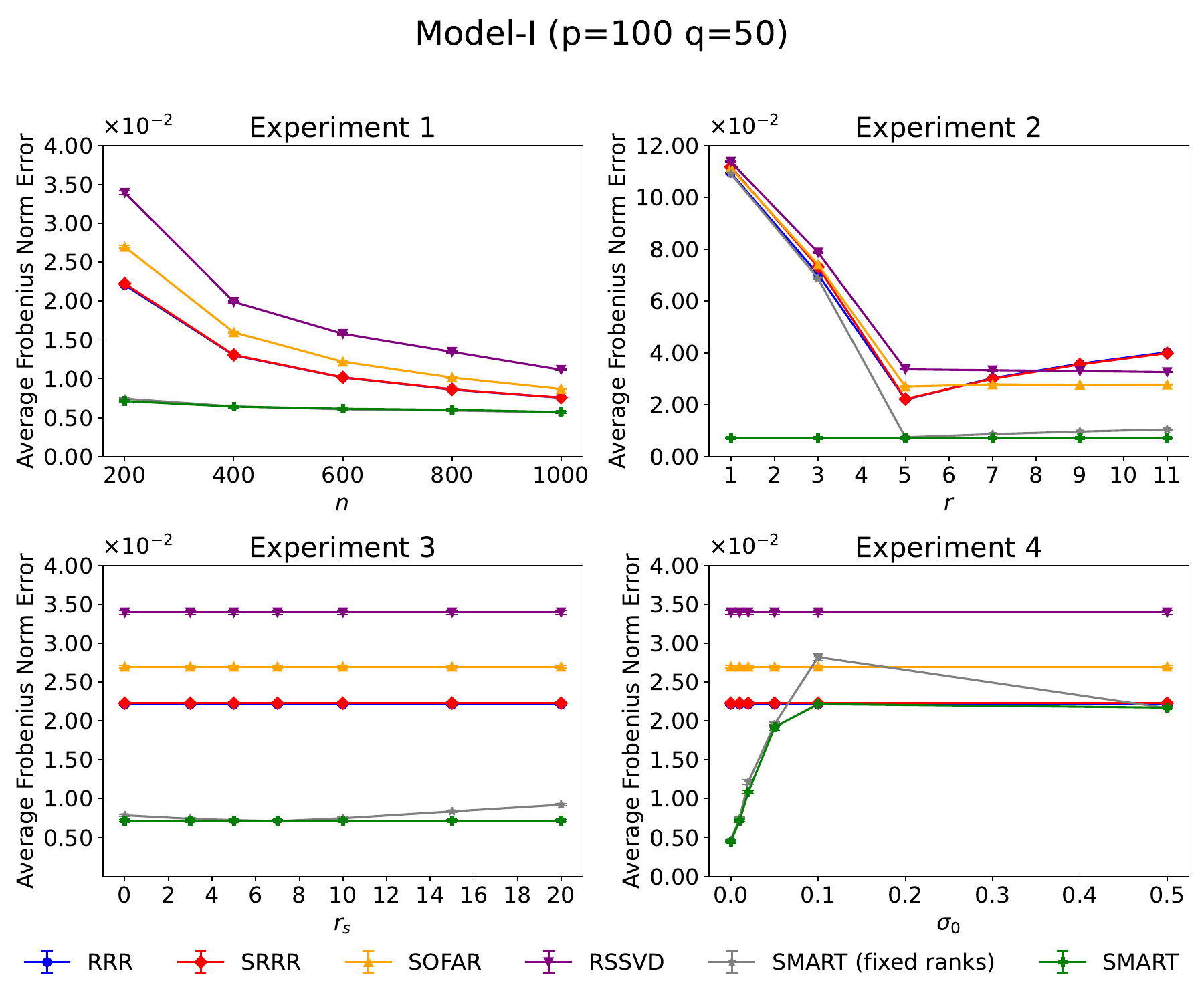}
\caption{Simulation results for Model I.}
\label{fig:simu-exp-model1-smart}
\end{figure}

We evaluate the performance of SMART across multiple simulation settings with varying problem dimensions. Specifically, we consider the following three configurations:

\begin{itemize}
    \item \textbf{Model I:} $p = 100$, $q = 50$;
    \item \textbf{Model II:} $p = 150$, $q = 100$;
    \item \textbf{Model III:} $p = 300$, $q = 200$.
\end{itemize}

To assess SMART under different conditions, we conduct four experiments. In each experiment, we vary one key parameter while holding the others fixed at their default values. The default settings are: sample size \( n = 200 \), estimated rank \( \widehat{r} = 5 \), source subspace ranks \( r_u = r_v = 10 \), and source noise level \( \sigma_0 = 0.01 \). For all models, we fix \( r_0 = 10 \) and \( r = 5 \). Each configuration is repeated 100 times to compute the averages and standard errors.

\begin{enumerate}
    \item[1)] \textbf{Experiment 1:} Vary the sample size \( n \), while keeping \( \widehat{r} \), \( r_u \), \( r_v \), and \( \sigma_0 \) fixed.
    \item[2)] \textbf{Experiment 2:} Vary the estimated rank \( \widehat{r} \), with \( n \), \( r_u \), \( r_v \), and \( \sigma_0 \) held constant.
    \item[3)] \textbf{Experiment 3:} Let \( r_u = r_v = r_s \) and vary \( r_s \), keeping \( n \), \( \widehat{r} \), and \( \sigma_0 \) fixed.
    \item[4)] \textbf{Experiment 4:} Vary the source noise level \( \sigma_0 \), with \( \widehat{r} \), \( r_u \), and \( r_v \) fixed.
\end{enumerate}

In these experiments, the hyperparameters $(\widehat{r}, r_u, r_v)$ for SMART are either fixed or selected automatically. For the fixed-rank variant, we set $(\widehat{r}, r_u, r_v)$ manually and tune $(\lambda_u, \lambda_v)$ using the BIC criterion in Section~\ref{sec:hyper-param-smart}; we denote this method as \textbf{SMART (fixed ranks)}. The fully automated version of SMART jointly selects $(\widehat{r}, r_u, r_v, \lambda_u, \lambda_v)$ using the strategy in Section~\ref{sec:hyper-param-smart}, and we denote it simply as \textbf{SMART}. In Experiments 2 and 3, the horizontal axis varies only the externally imposed fixed-rank specification. We therefore depict the performance of fully automatic SMART as a horizontal reference line evaluated under the corresponding default configuration for each model.

\subsection{Baseline methods}

\begin{figure}[t]
\centering
\includegraphics[width=.75\textwidth]{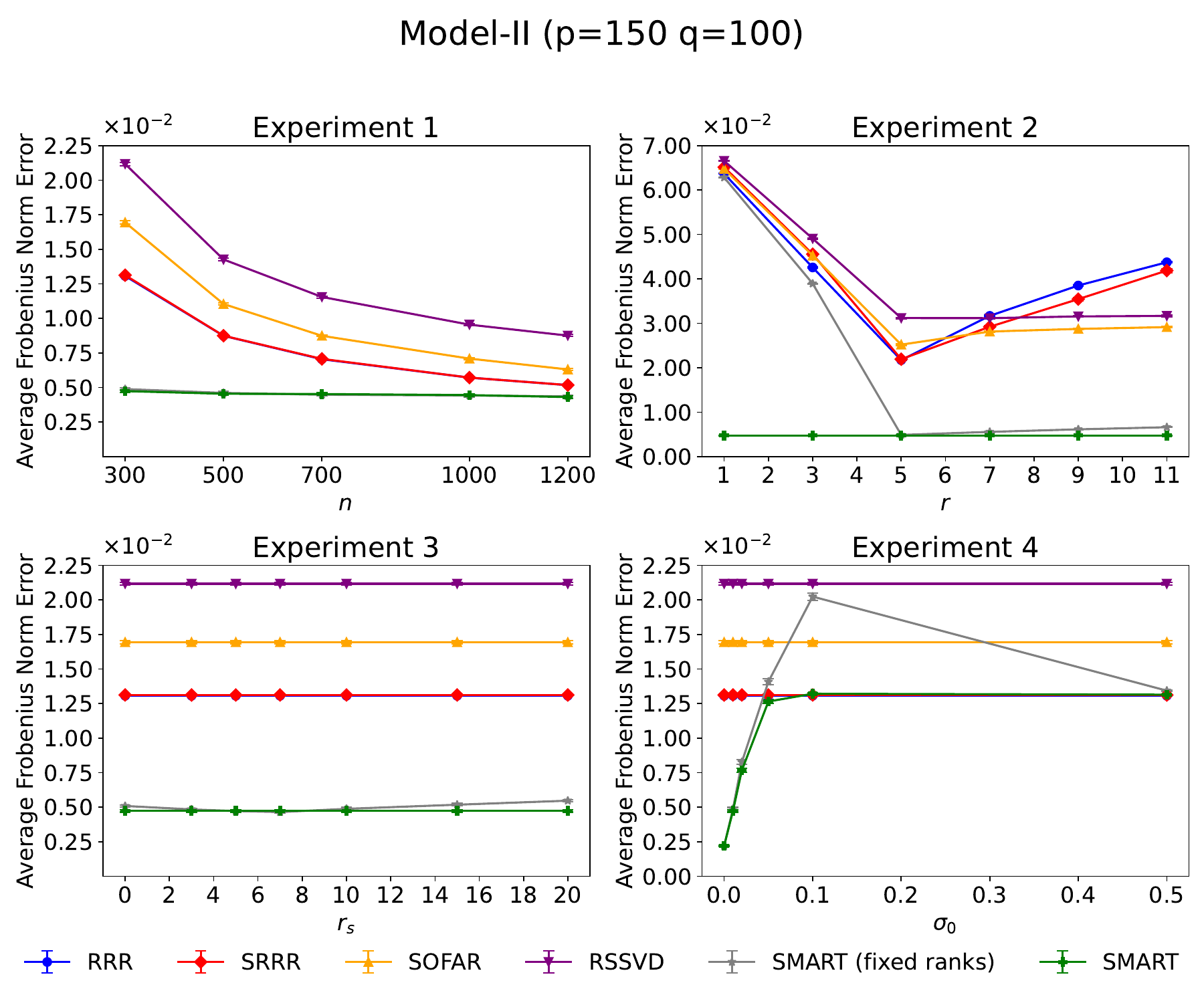}
\caption{Simulation results for Model II.}
\label{fig:simu-exp-model2-smart}
\end{figure}

We compare the suggested SMART algorithm to the following state-of-the-art low-rank regression methods:

\begin{itemize}
    \item \textbf{RRR.} An adaptive low-rank regression method that extends traditional reduced-rank regression by employing adaptive nuclear-norm penalization~\citep{chen2013reduced}.
    
    \item \textbf{SRRR.} Incorporates group Lasso penalties into reduced-rank regression, enabling simultaneous dimension reduction and variable selection~\citep{chen2012sparse}.
    
    \item \textbf{SOFAR.} Enforces two-way sparsity on singular vectors by solving a regularized optimization problem with orthogonality constraints~\citep{uematsu2019sofar}.
    
    \item \textbf{RSSVD.} A layer-wise estimation method that applies adaptive penalties to individual rank-one components~\citep{mukherjee2011reduced,mukherjee2015degrees}.
\end{itemize}

All baseline methods use only the target data and ignore the source information. Comparing SMART against them highlights the benefit of transfer learning. Their performance is unaffected by the source truncation level $r_s$ and the source noise level $\sigma_0$. Accordingly, in Experiments 3 and 4 we present their results as horizontal reference lines evaluated under the corresponding default configuration.

\subsection{Simulation results}

Figures~\ref{fig:simu-exp-model1-smart}--\ref{fig:simu-exp-model3-smart} present the simulation results for Models I--III. Across all experiments, SMART consistently outperforms the baseline methods. In Experiments 2 and 3, the fixed-rank version shows strong robustness to the choice of hyperparameters \( (\widehat{r}, r_u, r_v) \). Experiment 4 further demonstrates that the fully data-driven SMART, which selects \( (\widehat{r}, r_u, r_v) \) via the strategy in Section~\ref{sec:hyper-param-smart}, never underperforms relative to target-only methods. When the source matrix is accurate (low noise), SMART yields substantial gains over the baseline methods; when the source is highly contaminated, SMART automatically downweights or discards unreliable source information and matches the target-only performance. Overall, SMART is robust to the source quality and effectively avoids negative transfer.

\begin{figure}[t]
\centering
\includegraphics[width=.75\textwidth]{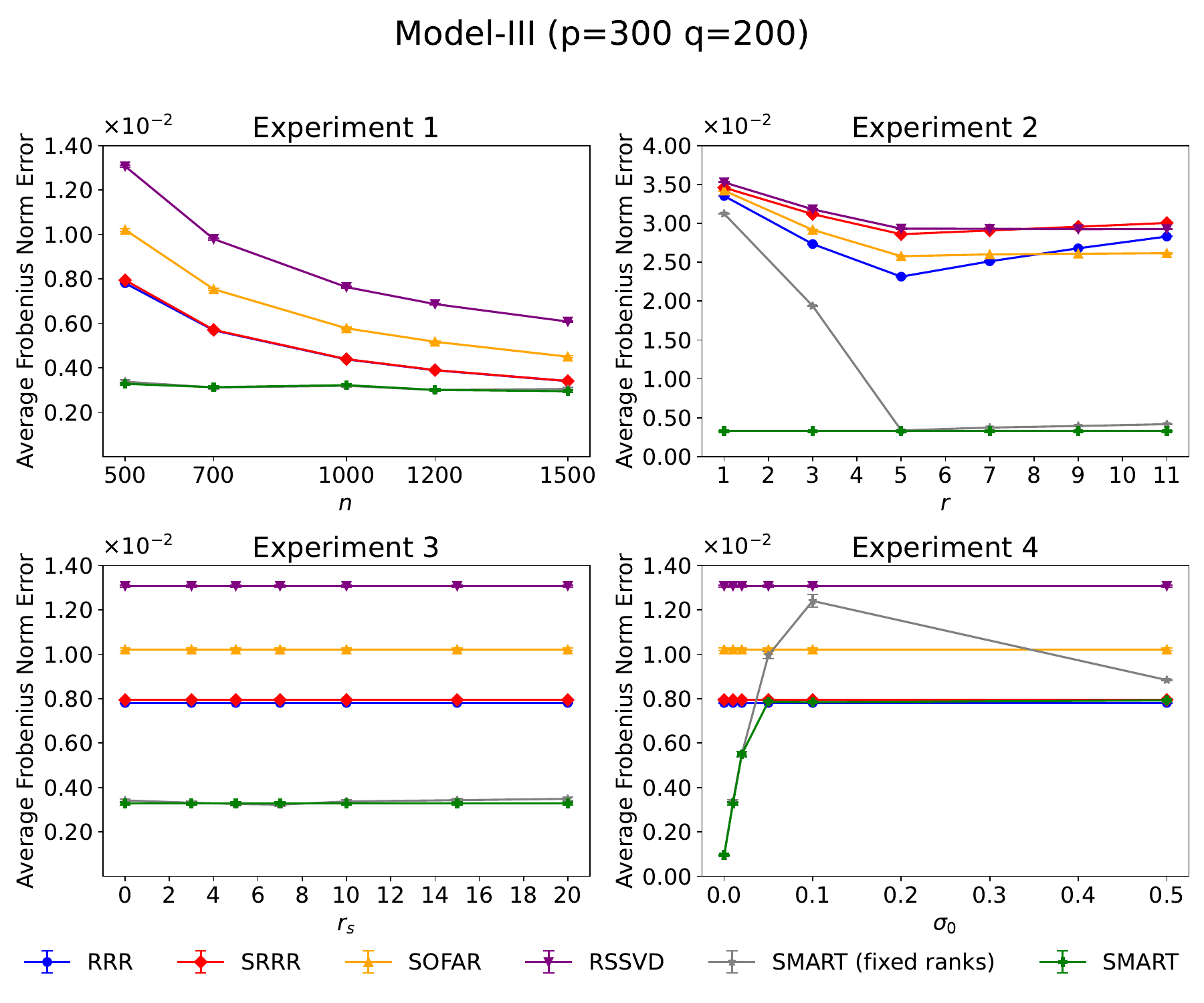}
\caption{Simulation results for Model III.}
\label{fig:simu-exp-model3-smart}
\end{figure}

\section{Application to multi-modal single-cell dataset}
\label{sec:real-world-data-smart}

\textbf{Data preprocessing.}
We constructed GEX and ADT matrices from the raw data, which contain $13{,}953$ genes and $134$ proteins across $90{,}261$ cells. After normalization, filtering, and feature selection, we restricted the analysis to NK and ILC1 cells, yielding $5{,}434$ source cells and $552$ target cells. The final predictor set contains $248$ shared gene features, and all $134$ ADT markers are retained as responses. Additional preprocessing details are provided in Appendix~\ref{sec:details-real-world-experiment}.

We evaluate \textsc{SMART} on a Multimodal Single-Cell dataset of bone marrow mononuclear cells from 12 healthy donors\footnote{The dataset was released for the Multimodal Single-Cell Data Integration Challenge at NeurIPS~2021 and is publicly available at \url{https://www.ncbi.nlm.nih.gov/geo/query/acc.cgi?acc=GSE194122}.}. 
Each cell contains paired measurements of gene expression (GEX) and protein abundance (ADT). We treat each row of $\mathbf{Y}$ as a protein-expression vector and the corresponding row of $\mathbf{X}$ as the associated gene-expression profile, and estimate a linear map from GEX to ADT.

The above dataset includes cell-type annotations, with substantial imbalance in cell counts across types. Some populations are well sampled, whereas others are rare, making target-only estimation challenging for the latter. Figure~\ref{fig:cell-type-number} in Appendix~\ref{sec:details-real-world-experiment} depicts the distribution of cell counts across cell types.

We focus on transfer from NK cells to ILC1 cells. Specifically, we treat ILC1 (tissue-resident innate lymphoid cells) as the \emph{target} and NK (circulating natural killer) as the \emph{source}. Such choice is motivated by both sample size and biological relatedness: ILC1 has $n=552$ cells, whereas NK has $n=5434$ cells, and both belong to the group~1 innate lymphoid cells that share key transcription factors (e.g., T-bet) and cytokines (e.g., IFN-$\gamma$). At the same time, NK cells exhibit broader systemic activity, whereas ILC1s are locally specialized and functionally constrained~\citep{vivier2018innate}. Consequently, the regulatory programs of ILC1s are plausibly contained within those of NK cells, which aligns naturally with the spectral containment assumption and motivates the use of NK as the source for transfer learning to ILC1.

\textbf{Implementation.}
For each source estimator (Lasso, Ridge, OLS, RRR, SRRR, RSSVD, SOFAR), we first fit a source matrix $\widetilde{\RegM}^{(0)}$ on the NK data and then use $\widetilde{\RegM}^{(0)}$ as the source input to \textsc{SMART} to obtain a target estimator $\widehat{\RegM}$. Table~\ref{tab:smart-summary} names each \textsc{SMART} variant by its source estimator. We assess performance over 100 random splits of the ILC1 cells (70\% training, 30\% testing) and, for each split, report the normalized test prediction error
\[
\frac{1}{\sqrt{n_{\mathrm{test}} q}} \left\| \mathbf{X}_{\mathrm{test}} \widehat{\RegM} - \mathbf{Y}_{\mathrm{test}} \right\|_{\mathrm{F}}.
\]

\textbf{Baseline methods.}
We compare \textsc{SMART} to \emph{source-only} and \emph{target-only} versions of the same seven estimators. Source-only methods fit a regression matrix on NK and apply it directly to ILC1 without target adaptation. Target-only methods are trained only on the ILC1 training sample. All methods are evaluated on the same random splits using the same prediction metric.

\textbf{Results.}
Table~\ref{tab:smart-summary} reports prediction errors for all methods. SMART consistently attains the lowest errors, with little variability across source initializations. Target-only methods perform worse, reflecting the difficulty of modeling with limited ILC1 data. Source-only methods are stable but suboptimal, underscoring the need for study-specific adaptation. Overall, SMART not only outperforms both baselines in accuracy but also yields the narrowest interquartile ranges, demonstrating strong robustness and generalization.

\begin{table}[t]
\centering
\begin{tabular}{lccccc}
\toprule
\textbf{Method} & \textbf{Mean} & \textbf{SE} & \textbf{Median} & \textbf{Q1} & \textbf{Q3} \\
\midrule
\multicolumn{6}{l}{\textit{Target-Only Methods}} \\
\midrule
Target-Only (RRR)   & 0.410 & $2.4 \times 10^{-2}$ & 0.307 & 0.259 & 0.478 \\
Target-Only (SRRR)  & 0.449 & $3.4 \times 10^{-2}$ & 0.292 & 0.227 & 0.533 \\
Target-Only (RSSVD) & 0.464 & $3.4 \times 10^{-2}$ & 0.309 & 0.239 & 0.489 \\
Target-Only (SOFAR) & 0.360 & $1.8 \times 10^{-2}$ & 0.297 & 0.267 & 0.355 \\
Target-Only (Lasso) & 0.331 & $3.0 \times 10^{-4}$ & 0.331 & 0.330 & 0.333 \\
Target-Only (OLS)   & 0.432 & $2.0 \times 10^{-3}$ & 0.428 & 0.416 & 0.442 \\
Target-Only (Ridge) & 0.336 & $4.0 \times 10^{-4}$ & 0.336 & 0.332 & 0.338 \\
\midrule
\multicolumn{6}{l}{\textit{Source-Only Methods}} \\
\midrule
Source-Only (RRR)   & 0.331 & $4.0 \times 10^{-4}$ & 0.331 & 0.328 & 0.334 \\
Source-Only (SRRR)  & 0.332 & $4.0 \times 10^{-4}$ & 0.332 & 0.329 & 0.334 \\
Source-Only (RSSVD) & 0.358 & $6.0 \times 10^{-4}$ & 0.358 & 0.355 & 0.362 \\
Source-Only (SOFAR) & 0.382 & $4.0 \times 10^{-4}$ & 0.383 & 0.380 & 0.385 \\
Source-Only (Lasso) & 0.365 & $3.0 \times 10^{-4}$ & 0.365 & 0.363 & 0.367 \\
Source-Only (OLS)   & 0.358 & $2.0 \times 10^{-4}$ & 0.358 & 0.357 & 0.360 \\
Source-Only (Ridge) & 0.361 & $3.0 \times 10^{-4}$ & 0.361 & 0.360 & 0.363 \\
\midrule
\multicolumn{6}{l}{\textit{SMART Methods}} \\
\midrule
SMART (Source: RRR)   & 0.200 & $3.0 \times 10^{-4}$ & 0.200 & 0.198 & 0.202 \\
SMART (Source: SRRR)  & 0.200 & $3.0 \times 10^{-4}$ & 0.200 & 0.198 & 0.202 \\
SMART (Source: RSSVD) & 0.203 & $5.0 \times 10^{-4}$ & 0.202 & 0.200 & 0.204 \\
SMART (Source: SOFAR) & 0.211 & $2.1 \times 10^{-3}$ & 0.204 & 0.201 & 0.215 \\
SMART (Source: Ridge) & 0.203 & $4.0 \times 10^{-4}$ & 0.203 & 0.201 & 0.205 \\
SMART (Source: Lasso) & 0.201 & $5.0 \times 10^{-4}$ & 0.201 & 0.199 & 0.203 \\
SMART (Source: OLS)   & 0.200 & $4.0 \times 10^{-4}$ & 0.200 & 0.198 & 0.202 \\
\bottomrule
\end{tabular}
\caption{Summary statistics of the normalized test prediction error $(n_{\mathrm{test}} q)^{-1/2}\|\mathbf{X}_{\mathrm{test}}\widehat{\RegM} - \mathbf{Y}_{\mathrm{test}}\|_{\mathrm{F}}$ over 100 random 70\%/30\% splits of the ILC1 cells. For SMART, ``Source: METHOD'' indicates the source estimator used to construct the source regression matrix.}
\label{tab:smart-summary}
\end{table}

\newpage
\section{Discussions}
\label{sec:conclusion-smart}

We have introduced in this paper \textsc{SMART}, a spectral transfer framework for low-rank multi-task linear regression that leverages the spectral structure of a related source matrix to improve estimation when target sample sizes are limited. By incorporating source spectral information through structured regularization, \textsc{SMART} enables effective transfer using only a fitted source regression matrix. We have suggested an ADMM-based algorithm for practical optimization of the resulting nonconvex objective and established non-asymptotic error upper bounds together with a minimax lower bound in the noiseless-source regime. Under some mild regularity conditions, these results imply near-minimax Frobenius rates up to logarithmic factors in that regime. In the simulation examples and the multi-modal single-cell application, \textsc{SMART} outperforms the baseline methods and shows robustness to negative transfer. Future works include extensions to the generalized and nonlinear models, inference procedures, and theoretically grounded data-adaptive selection of structural and regularization parameters.

\clearpage

\newpage

\putbib[boxinz-papers]
\end{bibunit}

%\bibliography{boxinz-papers}

\newpage
\appendix

\pagebreak
\begin{bibunit}[plainnat]

\section{Comparison with~\texorpdfstring{\cite{park2025transfer}}{Park et al., 2025}}
\label{sec:example-spectral-vs-nuclear}

In this section, we present a simple example to highlight the difference between our spectral similarity assumptions and the nuclear-norm-based assumption of \citet{park2025transfer}. For clarity of exposition, we restrict attention here to the setting with a single source.

Denote by $\mathbf{C}^* \in \mathbb{R}^{p \times q}$ the target coefficient matrix, and $\mathbf{C}^{(0)} \in \mathbb{R}^{p \times q}$ a source coefficient matrix. We begin by defining the contrast
\[
\mathbf{\Delta}^{(0)} \coloneqq \mathbf{C}^{(0)} - \mathbf{C}^*.
\]
In \citet{park2025transfer}, the key similarity requirement for an informative source is a \emph{bounded nuclear-norm contrast condition} of the form
\begin{equation}
\label{eq:park-nuclear}
\|\mathbf{\Delta}^{(0)}\|_* \leq h_n,
\end{equation}
where $h_n$ is sufficiently small relative to the sample size and the problem dimensions. In effect, Condition~\eqref{eq:park-nuclear} above requires the total singular-value mass of the difference between $\mathbf{C}^{(0)}$ and $\mathbf{C}^*$ to remain uniformly bounded, thereby enforcing global similarity in both singular directions and effect magnitude across domains.

In contrast, our framework (see Assumptions~\ref{assump:relation-target-source}--\ref{assump:spectral-sparstiy} in Section~\ref{sec:problem-setup-smart}) postulates a structured spectral linkage between the target and source. Specifically, we assume that the dominant singular subspaces of the target coefficient matrix are nested within those of the source, and that the associated singular vectors align sparsely when represented in the source’s singular basis. Collectively, these requirements imply that the target signal is concentrated within a low-dimensional set of modules already extracted from the source, while still permitting the singular values—and thus the effect sizes—to vary freely along these shared directions.

\medskip
\noindent\textbf{A simple separating example.}
To make the distinction concrete, let us consider the case of $p = q = 2$ with
\[
\mathbf{C}^{(0)}
= \begin{pmatrix}
2 & 0\\
0 & 1
\end{pmatrix},
\qquad
\mathbf{C}^*
= \begin{pmatrix}
M & 0\\
0 & 0
\end{pmatrix}
= M \mathbf{e}_1 \mathbf{e}_1^\top,
\]
where $M > 0$ may depend on $n$, and $\{\mathbf{e}_1,\mathbf{e}_2\}$ denotes the standard basis of $\mathbb{R}^2$.

\emph{(i) Our spectral assumptions hold.}
First, note that the target matrix $\mathbf{C}^*$ has rank one with singular vectors $(\mathbf{U}^*, \mathbf{V}^*) = (\mathbf{e}_1, \mathbf{e}_1)$, whereas the source matrix $\mathbf{C}^{(0)}$ has singular vectors $(\mathbf{U}^{(0)}, \mathbf{V}^{(0)}) = ([\mathbf{e}_1,\mathbf{e}_2],[\mathbf{e}_1,\mathbf{e}_2])$. Then it follows that 
\[
\mathrm{col}(\mathbf{U}^*) = \mathrm{span}\{\mathbf{e}_1\} \subseteq \mathrm{span}\{\mathbf{e}_1,\mathbf{e}_2\} = \mathrm{col}(\mathbf{U}^{(0)}),
\]
\[
\mathrm{col}(\mathbf{V}^*) = \mathrm{span}\{\mathbf{e}_1\} \subseteq \mathrm{span}\{\mathbf{e}_1,\mathbf{e}_2\} = \mathrm{col}(\mathbf{V}^{(0)}),
\]
so Assumption~\ref{assump:relation-target-source} holds exactly. Moreover, the alignment is maximally sparse: the target singular directions coincide with individual source directions, which is precisely the sparse alignment pattern encoded by Assumption~\ref{assump:spectral-sparstiy}.

\emph{(ii) The nuclear-norm contrast condition fails.}
In contrast, the nuclear-norm condition breaks down in this same example. Indeed, the source-target contrast is given by 
\[
\mathbf{\Delta}^{(0)} = \mathbf{C}^{(0)} - \mathbf{C}^*
= \begin{pmatrix}
2-M & 0\\
0 & 1
\end{pmatrix},
\]
whose singular values are $\bigl|2-M\bigr|$ and $1$. Hence, we have that 
\[
\|\mathbf{\Delta}^{(0)}\|_* = \bigl|2-M\bigr| + 1.
\]
If we take $M$ to be large—for instance, $M \asymp n^\alpha$ for some $\alpha > 0$—then $\|\mathbf{\Delta}^{(0)}\|_*$ can grow without bound. As a result, Condition~\eqref{eq:park-nuclear} \textit{cannot} be satisfied with a ``small'' $h_n$. Hence, this choice violates the bounded nuclear-norm contrast condition in \citet{park2025transfer}, despite the fact that the target signal lies entirely within, and is perfectly aligned with, the singular subspace of the source.

\medskip
\noindent\textbf{Practical interpretation.}
More broadly, this example reflects a common pattern in biomedical applications. One may view $\mathbf{C}^{(0)}$ as encoding two functional modules, such as pathways or protein complexes, under a baseline condition, with both modules exhibiting moderate effect sizes. The target matrix $\mathbf{C}^*$ then represents a new condition, such as a particular tissue type or treatment, in which one module (corresponding to $\mathbf{e}_1$) is strongly up-regulated through a large value of $M$, while the other is suppressed. The underlying module structure---namely, the singular subspace spanned by $\mathbf{e}_1$---is perfectly shared between the source and target, so our spectral containment assumption is \textit{natural} in this setting. At the same time, the large change in effect magnitude along that shared direction inflates the nuclear-norm contrast and causes the bounded-difference condition~\eqref{eq:park-nuclear} to fail. In this sense, our assumptions are \textit{better suited} to settings with conserved low-dimensional structure but substantial, interpretable shifts in effect sizes, a regime in which nuclear-norm-based similarity can be unnecessarily restrictive.

\section{Optimization algorithm details}
\label{sec:opt-details}

In this section, we will provide additional details for solving the SMART objective~\eqref{eq:estimator-obj}. Throughout the section, we abbreviate $\widetilde{\mathbf{U}}^{\mathrm{s}}_{\perp}(r_u)$ and $\widetilde{\mathbf{V}}^{\mathrm{s}}_{\perp}(r_v)$ as $\widetilde{\mathbf{U}}^{\mathrm{s}}_{\perp}$ and $\widetilde{\mathbf{V}}^{\mathrm{s}}_{\perp}$, respectively. With such notation in place, the resulting ADMM-type procedure is summarized in Algorithm~\ref{alg:SMART}. In particular, the $\mathbf{U}$- and $\mathbf{V}$-blocks are updated numerically on the \textit{Stiefel manifold}, while the $\mathbf{D}$-, $\bm{\Omega}_u$-, and $\bm{\Omega}_v$-blocks admit closed-form updates.

\begin{algorithm}[htp]
\caption{SMART algorithm}
\label{alg:SMART}
\begin{algorithmic}[1]
\small
\STATE \textbf{Input:} target rank $r$; structural ranks $r_u$, $r_v$; regularization parameters $\lambda_u$, $\lambda_v$; complementary source bases $\widetilde{\mathbf{U}}^{\mathrm{s}}_{\perp}$, $\widetilde{\mathbf{V}}^{\mathrm{s}}_{\perp}$; penalty scaling factor $\gamma > 1$; maximum number of iterations $t_{\max}$.
\STATE \textbf{Initialize:} primal variables $\mathbf{U}^{(0)}, \mathbf{D}^{(0)}, \mathbf{V}^{(0)}$; auxiliary variables $\bm{\Omega}^{(0)}_u, \bm{\Omega}^{(0)}_v$; dual variables $\bm{\Gamma}^{(0)}_u, \bm{\Gamma}^{(0)}_v$; penalty parameter $\rho^{(0)}$; iteration counter $t \leftarrow 0$.
\WHILE{stopping criteria not met and $t < t_{\max}$}
    \STATE Update $\mathbf{U}$:
    \begin{align*}
     \mathbf{U}^{(t+1)} \leftarrow \arg \min_{ \mathbf{U} : \mathbf{U}^{\top} \mathbf{U} = \mathbf{I}_r } & \Bigg\{ \frac{1}{2 n} \left\Vert \mathbf{Y} - \mathbf{X} \mathbf{U} \mathbf{D}^{(t)} \mathbf{V}^{(t)\top} \right\Vert^2_{\mathrm{F}}  \\
    &  \quad + \left\langle \bm{\Gamma}^{(t)}_u, \widetilde{\mathbf{U}}^{\mathrm{s}\top}_{\perp} \mathbf{U} \mathbf{D}^{(t)} - \bm{\Omega}^{(t)}_u \right\rangle
    + \frac{\rho^{(t)}}{2} \left\Vert \widetilde{\mathbf{U}}^{\mathrm{s}\top}_{\perp} \mathbf{U} \mathbf{D}^{(t)} - \bm{\Omega}^{(t)}_u \right\Vert^2_{\mathrm{F}}\Bigg\}.
    \end{align*}

    \STATE Update $\mathbf{V}$:
    \begin{align*}
     \mathbf{V}^{(t+1)} \leftarrow \arg \min_{ \mathbf{V} : \mathbf{V}^{\top} \mathbf{V} = \mathbf{I}_r } & \Bigg\{ \frac{1}{2 n} \left\Vert \mathbf{Y} - \mathbf{X} \mathbf{U}^{(t+1)} \mathbf{D}^{(t)} \mathbf{V}^{\top} \right\Vert^2_{\mathrm{F}}  \\
    &  \quad + \left\langle \bm{\Gamma}^{(t)}_v, \widetilde{\mathbf{V}}^{\mathrm{s}\top}_{\perp} \mathbf{V} \mathbf{D}^{(t)} - \bm{\Omega}^{(t)}_v \right\rangle
    + \frac{\rho^{(t)}}{2} \left\Vert \widetilde{\mathbf{V}}^{\mathrm{s}\top}_{\perp} \mathbf{V} \mathbf{D}^{(t)} - \bm{\Omega}^{(t)}_v \right\Vert^2_{\mathrm{F}} \Bigg\}.
    \end{align*}

    \STATE Update $\mathbf{D}$:
     \begin{align*}
    \mathbf{D}^{(t+1)} \leftarrow \arg \min_{\mathbf{D}=\diag(d_1,\ldots,d_r),\ d_k \ge 0} & \Bigg\{ \frac{1}{2 n} \left\Vert \mathbf{Y} - \mathbf{X}  \mathbf{U}^{(t+1)} \mathbf{D} \mathbf{V}^{{(t+1)} \top}
    \right\Vert^2_{\mathrm{F}}   \\  &
    \quad + \left\langle \bm{\Gamma}^{(t)}_u, \widetilde{\mathbf{U}}^{\mathrm{s}  \top}_{\perp} \mathbf{U}^{(t+1)} \mathbf{D} - \bm{\Omega}^{(t)}_u \right\rangle
    + \left\langle \bm{\Gamma}^{(t)}_v, \widetilde{\mathbf{V}}^{\mathrm{s}  \top}_{\perp} \mathbf{V}^{(t+1)} \mathbf{D} - \bm{\Omega}^{(t)}_v  \right\rangle   \\
    &   \quad + \frac{\rho^{(t)}}{2} \left\Vert \widetilde{\mathbf{U}}^{\mathrm{s}  \top}_{\perp} \mathbf{U}^{(t+1)} \mathbf{D} - \bm{\Omega}^{(t)}_u \right\Vert^2_{\mathrm{F}}
    + \frac{\rho^{(t)}}{2} \left\Vert \widetilde{\mathbf{V}}^{\mathrm{s}  \top}_{\perp} \mathbf{V}^{(t+1)} \mathbf{D} - \bm{\Omega}^{(t)}_v \right\Vert^2_{\mathrm{F}} \Bigg\}.
    \end{align*}

    \STATE Update $\bm{\Omega}_u$:
    \[
    \bm{\Omega}^{(t+1)}_u \leftarrow \arg \min_{\bm{\Omega}_u} \left\{
        \frac{\rho^{(t)}}{2} \left\Vert \bm{\Omega}_u - \widetilde{\mathbf{U}}^{\mathrm{s}\top}_{\perp} \mathbf{U}^{(t+1)} \mathbf{D}^{(t+1)} \right\Vert^2_{\mathrm{F}}
        - \left\langle \bm{\Gamma}^{(t)}_u, \bm{\Omega}_u \right\rangle
        + \lambda_u \left\vert \bm{\Omega}_u \right\vert_1
    \right\}.
    \]

    \STATE Update $\bm{\Omega}_v$:
    \[
    \bm{\Omega}^{(t+1)}_v \leftarrow \arg \min_{\bm{\Omega}_v} \left\{
        \frac{\rho^{(t)}}{2} \left\Vert \bm{\Omega}_v - \widetilde{\mathbf{V}}^{\mathrm{s}\top}_{\perp} \mathbf{V}^{(t+1)} \mathbf{D}^{(t+1)} \right\Vert^2_{\mathrm{F}}
        - \left\langle \bm{\Gamma}^{(t)}_v, \bm{\Omega}_v \right\rangle
        + \lambda_v \left\vert \bm{\Omega}_v \right\vert_1
    \right\}.
    \]

    \STATE Update dual variables:
    \begin{align*}
    \bm{\Gamma}^{(t+1)}_u &\leftarrow \bm{\Gamma}^{(t)}_u + \rho^{(t)} \left( \widetilde{\mathbf{U}}^{\mathrm{s}\top}_{\perp} \mathbf{U}^{(t+1)} \mathbf{D}^{(t+1)} - \bm{\Omega}^{(t+1)}_u \right), \\
    \bm{\Gamma}^{(t+1)}_v &\leftarrow \bm{\Gamma}^{(t)}_v + \rho^{(t)} \left( \widetilde{\mathbf{V}}^{\mathrm{s}\top}_{\perp} \mathbf{V}^{(t+1)} \mathbf{D}^{(t+1)} - \bm{\Omega}^{(t+1)}_v \right).
    \end{align*}

    \STATE Update penalty parameter: $\rho^{(t+1)} \leftarrow \gamma \rho^{(t)}$.
    \STATE Increment iteration counter: $t \leftarrow t+1$.
\ENDWHILE

\STATE \textbf{Output:}
\[
\widehat{\mathbf{U}} \leftarrow \mathbf{U}^{(t)}, \quad
\widehat{\mathbf{V}} \leftarrow \mathbf{V}^{(t)}, \quad
\widehat{\mathbf{D}} \leftarrow \mathbf{D}^{(t)}, \quad
\widehat{\bm{\Omega}}_u \leftarrow \bm{\Omega}^{(t)}_u, \quad
\widehat{\bm{\Omega}}_v \leftarrow \bm{\Omega}^{(t)}_v,
\widehat{\bm{\Gamma}}_u \leftarrow \bm{\Gamma}^{(t)}_u, \quad
\widehat{\bm{\Gamma}}_v \leftarrow \bm{\Gamma}^{(t)}_v.
\]
\end{algorithmic}
\end{algorithm}

When $r_u = p$ or $r_v = q$, the corresponding complementary basis has zero rows, so the associated $\bm{\Omega}$- and $\bm{\Gamma}$-blocks are omitted. With that convention fixed, we now describe each block update in turn.

We begin with the $\mathbf{U}$-update, which is equivalent to
\begin{align*}
\mathbf{U}^{(t+1)} \leftarrow & \arg \min_{\mathbf{U}} \left\{
\frac{1}{2n} \left\Vert \mathbf{X} \mathbf{U} \mathbf{D}^{(t)} \right\Vert^2_{\mathrm{F}}
+ \frac{\rho^{(t)}}{2} \left\Vert \widetilde{\mathbf{U}}^{\mathrm{s}\top}_{\perp} \mathbf{U} \mathbf{D}^{(t)} \right\Vert^2_{\mathrm{F}}
+ \mathrm{tr} \left( \mathbf{U}^{\top} \mathbf{B}^{(t)}_u \right)
\right\} \\
& \text{s.t. } \mathbf{U}^{\top} \mathbf{U} = \mathbf{I}_r,
\end{align*}
where
\[
\mathbf{B}^{(t)}_u = \widetilde{\mathbf{U}}^{\mathrm{s}}_{\perp} \bm{\Gamma}^{(t)}_u \mathbf{D}^{(t)}
- \frac{1}{n} \mathbf{X}^{\top} \mathbf{Y} \mathbf{V}^{(t)} \mathbf{D}^{(t)}
- \rho^{(t)} \widetilde{\mathbf{U}}^{\mathrm{s}}_{\perp} \bm{\Omega}^{(t)}_u \mathbf{D}^{(t)}.
\]
This is a smooth quadratic program on the Stiefel manifold, and we solve it approximately using the Riemannian conjugate gradient method implemented in \texttt{Pymanopt}~\citep{townsend2016pymanopt}.

Next, let us consider the $\mathbf{V}$-update. After expanding the loss and using $\mathbf{V}^{\top}\mathbf{V} = \mathbf{I}_r$, term $\| \mathbf{X} \mathbf{U}^{(t+1)} \mathbf{D}^{(t)} \mathbf{V}^{\top} \|_{\mathrm{F}}^2$ is constant with respect to $\mathbf{V}$. Accordingly, the subproblem reduces to
\[
\mathbf{V}^{(t+1)} \leftarrow \arg \min_{\mathbf{V}} \left\{
\frac{\rho^{(t)}}{2} \left\Vert \widetilde{\mathbf{V}}^{\mathrm{s}\top}_{\perp} \mathbf{V} \mathbf{D}^{(t)} \right\Vert^2_{\mathrm{F}}
+ \mathrm{tr} \left( \mathbf{V}^{\top} \mathbf{B}^{(t)}_v \right)
\right\}
\quad \text{s.t. } \mathbf{V}^{\top} \mathbf{V} = \mathbf{I}_r,
\]
where
\[
\mathbf{B}^{(t)}_v = \widetilde{\mathbf{V}}^{\mathrm{s}}_{\perp} \bm{\Gamma}^{(t)}_v \mathbf{D}^{(t)}
- \frac{1}{n} \mathbf{Y}^{\top} \mathbf{X} \mathbf{U}^{(t+1)} \mathbf{D}^{(t)}
- \rho^{(t)} \widetilde{\mathbf{V}}^{\mathrm{s}}_{\perp} \bm{\Omega}^{(t)}_v \mathbf{D}^{(t)}.
\]
As in the previous step, this problem can be solved numerically with \texttt{Pymanopt}.

We now turn to the $\mathbf{D}$-update. Denote by 
\[
\widetilde{\bm{\Omega}}_u \coloneqq \bm{\Omega}^{(t)}_u - \rho^{(t)^{-1}} \bm{\Gamma}^{(t)}_u,
\qquad
\widetilde{\bm{\Omega}}_v \coloneqq \bm{\Omega}^{(t)}_v - \rho^{(t)^{-1}} \bm{\Gamma}^{(t)}_v.
\]
Then the subproblem becomes
\begin{align*}
\mathbf{D}^{(t+1)} \leftarrow \arg \min_{\mathbf{D}=\diag(d_1,\ldots,d_r),\ d_k \ge 0} \Big\{
& \frac{1}{2n} \left\Vert \mathbf{Y} - \mathbf{X} \mathbf{U}^{(t+1)} \mathbf{D} \mathbf{V}^{(t+1)\top} \right\Vert^2_{\mathrm{F}} \\
& + \frac{\rho^{(t)}}{2} \left\Vert \widetilde{\mathbf{U}}^{\mathrm{s}\top}_{\perp} \mathbf{U}^{(t+1)} \mathbf{D} - \widetilde{\bm{\Omega}}_u \right\Vert^2_{\mathrm{F}} \\
& + \frac{\rho^{(t)}}{2} \left\Vert \widetilde{\mathbf{V}}^{\mathrm{s}\top}_{\perp} \mathbf{V}^{(t+1)} \mathbf{D} - \widetilde{\bm{\Omega}}_v \right\Vert^2_{\mathrm{F}}
\Big\}.
\end{align*}
To obtain a closed-form solution, let us write
\[
\mathbf{P} \coloneqq \mathbf{X} \mathbf{U}^{(t+1)} = [\mathbf{p}_1, \ldots, \mathbf{p}_r], \qquad
\mathbf{Q} \coloneqq \mathbf{V}^{(t+1)} = [\mathbf{q}_1, \ldots, \mathbf{q}_r],
\]
\[
\mathbf{A}_u \coloneqq \widetilde{\mathbf{U}}^{\mathrm{s}\top}_{\perp} \mathbf{U}^{(t+1)} = [\mathbf{a}_{u,1}, \ldots, \mathbf{a}_{u,r}], \qquad
\mathbf{A}_v \coloneqq \widetilde{\mathbf{V}}^{\mathrm{s}\top}_{\perp} \mathbf{V}^{(t+1)} = [\mathbf{a}_{v,1}, \ldots, \mathbf{a}_{v,r}],
\]
and denote by $\tilde{\bm{\omega}}_{u,k}$ and $\tilde{\bm{\omega}}_{v,k}$ the $k$th columns of $\widetilde{\bm{\Omega}}_u$ and $\widetilde{\bm{\Omega}}_v$, respectively. Since $\mathbf{D}$ is diagonal and $\mathbf{Q}$ has orthonormal columns, the objective decouples across $d_1,\ldots,d_r$. We define
\[
\alpha_k \coloneqq \frac{1}{n} \| \mathbf{p}_k \|_2^2, \qquad
\beta_k \coloneqq \frac{1}{n} \mathbf{p}_k^{\top} \mathbf{Y} \mathbf{q}_k,
\]
\[
\gamma_{u,k} \coloneqq \| \mathbf{a}_{u,k} \|_2^2, \qquad
\delta_{u,k} \coloneqq \mathbf{a}_{u,k}^{\top} \tilde{\bm{\omega}}_{u,k},
\]
\[
\gamma_{v,k} \coloneqq \| \mathbf{a}_{v,k} \|_2^2, \qquad
\delta_{v,k} \coloneqq \mathbf{a}_{v,k}^{\top} \tilde{\bm{\omega}}_{v,k}.
\]
Then the $k$th coordinate has objective $(1/2)c_k d_k^2 - b_k d_k$, where
\[
c_k \coloneqq \alpha_k + \rho^{(t)} (\gamma_{u,k} + \gamma_{v,k}), \qquad
b_k \coloneqq \beta_k + \rho^{(t)} (\delta_{u,k} + \delta_{v,k}).
\]
Hence, we have that 
\[
d_k^{(t+1)} =
\begin{cases}
\max \{ 0, b_k / c_k \}, & c_k > 0, \\
0, & c_k = 0,
\end{cases}
\]
for $k = 1, \ldots, r$.

The $\bm{\Omega}_u$- and $\bm{\Omega}_v$-updates are proximal steps. More precisely, we have that 
\[
\bm{\Omega}^{(t+1)}_u \leftarrow \arg \min_{\bm{\Omega}_u} \left\{
\frac{\rho^{(t)}}{2} \left\Vert \bm{\Omega}_u - \widetilde{\mathbf{U}}^{\mathrm{s}\top}_{\perp} \mathbf{U}^{(t+1)} \mathbf{D}^{(t+1)} - \rho^{(t)^{-1}} \bm{\Gamma}^{(t)}_u \right\Vert^2_{\mathrm{F}}
+ \lambda_u |\bm{\Omega}_u|_1
\right\},
\]
whose solution is obtained entrywise by soft-thresholding
\[
\bm{\Omega}^{(t+1)}_{u,ij} =
\mathcal{ST}_{\lambda_u / \rho^{(t)}} \left(
\left[
\widetilde{\mathbf{U}}^{\mathrm{s}\top}_{\perp} \mathbf{U}^{(t+1)} \mathbf{D}^{(t+1)}
+ \rho^{(t)^{-1}} \bm{\Gamma}^{(t)}_u
\right]_{ij}
\right).
\]
Similarly, we have that 
\[
\bm{\Omega}^{(t+1)}_{v,ij} =
\mathcal{ST}_{\lambda_v / \rho^{(t)}} \left(
\left[
\widetilde{\mathbf{V}}^{\mathrm{s}\top}_{\perp} \mathbf{V}^{(t+1)} \mathbf{D}^{(t+1)}
+ \rho^{(t)^{-1}} \bm{\Gamma}^{(t)}_v
\right]_{ij}
\right).
\]
Here, the soft-thresholding operator is defined as 
\[
\mathcal{ST}_{\kappa}(a) =
\begin{cases}
a - \kappa, & a > \kappa, \\
0, & |a| \leq \kappa, \\
a + \kappa, & a < -\kappa.
\end{cases}
\]

\textbf{Stopping criterion.}
Finally, we adopt a practical convergence criterion based on the primal feasibility and blockwise stationarity. The residuals are evaluated before updating the penalty parameter for the next iteration.

To begin, let us define the primal residuals
\[
\mathbf{R}^{(t+1)}_{\mathrm{p,u}} \coloneqq \widetilde{\mathbf{U}}^{\mathrm{s}\top}_{\perp} \mathbf{U}^{(t+1)} \mathbf{D}^{(t+1)} - \bm{\Omega}^{(t+1)}_u,
\qquad
\mathbf{R}^{(t+1)}_{\mathrm{p,v}} \coloneqq \widetilde{\mathbf{V}}^{\mathrm{s}\top}_{\perp} \mathbf{V}^{(t+1)} \mathbf{D}^{(t+1)} - \bm{\Omega}^{(t+1)}_v.
\]
Denote by 
\[
\eta_u^{(t+1)} \coloneqq
\begin{cases}
\displaystyle \frac{\| \mathbf{R}^{(t+1)}_{\mathrm{p,u}} \|_{\mathrm{F}}}{\sqrt{(p-r_u)r}}, & r_u < p, \\
0, & r_u = p,
\end{cases}
\qquad
\eta_v^{(t+1)} \coloneqq
\begin{cases}
\displaystyle \frac{\| \mathbf{R}^{(t+1)}_{\mathrm{p,v}} \|_{\mathrm{F}}}{\sqrt{(q-r_v)r}}, & r_v < q, \\
0, & r_v = q.
\end{cases}
\]
We then aggregate these two quantities through
\[
r^{(t+1)}_{\mathrm{primal}} = \frac{1}{2} \eta_u^{(t+1)} + \frac{1}{2} \eta_v^{(t+1)}.
\]

Since $\mathbf{D}$, $\bm{\Omega}_u$, and $\bm{\Omega}_v$ are updated in closed form, we monitor first-order stationarity only for the manifold-constrained $\mathbf{U}$- and $\mathbf{V}$-blocks. To this end, we define the current block objectives
\begin{align*}
f^{(t+1)}(\mathbf{U}) & =
\frac{1}{2n} \left\Vert \mathbf{Y} - \mathbf{X} \mathbf{U} \mathbf{D}^{(t+1)} \mathbf{V}^{(t+1)\top} \right\Vert^2_{\mathrm{F}} \\
& \qquad + \left\langle \bm{\Gamma}^{(t+1)}_u, \widetilde{\mathbf{U}}^{\mathrm{s}\top}_{\perp} \mathbf{U} \mathbf{D}^{(t+1)} - \bm{\Omega}^{(t+1)}_u \right\rangle
+ \frac{\rho^{(t)}}{2} \left\Vert \widetilde{\mathbf{U}}^{\mathrm{s}\top}_{\perp} \mathbf{U} \mathbf{D}^{(t+1)} - \bm{\Omega}^{(t+1)}_u \right\Vert^2_{\mathrm{F}}, \\
g^{(t+1)}(\mathbf{V}) & =
\frac{1}{2n} \left\Vert \mathbf{Y} - \mathbf{X} \mathbf{U}^{(t+1)} \mathbf{D}^{(t+1)} \mathbf{V}^{\top} \right\Vert^2_{\mathrm{F}} \\
& \qquad + \left\langle \bm{\Gamma}^{(t+1)}_v, \widetilde{\mathbf{V}}^{\mathrm{s}\top}_{\perp} \mathbf{V} \mathbf{D}^{(t+1)} - \bm{\Omega}^{(t+1)}_v \right\rangle
+ \frac{\rho^{(t)}}{2} \left\Vert \widetilde{\mathbf{V}}^{\mathrm{s}\top}_{\perp} \mathbf{V} \mathbf{D}^{(t+1)} - \bm{\Omega}^{(t+1)}_v \right\Vert^2_{\mathrm{F}}.
\end{align*}
Let $\mathcal{M}_{p,r} = \{ \mathbf{U} \in \mathbb{R}^{p \times r} : \mathbf{U}^{\top} \mathbf{U} = \mathbf{I}_r \}$ and $\mathcal{M}_{q,r} = \{ \mathbf{V} \in \mathbb{R}^{q \times r} : \mathbf{V}^{\top} \mathbf{V} = \mathbf{I}_r \}$. We define the stationarity residual as 
\[
r^{(t+1)}_{\mathrm{stat}} =
\frac{1}{2} \frac{ \left\Vert \nabla_{\mathcal{M}_{p,r}} f^{(t+1)}(\mathbf{U}^{(t+1)}) \right\Vert_{\mathrm{F}} }{ \sqrt{pr} }
+
\frac{1}{2} \frac{ \left\Vert \nabla_{\mathcal{M}_{q,r}} g^{(t+1)}(\mathbf{V}^{(t+1)}) \right\Vert_{\mathrm{F}} }{ \sqrt{qr} },
\]
where $\nabla_{\mathcal{M}_{p,r}} f(\mathbf{U})$ denotes the Riemannian gradient of $f$ at $\mathbf{U}$.

In summary, for a user-specified tolerance $\varepsilon > 0$, we will terminate the algorithm once
\[
r^{(t+1)}_{\mathrm{primal}} \leq \varepsilon
\qquad \text{and} \qquad
r^{(t+1)}_{\mathrm{stat}} \leq \varepsilon.
\]

\section{Details of the multi-modal single-cell experiment}
\label{sec:details-real-world-experiment}

This section provides additional details for the multi-modal single-cell analysis reported in Section~\ref{sec:real-world-data-smart}. Figure~\ref{fig:cell-type-number} displays the cell counts across annotated cell types. We next summarize the preprocessing pipeline, the implementation details, and the baseline comparisons used in this experiment.

\textbf{Data preprocessing.}
We extracted the gene expression (GEX) and protein expression (ADT) matrices from the raw dataset, which contains $13{,}953$ genes and $134$ proteins measured across $90{,}261$ cells. For GEX, we applied the library-size normalization, log transformation, selection of the top $3{,}000$ highly variable genes, and standardization. For ADT, we applied the centered log-ratio normalization. We then restricted attention to NK and ILC1 cells, which yielded $5{,}434$ source cells and $552$ target cells. After removing genes with zero variance in either group, $2{,}998$ GEX features and all $134$ ADT markers remained. Finally, we used marginal screening to define a common set of 248 gene predictors, and this predictor set was used throughout the regression analysis.

\textbf{Implementation.}
To evaluate SMART, we first constructed the noisy source matrix $\widetilde{\RegM}^{(0)}$ by applying several baseline estimators to the NK (source) data. Specifically, we considered seven methods: the Lasso, Ridge, OLS, RRR, SRRR, RSSVD, and SOFAR. For each method, the resulting estimate $\widetilde{\RegM}^{(0)}$ was passed to SMART to produce a target regression estimate $\widehat{\RegM}$. The model was trained on a random $70\%$ subset of the ILC1 (target) cells and evaluated on the remaining $30\%$ held-out observations. We measured prediction accuracy using the Frobenius norm of the test error
\[
\frac{1}{\sqrt{n_{\mathrm{test}} q}} \left\| \mathbf{X}_{\mathrm{test}} \widehat{\RegM} - \mathbf{Y}_{\mathrm{test}} \right\|_{\mathrm{F}},
\]
where $n_{\mathrm{test}}$ denotes the number of test observations and $q$ denotes the number of output variables. To improve robustness, we repeated this procedure $100$ times using different train/test splits generated from a fixed list of random seeds. For each method, we report the mean, standard deviation, median, and interquartile range (Q1 and Q3) of the prediction errors across repetitions.

\begin{figure}[t]
\centering
\includegraphics[width=\textwidth]{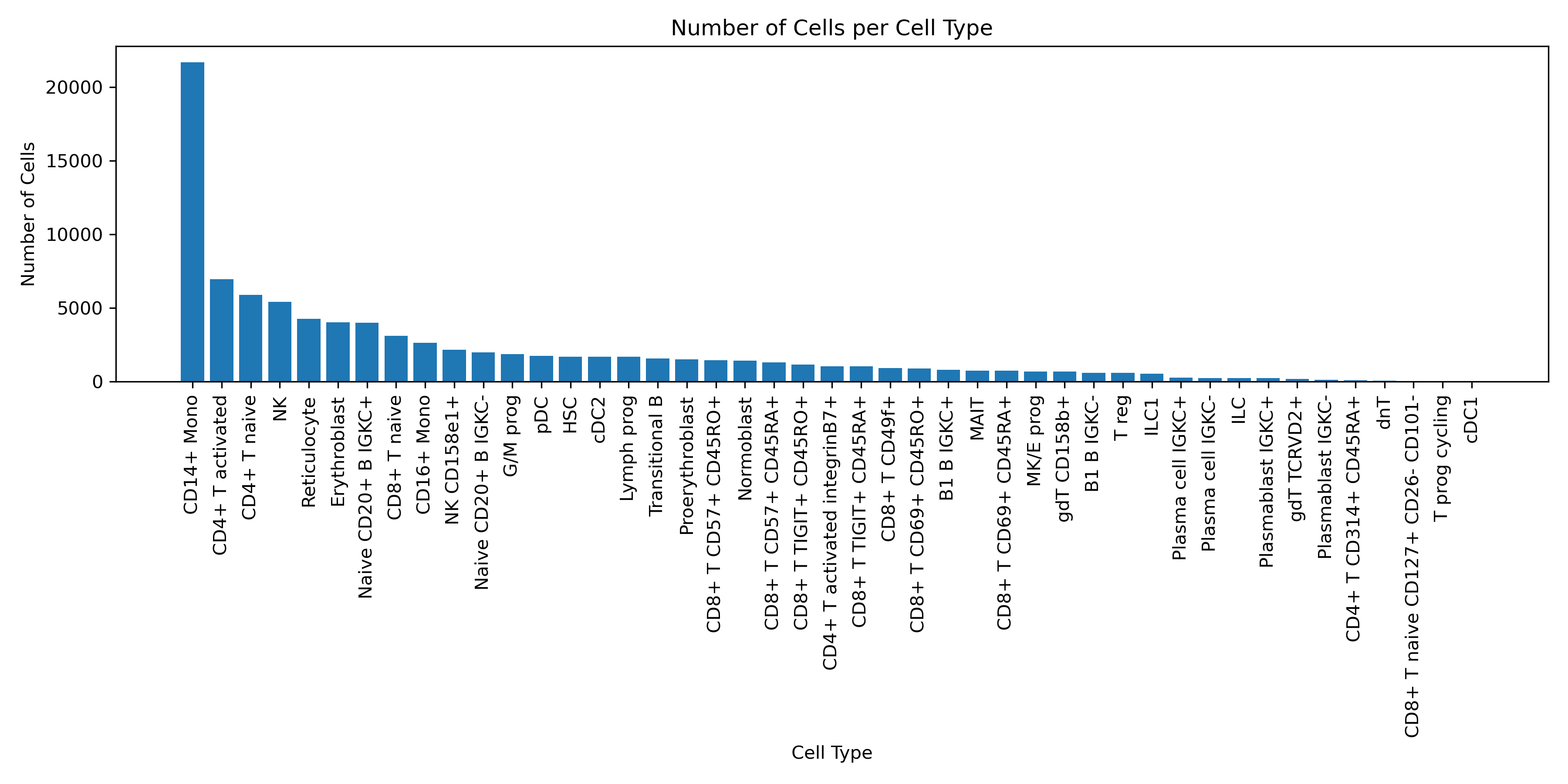}
\caption{Cell counts for each annotated cell type in the multi-modal single-cell data.}
\label{fig:cell-type-number}
\end{figure}

\textbf{Baseline methods.}
To place the performance of SMART in context, we compared it to a collection of baseline methods divided into two groups: \emph{source-only} and \emph{target-only}. The source-only methods applied the regression matrix estimated from the NK data directly to the ILC1 test set, without any retraining. In contrast, the target-only methods applied the same family of estimators---the Lasso, Ridge, OLS, RRR, SRRR, RSSVD, and SOFAR---only to the ILC1 data, without using the source information. For comparability, each method was evaluated on the same 70\%/30\% train/test splits and with the same error metric used for SMART. The source-only methods were evaluated directly on the test set, whereas the target-only methods were fit on the target training split before evaluation. As a final step, each experiment was repeated over $100$ random splits, and the reported summaries were computed across those repetitions.

\section{Proof of Proposition~\ref{proposition:lasso}}
\label{sec:proof-proposition:lasso}

For any matrix $\mathbf{A}$, let $\mathbf{A}_{:\,j}$ be its $j$th column. We consider the matrix Lasso estimator
\begin{equation*}
\widehat{\RegM}^{\mathrm{lasso}} \in \arg \min_{\RegM \in \mathbb{R}^{p \times q}}
\left\{
\frac{1}{2n} \left\Vert \mathbf{Y} - \mathbf{X}\RegM \right\Vert_{\mathrm{F}}^2
+ \lambda_0 \left\vert \RegM \right\vert_1
\right\}.
\end{equation*}
Since the objective is separable across columns, the $j$th column of $\widehat{\RegM}^{\mathrm{lasso}}$ can be computed as
\begin{equation*}
\widehat{\RegM}^{\mathrm{lasso}}_{:\,j} \in \arg \min_{\bm{\beta} \in \mathbb{R}^p}
\left\{
\frac{1}{2n} \left\Vert \mathbf{Y}_{:\,j} - \mathbf{X}\bm{\beta} \right\Vert_2^2
+ \lambda_0 \left\Vert \bm{\beta} \right\Vert_1
\right\},
\qquad 1 \le j \le q.
\end{equation*}

For each $1 \le j \le q$, recall that
\[
s_j \coloneqq \bigl\vert \RegM^*_{:\,j} \bigr\vert_0,
\qquad
s_{\max} \coloneqq \max_{1 \le j \le q} s_j,
\qquad
s_{*,1,2} \coloneqq \left( \sum_{j=1}^q s_j^2 \right)^{1/2}.
\]
Let us also define
\[
\delta_n^2 \coloneqq \frac{c_0^2 - 2}{4} \cdot \frac{\log p}{n}
+ \frac{c_0^2}{4} \cdot \frac{\log q}{n},
\qquad
\lambda_0 \coloneqq 4 \sqrt{\nu \alpha_{\max}}
\left( \sqrt{\frac{\log p}{n}} + \delta_n \right),
\]
where $\nu$ is the constant introduced in Assumption~\ref{assump:constrained-norm-design}(i).

Since the rows of $\mathbf{E}$ are independent and identically distributed (i.i.d.)~Gaussian with covariance matrix $\bm{\Sigma}$, the noise vector in the $j$th column model satisfies that 
\[
\mathbf{E}_{:\,j} \sim \mathcal{N}(0, \Sigma_{jj}\mathbf{I}_n),
\qquad
\Sigma_{jj} \le \|\bm{\Sigma}\|_2 \le \alpha_{\max}.
\]
Accordingly, the common choice $\lambda_0$ dominates the column-specific penalty level required by~\citet[Corollary 9.26]{wainwright2019high}. Moreover, it follows from the elementary bound $(a+b)^2 \le 2a^2 + 2b^2$ that 
\begin{equation*}
\lambda_0^2
=
16 \nu \alpha_{\max}
\left( \sqrt{\frac{\log p}{n}} + \delta_n \right)^2
\le
32 \nu \alpha_{\max}
\left( \frac{\log p}{n} + \delta_n^2 \right).
\end{equation*}
Then it holds that 
\begin{equation*}
\lambda_0^2
\le
32 \nu \alpha_{\max}
\left\{
\frac{\log p}{n}
+
\left( \frac{c_0^2 - 2}{4} \right)\frac{\log p}{n}
+
\left( \frac{c_0^2}{4} \right)\frac{\log q}{n}
\right\}.
\end{equation*}
Using condition~\eqref{eq:n-requir-lasso} and the bound $s_j \le s_{\max}$ for all $1 \le j \le q$, it follows that
\begin{equation*}
s_j \left( \lambda_0^2 + \frac{\log p}{n} \right)
<
\frac{4\rho_2^2}{9} \wedge \frac{\rho_2}{64\rho_1}
\qquad \text{for all } 1 \le j \le q.
\end{equation*}
Hence, an application of \citet[Corollary 9.26]{wainwright2019high} yields that for each $j \in [q]$,
\begin{equation}
\label{eq:lasso-proof-each-column}
\left\Vert \widehat{\RegM}^{\mathrm{lasso}}_{:\,j} - \RegM^*_{:\,j} \right\Vert_2^2
\le
\frac{9 s_j \lambda_0^2}{4 \rho_2^2},
\qquad
\left\Vert \widehat{\RegM}^{\mathrm{lasso}}_{:\,j} - \RegM^*_{:\,j} \right\Vert_1
\le
\frac{6 s_j \lambda_0}{\rho_2},
\end{equation}
with probability at least $1 - 2 e^{-2n\delta_n^2}$.

A union bound now shows that \eqref{eq:lasso-proof-each-column} holds simultaneously for all $j \in [q]$ with probability at least
\[
1 - 2q e^{-2n\delta_n^2}
=
1 - 2(pq)^{1 - c_0^2/2}.
\]
On this event, summing the squared $\ell_2$-errors across columns leads to 
\begin{equation*}
\left\Vert \widehat{\RegM}^{\mathrm{lasso}} - \RegM^* \right\Vert_{\mathrm{F}}^2
=
\sum_{j=1}^q
\left\Vert \widehat{\RegM}^{\mathrm{lasso}}_{:\,j} - \RegM^*_{:\,j} \right\Vert_2^2
\le
\frac{9\lambda_0^2}{4\rho_2^2}
\sum_{j=1}^q s_j
=
\frac{9\lambda_0^2 s_0}{4\rho_2^2}
\lesssim
\frac{\alpha_{\max} s_0 \log(pq)}{n}.
\end{equation*}
Taking square roots, we can deduce that 
\begin{equation*}
\left\Vert \widehat{\RegM}^{\mathrm{lasso}} - \RegM^* \right\Vert_{\mathrm{F}}
\lesssim
\sqrt{\frac{\alpha_{\max} s_0 \log(pq)}{n}}.
\end{equation*}

For the mixed norm $\|\cdot\|_{1,2}$, we can instead aggregate the columnwise $\ell_1$-bounds in $\ell_2$
\begin{equation*}
\left\Vert \widehat{\RegM}^{\mathrm{lasso}} - \RegM^* \right\Vert_{1,2}
=
\left(
\sum_{j=1}^q
\left\Vert \widehat{\RegM}^{\mathrm{lasso}}_{:\,j} - \RegM^*_{:\,j} \right\Vert_1^2
\right)^{1/2}
\le
\frac{6\lambda_0}{\rho_2}
\left( \sum_{j=1}^q s_j^2 \right)^{1/2}
=
\frac{6\lambda_0 s_{*,1,2}}{\rho_2}.
\end{equation*}
Therefore, in view of  $\lambda_0 \asymp \sqrt{\alpha_{\max}\log(pq)/n}$, we can obtain that
\begin{equation*}
\left\Vert \widehat{\RegM}^{\mathrm{lasso}} - \RegM^* \right\Vert_{1,2}
\lesssim
s_{*,1,2}\sqrt{\frac{\alpha_{\max}\log(pq)}{n}}.
\end{equation*}
This completes the proof of Proposition~\ref{proposition:lasso}.

\section{Proof of Theorem~\ref{thm:spectral-trans}}
\label{sec:tech-details-smart}

This section contains the complete proof of Theorem~\ref{thm:spectral-trans}. To streamline the arguments, let us first introduce some notation that will be used throughout. We denote the structured factor matrices and their estimators as
\[
\mathbf{A}^* = \mathbf{U}^* \mathbf{D}^*,
\quad
\mathbf{B}^* = \mathbf{V}^* \mathbf{D}^*,
\quad
\widehat{\mathbf{A}} = \widehat{\mathbf{U}} \widehat{\mathbf{D}},
\quad
\text{and}
\quad
\widehat{\mathbf{B}} = \widehat{\mathbf{V}} \widehat{\mathbf{D}},
\]
respectively. Define the estimation error of the regression matrix and its structured components as 
\begin{equation}
\label{eq:AB-def}
\widehat{\bm{\Delta}} = \widehat{\RegM} - \RegM^*,
\quad
\widehat{\bm{\Delta}}^a = \widehat{\mathbf{A}} - \mathbf{A}^*,
\quad
\widehat{\bm{\Delta}}^b = \widehat{\mathbf{B}} - \mathbf{B}^*,
\quad
\widehat{\bm{\Delta}}^d = \widehat{\mathbf{D}} - \mathbf{D}^*,
\end{equation}
and write $\widehat{\bm{\Delta}}^{\mathrm{init}} = \widehat{\RegM}^{\mathrm{init}} - \RegM^*$ for the error of the initial estimator. We further define the generalized inverse diagonal matrix
\[
\widehat{\mathbf{D}}^- = \operatorname{diag}(\widehat{d}_1^-, \ldots, \widehat{d}_r^-)
\quad \text{with} \quad
\widehat{d}_j^- = \widehat{d}_j^{-1} \cdot \mathds{1} \{ \widehat{d}_j > 0 \} 
\]
for all $1 \leq j \leq r$.

\subsection{Some technical lemmas}

In this subsection, we will collect several technical lemmas that are used repeatedly in the main arguments. We start with a perturbation result that guarantees the sign consistency between the estimated and true singular vectors.
\begin{lemma}[Sign consistency]
\label{lemma:sign-consistency}
Let
\[
\RegM = \mathbf{U} \mathbf{D} \mathbf{V}^{\top}
= \sum_{k=1}^{r} d_k \mathbf{u}_k \mathbf{v}_k^{\top},
\qquad
\RegM^{*} = \mathbf{U}^{*} \mathbf{D}^{*} \mathbf{V}^{* \top}
= \sum_{k=1}^{r} d_k^* \mathbf{u}_k^* \mathbf{v}_k^{*\top}
\]
be rank-$r$ matrices, with singular values ordered as
\(d_1 \ge \cdots \ge d_r > 0\) and \(d_1^* \ge \cdots \ge d_r^* > 0\).
Assume that the squared singular values of $\RegM^{*}$ satisfy the separation condition
\begin{equation*}
d^{*2}_{j-1} - d^{*2}_j \geq \tau d^{*2}_{j-1}
\quad \text{for all } 2 \leq j \leq r,
\end{equation*}
for some universal constant $0 < \tau < 1$. Fix any index $1 \leq j \leq r$. If the perturbation satisfies that 
\begin{equation}
\label{eq:sign-consistency-condition}
\left\Vert \RegM - \RegM^{*} \right\Vert_2 \leq \frac{\tau}{6 \sqrt{5}} \cdot \left( \frac{d^{*}_j}{\sqrt{r}} \right) \cdot \left( \frac{d^{*}_j}{d^{*}_1} \right)^{3/2},
\end{equation}
the $j$th pair of singular vectors has aligned sign
\begin{equation}
\label{eq:sign-consistent-equation}
\left( \mathbf{u}_j^{\top} \mathbf{u}^{*}_j \right) \cdot \left( \mathbf{v}_j^{\top} \mathbf{v}^{*}_j \right) > 0.
\end{equation}

Consequently, if condition~\eqref{eq:sign-consistency-condition} holds for $j=r$, the sign alignment in~\eqref{eq:sign-consistent-equation} holds uniformly for all $j = 1, \dots, r$.
\end{lemma}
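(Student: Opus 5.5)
We argue through the Gram matrices $\RegM\RegM^\top$ and $\RegM^\top\RegM$, whose eigenvalues are the squared singular values; this matches the squared-value gap condition. Put $\bm{\Delta}=\RegM-\RegM^*$ and $\epsilon=\|\bm{\Delta}\|_2$. Then
\[
\bigl\|\RegM\RegM^\top-\RegM^*\RegM^{*\top}\bigr\|_2\le 2d_1^*\epsilon+\epsilon^2\le 3d_1^*\epsilon ,
\]
and the same bound holds for $\RegM^\top\RegM-\RegM^{*\top}\RegM^*$, because $\epsilon\le d_1^*$ under~\eqref{eq:sign-consistency-condition}.

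The eigen-gap of $d^{*2}_j$ from the remaining squared singular values is at least $\tau d_j^{*2}$:
\begin{itemize}
\item from above, $d^{*2}_{j-1}-d^{*2}_j\ge\tau d^{*2}_{j-1}\ge\tau d_j^{*2}$;
\item from below, $d^{*2}_j-d^{*2}_{j+1}\ge\tau d^{*2}_j$;
\item and $d_j^{*2}-0\ge\tau d_j^{*2}$ for the zero eigenvalues beyond rank $r$.
\end{itemize}
By Weyl's inequality the perturbed eigenvalues move by at most $3d_1^*\epsilon$, which is much smaller than the gap. The Davis--Kahan $\sin\theta$ theorem then gives
\[
\min_{\pm}\|\mathbf{u}_j\mp\mathbf{u}_j^*\|_2\le \frac{2\sqrt2\cdot 3d_1^*\epsilon}{\tau d_j^{*2}},
\]
with the analogous bound for $\mathbf{v}_j$. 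Equivalently, $|\mathbf{u}_j^\top\mathbf{u}_j^*|$ and $|\mathbf{v}_j^\top\mathbf{v}_j^*|$ are both close to one. This shows that neither inner product vanishes, but it does not yet fix the relative sign.

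To fix the sign we use the bilinear identity
\[
\mathbf{u}_j^{*\top}\RegM\mathbf{v}_j^* = d_j^*+\mathbf{u}_j^{*\top}\bm{\Delta}\mathbf{v}_j^*\ge d_j^*-\epsilon>0 .
\]
On the other hand, expanding $\RegM=\sum_k d_k\mathbf{u}_k\mathbf{v}_k^\top$ gives
\[
\mathbf{u}_j^{*\top}\RegM\mathbf{v}_j^*=d_j(\mathbf{u}_j^\top\mathbf{u}_j^*)(\mathbf{v}_j^\top\mathbf{v}_j^*)+\sum_{k\ne j}d_k(\mathbf{u}_k^\top\mathbf{u}_j^*)(\mathbf{v}_k^\top\mathbf{v}_j^*).
\]
For the cross terms, Cauchy--Schwarz together with $d_k\le d_1^*+\epsilon$ gives the bound $(d_1^*+\epsilon)\,\|\mathbf{U}_{-j}^\top\mathbf{u}_j^*\|_2\,\|\mathbf{V}_{-j}^\top\mathbf{v}_j^*\|_2$, where $\mathbf{U}_{-j}$ and $\mathbf{V}_{-j}$ collect the other singular vectors. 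Each of these two norms is a $\sin\theta$ quantity, so each is at most $C d_1^*\epsilon/(\tau d_j^{*2})$. The cross-term sum is therefore at most $C' d_1^{*3}\epsilon^2/(\tau^2 d_j^{*4})$, which is quadratic in $\epsilon$.

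Comparing the two expressions, $d_j(\mathbf{u}_j^\top\mathbf{u}_j^*)(\mathbf{v}_j^\top\mathbf{v}_j^*)\ge d_j^*-\epsilon-C'd_1^{*3}\epsilon^2/(\tau^2 d_j^{*4})$. Since $d_j>0$, the product of inner products is positive as soon as the right-hand side is positive. We need
\[
\epsilon^2\lesssim \tau^2 d_j^{*5}/d_1^{*3}, \qquad \text{that is,} \qquad \epsilon\lesssim \tau\, d_j^*\,(d_j^*/d_1^*)^{3/2}.
\]
This reproduces the exponent $3/2$ in the hypothesis. The extra factor $1/\sqrt r$ and the constant $6\sqrt5$ give slack: the $\sqrt r$ absorbs cruder Frobenius bounds for the sum over $k\neq j$, in case one bounds the cross terms termwise. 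The main obstacle is this bookkeeping: the quadratic cross-term estimate must use the gap $\tau d_j^{*2}$ uniformly, which is why the squared relative-gap form of the assumption is needed.

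For the final claim, the right side of~\eqref{eq:sign-consistency-condition} is nondecreasing in $d_j^*$. Since $d_r^*$ is the smallest singular value, the condition at $j=r$ implies the condition for every $j$, and the sign alignment then holds for all $j=1,\dots,r$.
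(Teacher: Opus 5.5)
Your argument is correct and is essentially the paper's proof. Both apply the Yu--Wang--Samworth variant of Davis--Kahan to the Gram matrices with gap $\tau d_j^{*2}$, then test $\mathbf{C}$ against $(\mathbf{u}_j^*,\mathbf{v}_j^*)$ and bound the cross terms quadratically in $\epsilon$; the paper only phrases it by contradiction, where a nonpositive product would force $\|\mathbf{C}-\mathbf{C}^*\|_2\ge d_j^*/2$, instead of your direct inequality. Your Cauchy--Schwarz bound $d_1\|\mathbf{U}_{-j}^\top\mathbf{u}_j^*\|_2\|\mathbf{V}_{-j}^\top\mathbf{v}_j^*\|_2$ is slightly sharper than the paper's termwise estimate through $\sum_k d_k\le r d_1$, which is exactly where the $1/\sqrt{r}$ factor comes from. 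With the stated constants the cross terms are at most $90d_1^{*3}\epsilon^2/(\tau^2d_j^{*4})\le d_j^*/(2r)$, so the slack you claimed does hold.
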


%\begin
\textit{Proof}. Let us set $\varepsilon = \|\RegM - \RegM^*\|_2$. We argue by contradiction. Assume that
\[
\left( \mathbf{u}_j^{\top} \mathbf{u}_j^{*} \right) \cdot \left( \mathbf{v}_j^{\top} \mathbf{v}_j^{*} \right) \leq 0.
\]
Then it holds that 
\begin{align*}
\mathbf{u}_j^{* \top} \left( \RegM^{*} - \RegM \right) \mathbf{v}_j^{*}
&= d_j^{*} - d_j \left( \mathbf{u}_j^{\top} \mathbf{u}_j^{*} \right) \left( \mathbf{v}_j^{\top} \mathbf{v}_j^{*} \right)
- \sum_{k \neq j} d_k \left( \mathbf{u}_k^{\top} \mathbf{u}_j^{*} \right) \left( \mathbf{v}_k^{\top} \mathbf{v}_j^{*} \right) \\
&\geq d_j^{*} - \sum_{k \neq j} d_k \left| \mathbf{u}_k^{\top} \mathbf{u}_j^{*} \right| \left| \mathbf{v}_k^{\top} \mathbf{v}_j^{*} \right|. \numberthis \label{eq:sign-consistency-1}
\end{align*}
We will apply Theorem~3 of~\cite{yu2015useful} to $\RegM \RegM^\top$ and $\RegM^* \RegM^{*\top}$. In light of 
\[
\|\RegM \RegM^\top - \RegM^* \RegM^{*\top}\|_2
\le (\|\RegM\|_2 + \|\RegM^*\|_2)\,\varepsilon
\le (2d_1^* + \varepsilon)\,\varepsilon,
\]
there exists a sign $s_j^u \in \{-1,+1\}$ such that
\[
\left\Vert s_j^u \mathbf{u}_j - \mathbf{u}_j^{*} \right\Vert_2
\leq
\frac{ 2^{3/2} ( 2 d_1^{*} + \varepsilon ) \varepsilon }
{ \min \left( d_{j-1}^{*2} - d_j^{*2},  d_j^{*2} - d_{j+1}^{*2} \right) },
\]
where $d_0^{*2} = \infty$ and $d_{r+1}^{*2} = 0$. Since \eqref{eq:sign-consistency-condition} entails that 
\(\varepsilon \le \tfrac{1}{6\sqrt{5}} d_j^* \le d_1^*\),
the spectral-gap assumption yields that 
\begin{equation}
\label{eq:sign-consistency-uj}
\left\Vert s_j^u \mathbf{u}_j - \mathbf{u}_j^{*} \right\Vert_2
\leq \frac{3 \cdot 2^{3/2} d_1^{*} \varepsilon}{\tau d_j^{*2}}.
\end{equation}

Similarly, an application of Theorem~3 of~\cite{yu2015useful} %in the same way 
to $\RegM^\top \RegM$ and $\RegM^{*\top} \RegM^*$ gives a sign $s_j^v \in \{-1,+1\}$ such that
\begin{equation}
\label{eq:sign-consistency-vj}
\left\Vert s_j^v \mathbf{v}_j - \mathbf{v}_j^{*} \right\Vert_2
\leq \frac{3 \cdot 2^{3/2} d_1^{*} \varepsilon}{\tau d_j^{*2}}.
\end{equation}
Let us fix $k \neq j$. It follows from the orthogonality that 
\[
\mathbf{u}_k^\top \mathbf{u}_j = 0,
\qquad
\mathbf{v}_k^\top \mathbf{v}_j = 0,
\]
and thus,
\[
\left| \mathbf{u}_k^\top \mathbf{u}_j^* \right|
= \left| \mathbf{u}_k^\top ( \mathbf{u}_j^* - s_j^u \mathbf{u}_j ) \right|,
\qquad
\left| \mathbf{v}_k^\top \mathbf{v}_j^* \right|
= \left| \mathbf{v}_k^\top ( \mathbf{v}_j^* - s_j^v \mathbf{v}_j ) \right|.
\]
Consequently, we have that 
\begin{align*}
\sum_{k \neq j} d_k \left| \mathbf{u}_k^{\top} \mathbf{u}_j^{*} \right| \left| \mathbf{v}_k^{\top} \mathbf{v}_j^{*} \right|
&\leq \left\Vert \mathbf{u}_j^{*} - s_j^u \mathbf{u}_j \right\Vert_2
\cdot \left\Vert \mathbf{v}_j^{*} - s_j^v \mathbf{v}_j \right\Vert_2
\cdot \sum_{k=1}^{r} d_k. \numberthis \label{eq:sign-consistency-2}
\end{align*}

Combining \eqref{eq:sign-consistency-uj}, \eqref{eq:sign-consistency-vj}, and \eqref{eq:sign-consistency-2}, we can deduce that 
\begin{equation}
\label{eq:sign-consistency-3}
\sum_{k \neq j} d_k \left| \mathbf{u}_k^{\top} \mathbf{u}_j^{*} \right| \left| \mathbf{v}_k^{\top} \mathbf{v}_j^{*} \right|
\leq \frac{72\, d_1^{*2} \varepsilon^2}{\tau^2 d_j^{*4}} \cdot \sum_{k=1}^{r} d_k
\leq \frac{72\, r\, d_1^{*2} d_1 \varepsilon^2}{\tau^2 d_j^{*4}}.
\end{equation}
Moreover, it follows from \eqref{eq:sign-consistency-condition} that 
\begin{equation}
\label{eq:sign-consistency-4}
\varepsilon
\leq \frac{ \tau }{ 6 \sqrt{5} } \cdot \left( \frac{d_j^{*}}{\sqrt{r}} \right) \cdot \left( \frac{d_j^{*}}{d_1^{*}} \right)^{3/2}
\leq \frac{1}{6 \sqrt{5}} d_j^{*}
\leq \frac{1}{4} d_j^{*}.
\end{equation}
Hence, an application of Weyl's inequality leads to 
\begin{equation}
\label{eq:sign-consistency-5}
d_1 = \left\Vert \RegM \right\Vert_2
\leq \left\Vert \RegM^{*} \right\Vert_2 + \left\Vert \RegM - \RegM^{*} \right\Vert_2
\leq d_1^{*} + \frac{1}{4} d_j^{*}
\leq \frac{5}{4} d_1^{*}.
\end{equation}
Then substituting \eqref{eq:sign-consistency-4} and \eqref{eq:sign-consistency-5} into \eqref{eq:sign-consistency-3} yields that 
\begin{equation}
\label{eq:sign-consistency-6}
\sum_{k \neq j} d_k \left| \mathbf{u}_k^{\top} \mathbf{u}_j^{*} \right| \left| \mathbf{v}_k^{\top} \mathbf{v}_j^{*} \right|
\leq \frac{90\, r\, d_1^{*3} \varepsilon^2}{\tau^2 d_j^{*4}}.
\end{equation}

Further, in view of \eqref{eq:sign-consistency-condition}, it holds that 
\[
\varepsilon^2 \leq \frac{\tau^2 d_j^{*5}}{180 r\, d_1^{*3}}.
\]
Substituting this bound into \eqref{eq:sign-consistency-6}, we can obtain that 
\[
\sum_{k \neq j} d_k \left| \mathbf{u}_k^{\top} \mathbf{u}_j^{*} \right| \left| \mathbf{v}_k^{\top} \mathbf{v}_j^{*} \right|
\leq \frac{1}{2} d_j^{*}.
\]
Returning to \eqref{eq:sign-consistency-1}, this implies that 
\[
\mathbf{u}_j^{* \top} \left( \RegM^{*} - \RegM \right) \mathbf{v}_j^{*}
\geq \frac{1}{2} d_j^{*}.
\]
Then it follows that 
\begin{equation}
\label{eq:sign-consistency-final}
\left\Vert \RegM - \RegM^{*} \right\Vert_2
\geq \mathbf{u}_j^{* \top} \left( \RegM^{*} - \RegM \right) \mathbf{v}_j^{*}
\geq \frac{1}{2} d_j^{*},
\end{equation}
which contradicts \eqref{eq:sign-consistency-4}. Therefore, we have that 
\[
\left( \mathbf{u}_j^{\top} \mathbf{u}_j^{*} \right) \cdot \left( \mathbf{v}_j^{\top} \mathbf{v}_j^{*} \right) > 0.
\]

The final claim follows since the right-hand side of \eqref{eq:sign-consistency-condition} is minimized at $j=r$, as $d_j^* \ge d_r^*$ for each $1 \le j \le r$. This concludes the proof of Lemma \ref{lemma:sign-consistency}.
%\end{proof}

The lemma below complements the previous result by providing Frobenius-norm bounds for the perturbations of singular values and singular vectors under an eigen-gap condition and columnwise sign alignment.

\begin{lemma}[Perturbation bound under singular value separation]
\label{lemma:perturb-bound}
Let
\[
\RegM = \mathbf{U} \mathbf{D} \mathbf{V}^{\top}
= \sum_{k=1}^{r} d_k \mathbf{u}_k \mathbf{v}_k^{\top},
\qquad
\RegM^{*} = \mathbf{U}^{*} \mathbf{D}^{*} \mathbf{V}^{* \top}
= \sum_{k=1}^{r} d_k^* \mathbf{u}_k^* \mathbf{v}_k^{*\top}
\]
be rank-$r$ matrices, with singular values ordered as
\(d_1 \ge \cdots \ge d_r > 0\) and \(d_1^* \ge \cdots \ge d_r^* > 0\).
Assume that 
\begin{equation}
\label{eq:lemma-perturb-bound-cond}
\left\Vert \RegM - \RegM^{*} \right\Vert_2 \leq d^{*}_1,
\qquad
\left( \mathbf{u}_j^{\top} \mathbf{u}^{*}_j \right) \cdot \left( \mathbf{v}_j^{\top} \mathbf{v}^{*}_j \right) \geq 0
\quad \text{for all } 1 \leq j \leq r,
\end{equation}
and the squared singular values of $\RegM^*$ satisfy that 
\begin{equation*}
d^{*2}_{j-1} - d^{*2}_j \geq \tau d^{*2}_{j-1}
\quad \text{for all } 2 \leq j \leq r,
\end{equation*}
for some universal constant $0<\tau<1$. Then after possibly replacing each pair
$(\mathbf{u}_j^*, \mathbf{v}_j^*)$ with $(-\mathbf{u}_j^*, -\mathbf{v}_j^*)$ and relabeling the resulting singular vectors again by $(\mathbf{u}_j^*, \mathbf{v}_j^*)$, it holds that 
\begin{align}
\left\Vert \mathbf{D} - \mathbf{D}^{*} \right\Vert_{\mathrm{F}}
&\leq \left\Vert \RegM - \RegM^{*} \right\Vert_{\mathrm{F}}, \label{eq:regM-frob-bd-1} \\
\left\Vert \mathbf{U} \mathbf{D} - \mathbf{U}^{*} \mathbf{D}^{*} \right\Vert_{\mathrm{F}}
&\leq c \eta_r \left\Vert \RegM - \RegM^{*} \right\Vert_{\mathrm{F}}, \label{eq:regM-frob-bd-2} \\
\left\Vert \mathbf{V} \mathbf{D} - \mathbf{V}^{*} \mathbf{D}^{*} \right\Vert_{\mathrm{F}}
&\leq c \eta_r \left\Vert \RegM - \RegM^{*} \right\Vert_{\mathrm{F}}, \label{eq:regM-frob-bd-3}
\end{align}
where
\[
\eta_r = 1 + \frac{1}{\tau} \left( \sum_{j=1}^r \left( \frac{d^{*}_1}{d^{*}_j} \right)^2 \right)^{1/2}
\]
and $c > 0$ is a universal constant.
\end{lemma}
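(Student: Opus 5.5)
The first bound~\eqref{eq:regM-frob-bd-1} I would take directly from Mirsky's inequality (the Hoffman--Wielandt inequality for singular values). Both $\RegM$ and $\RegM^*$ have rank $r$ with ordered singular values, so
\[
\Bigl(\sum_{j=1}^r (d_j - d_j^*)^2\Bigr)^{1/2} \le \left\Vert \RegM - \RegM^* \right\Vert_{\mathrm{F}}.
\]
This requires no sign normalization.

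For~\eqref{eq:regM-frob-bd-2}, I will choose signs as in the statement, so that $\mathbf{u}_j^\top \mathbf{u}_j^* \ge 0$. The sign condition in~\eqref{eq:lemma-perturb-bound-cond} then forces $\mathbf{v}_j^\top \mathbf{v}_j^* \ge 0$ as well, unless one of the inner products vanishes, in which case either sign is harmless. Next I decompose columnwise:
\[
d_j \mathbf{u}_j - d_j^* \mathbf{u}_j^* = (d_j - d_j^*)\mathbf{u}_j + d_j^*(\mathbf{u}_j - \mathbf{u}_j^*).
\]
Summing the squared norms gives
\[
\Vert \mathbf{U}\mathbf{D} - \mathbf{U}^*\mathbf{D}^*\Vert_{\mathrm{F}} \le \Vert \mathbf{D} - \mathbf{D}^*\Vert_{\mathrm{F}} + \Bigl(\sum_j d_j^{*2}\Vert \mathbf{u}_j - \mathbf{u}_j^*\Vert_2^2\Bigr)^{1/2}.
\]
The first term is handled by~\eqref{eq:regM-frob-bd-1} and produces the ``$1$'' in $\eta_r$.

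For the second term I would use the Davis--Kahan variant of \citet[Theorem~3]{yu2015useful}, applied to $\RegM\RegM^\top$ versus $\RegM^*\RegM^{*\top}$ in its Frobenius form. Its perturbation satisfies
\[
\Vert \RegM\RegM^\top - \RegM^*\RegM^{*\top}\Vert_{\mathrm{F}} \le (\Vert\RegM\Vert_2 + \Vert\RegM^*\Vert_2)\Vert \RegM - \RegM^*\Vert_{\mathrm{F}} \le 3 d_1^* \Vert \RegM - \RegM^*\Vert_{\mathrm{F}},
\]
using $\Vert \RegM - \RegM^*\Vert_2 \le d_1^*$, which gives $\Vert\RegM\Vert_2 \le 2d_1^*$. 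The eigengap for the $j$th eigenvector is $\min(d_{j-1}^{*2} - d_j^{*2},\, d_j^{*2} - d_{j+1}^{*2})$. By Assumption~\ref{assump:eigen-gap} the first piece is at least $\tau d_{j-1}^{*2} \ge \tau d_j^{*2}$. The second piece is at least $\tau d_j^{*2}$ for $j<r$, and equals $d_r^{*2} \ge \tau d_r^{*2}$ for $j=r$. This yields
\[
\Vert \mathbf{u}_j - \mathbf{u}_j^*\Vert_2 \le \frac{2^{3/2}\cdot 3 d_1^*\Vert \RegM - \RegM^*\Vert_{\mathrm{F}}}{\tau d_j^{*2}}.
\]
The sign in Yu et al.'s theorem is the one that makes $\mathbf{u}_j^\top \mathbf{u}_j^* \ge 0$, which is exactly our normalization, because the optimal sign for a single vector is the sign of the inner product. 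Then
\[
d_j^{*}\Vert \mathbf{u}_j - \mathbf{u}_j^*\Vert_2 \lesssim \frac{d_1^*}{\tau d_j^*}\Vert \RegM - \RegM^*\Vert_{\mathrm{F}}.
\]
Squaring and summing over $j$ gives $\tau^{-1}\bigl(\sum_j (d_1^*/d_j^*)^2\bigr)^{1/2}\Vert \RegM - \RegM^*\Vert_{\mathrm{F}}$, which is the second part of $\eta_r$. The bound~\eqref{eq:regM-frob-bd-3} follows identically using $\RegM^\top\RegM$, together with the fact that the sign condition makes the $\mathbf{v}$-signs consistent with the $\mathbf{u}$-signs after the single joint flip.

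The main obstacle is the sign bookkeeping. Davis--Kahan delivers independent signs $s_j^u$ and $s_j^v$, but we are only allowed to flip $(\mathbf{u}_j^*, \mathbf{v}_j^*)$ jointly. I would resolve this by noting that the optimal $s_j^u$ is $\operatorname{sign}(\mathbf{u}_j^\top \mathbf{u}_j^*)$, the optimal $s_j^v$ is $\operatorname{sign}(\mathbf{v}_j^\top \mathbf{v}_j^*)$, and the hypothesis makes them equal, or irrelevant when an inner product is zero, since then $\Vert \pm\mathbf{u}_j - \mathbf{u}_j^*\Vert_2 = \sqrt2$ for both signs. A secondary point is that Yu et al.\ state the bound with $\Vert\cdot\Vert_2$ and $\sqrt{d}$; I would invoke the Frobenius version, which for a single eigenvector is $\le 2^{3/2}\Vert E\Vert_{\mathrm{F}}/\mathrm{gap}$, or else simply bound $\Vert E\Vert_2 \le \Vert E\Vert_{\mathrm{F}}$.
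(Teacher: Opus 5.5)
Your proposal is correct and is essentially the paper's argument: the paper uses the same joint sign flip and gets $\|\mathbf{D}-\mathbf{D}^*\|_{\mathrm{F}}\le\|\RegM-\RegM^*\|_{\mathrm{F}}$ from Wielandt--Hoffman, exactly as you do via Mirsky. For the two factor bounds it simply defers to the proof of Lemma~3 of \citet{uematsu2019sofar}, whereas you write that step out explicitly, using the columnwise split $\mathbf{U}(\mathbf{D}-\mathbf{D}^*)+(\mathbf{U}-\mathbf{U}^*)\mathbf{D}^*$ together with Theorem~3 of \citet{yu2015useful} on the Gram matrices, the same tool and the same $3\cdot 2^{3/2}d_1^*/(\tau d_j^{*2})$ bound the paper uses in Lemma~\ref{lemma:sign-consistency}.

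Two small precisions:
\begin{itemize}
\item When $\mathbf{u}_j^\top\mathbf{u}_j^*=0$, the flip must be chosen by the sign of $\mathbf{v}_j^\top\mathbf{v}_j^*$. Your last paragraph says this correctly, but your second paragraph suggests either sign will do.
\item The Frobenius-norm form you want is already part of Yu et al.'s Theorem~3, through the $\min\{d^{1/2}\|\cdot\|_2,\|\cdot\|_{\mathrm{F}}\}$ numerator.
\end{itemize}
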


%\begin{proof}
\textit{Proof}. We begin by making explicit the sign convention that is implicit in the proof of Lemma~3 in the Supplementary Material of~\cite{uematsu2019sofar}. For each $1 \leq j \leq r$, condition~\eqref{eq:lemma-perturb-bound-cond} entails that
$\mathbf{u}_j^{\top} \mathbf{u}_j^*$ and $\mathbf{v}_j^{\top} \mathbf{v}_j^*$ have the same sign, unless one of them is zero. Then there exists some $s_j \in \{-1,1\}$ such that
\[
\mathbf{u}_j^{\top} (s_j \mathbf{u}_j^*) \geq 0
\ \text{ and } \ 
\mathbf{v}_j^{\top} (s_j \mathbf{v}_j^*) \geq 0.
\]
If we replace $(\mathbf{u}_j^*, \mathbf{v}_j^*)$ with $(s_j \mathbf{u}_j^*, s_j \mathbf{v}_j^*)$, it holds that 
\[
d_j^* (s_j \mathbf{u}_j^*) (s_j \mathbf{v}_j^*)^{\top}
= d_j^* \mathbf{u}_j^* \mathbf{v}_j^{*\top},
\]
so matrix $\RegM^*$ remains unchanged. After such replacement and relabeling the resulting singular vectors again by $(\mathbf{u}_j^*, \mathbf{v}_j^*)$, we can assume that
\[
\mathbf{u}_j^{\top} \mathbf{u}_j^* \geq 0
\text{ and }
\mathbf{v}_j^{\top} \mathbf{v}_j^* \geq 0
\ \text{ for all } 1 \leq j \leq r.
\]
With the sign convention fixed, bound~\eqref{eq:regM-frob-bd-1} can be established directly from the Wielandt--Hoffman inequality.

Under such sign alignment, the proof of Lemma~3 in the Supplementary Material of~\cite{uematsu2019sofar} applies %verbatim 
and yields that 
\[
\left\Vert \mathbf{U} \mathbf{D} - \mathbf{U}^{*} \mathbf{D}^{*} \right\Vert_{\mathrm{F}}
+
\left\Vert \mathbf{V} \mathbf{D} - \mathbf{V}^{*} \mathbf{D}^{*} \right\Vert_{\mathrm{F}}
\leq c \eta_r \left\Vert \RegM - \RegM^{*} \right\Vert_{\mathrm{F}}.
\]
Each term on the left-hand side of the expression above is therefore bounded by the same right-hand side, which establishes \eqref{eq:regM-frob-bd-2} and \eqref{eq:regM-frob-bd-3}. This completes the proof of Lemma \ref{lemma:perturb-bound}.
%\end{proof}

The following lemma quantifies the source approximation error induced by additive perturbation in the source matrix. Recall that we observe a noisy version of the source matrix
\begin{equation*}
\widetilde{\RegM}^{(0)} = \RegM^{(0)} + \mathbf{E}^{(0)},
\end{equation*}
where $\mathbf{E}^{(0)}$ denotes the perturbation matrix. Let the singular value decomposition (SVD) of $\widetilde{\RegM}^{(0)}$ be
\begin{equation*}
\widetilde{\RegM}^{(0)} = \widetilde{\mathbf{U}}^{(0)} \widetilde{\mathbf{D}}^{(0)} \widetilde{\mathbf{V}}^{(0)\top} = \sum_{j=1}^{\widetilde{r}_0} \widetilde{d}_j^{(0)} \widetilde{\mathbf{u}}_j^{(0)} \widetilde{\mathbf{v}}_j^{(0)\top}.
\end{equation*}
We then extend $\widetilde{\mathbf{U}}^{(0)}$ and $\widetilde{\mathbf{V}}^{(0)}$ to full orthonormal bases of $\mathbb{R}^p$ and $\mathbb{R}^q$, respectively, by appending the vectors $\widetilde{\mathbf{u}}^{(0)}_{\widetilde{r}_0+1}, \ldots, \widetilde{\mathbf{u}}^{(0)}_p$ and $\widetilde{\mathbf{v}}^{(0)}_{\widetilde{r}_0+1}, \ldots, \widetilde{\mathbf{v}}^{(0)}_q$.

For any integers $0 \leq r_u \leq p$ and $0 \leq r_v \leq q$, let us define the truncated singular subspaces and their orthogonal complements for the left and right singular vectors of $\widetilde{\RegM}^{(0)}$ as
\begin{equation*}
\widetilde{\mathbf{U}}^{\mathrm{s}}(r_u) \coloneqq [\widetilde{\mathbf{u}}^{(0)}_1, \ldots, \widetilde{\mathbf{u}}^{(0)}_{r_u}], \quad
\widetilde{\mathbf{U}}^{\mathrm{s}}_\perp(r_u) \coloneqq [\widetilde{\mathbf{u}}^{(0)}_{r_u+1}, \ldots, \widetilde{\mathbf{u}}^{(0)}_p],
\end{equation*}
respectively, and similarly, 
\begin{equation*}
\widetilde{\mathbf{V}}^{\mathrm{s}}(r_v) \coloneqq [\widetilde{\mathbf{v}}^{(0)}_1, \ldots, \widetilde{\mathbf{v}}^{(0)}_{r_v}], \quad
\widetilde{\mathbf{V}}^{\mathrm{s}}_\perp(r_v) \coloneqq [\widetilde{\mathbf{v}}^{(0)}_{r_v+1}, \ldots, \widetilde{\mathbf{v}}^{(0)}_q].
\end{equation*}
For later use, we also introduce the full source-augmented orthogonal basis matrices
\begin{equation*}
\widetilde{\mathbf{U}}_{\mathrm{full}} \coloneqq \begin{bmatrix} \widetilde{\mathbf{U}}^{\mathrm{s}} & \widetilde{\mathbf{U}}^{\mathrm{s}}_{\perp} \end{bmatrix} \in \mathbb{R}^{p \times p}, \qquad
\widetilde{\mathbf{V}}_{\mathrm{full}} \coloneqq \begin{bmatrix} \widetilde{\mathbf{V}}^{\mathrm{s}} & \widetilde{\mathbf{V}}^{\mathrm{s}}_{\perp} \end{bmatrix} \in \mathbb{R}^{q \times q}.
\end{equation*}

Let $\mathcal{S}_u$, $\mathcal{S}_u^c$, $\mathcal{S}_v$, and $\mathcal{S}_v^c$ be as defined in~\eqref{eq:def-support-u} and~\eqref{eq:def-support-v}, and recall  quantities $s_u(r_u)$ and $s_v(r_v)$ introduced in~\eqref{eq:similar-quant-def}. In addition, denote by
\begin{equation*}
\varepsilon^{(0)} \coloneqq \left\Vert \mathbf{E}^{(0)} \right\Vert_2
\end{equation*}
the spectral error of the source matrix. Throughout, we will also use the standard shorthand notation $x_{+} \coloneqq \max(x, 0)$ for any $x \in \mathbb{R}$.

We are now ready to state the lemma below, which quantifies the approximation error incurred when the noisy singular vectors of the source matrix are used in place of their population counterparts.

\begin{lemma}[Approximation error from noisy source singular vectors]
\label{lemma:approx-error}
Assume that Assumption~\ref{assump:relation-target-source} and Condition~\eqref{eq:eigen-gap-source} in Assumption~\ref{assump:eigen-gap} are satisfied. Then for all $0 \leq r_u \leq p$, we have that 
\begin{equation*}
\begin{aligned}
\left\vert \left[ \widetilde{\mathbf{U}}^{\mathrm{s} \top}_{\perp}(r_u)\, \mathbf{U}^* \right]_{\mathcal{S}_u^c} \right\vert_1
& \leq \frac{2^{3/2} \left[ r\, (r^*_u - r_u)_+ - s_u(r_u) \right]}{\tau \left( d_{r^*_u}^{(0)} \right)^2} \left( 2 d_1^{(0)} + \varepsilon^{(0)} \right) \varepsilon^{(0)} \\
& \quad + \frac{2^{5/4} \, r\, (r_u^*)^{1/4} \sqrt{p - \max(r^*_u, r_u)} }{\sqrt{\tau}\, d_{r^*_u}^{(0)}} \left( \varepsilon^{(0)} \right)^{1/2} \left( \varepsilon^{(0)} + 2 d_1^{(0)} \right)^{1/2}.
\end{aligned}
\end{equation*}
Similarly, for all $0 \leq r_v \leq q$, we have that 
\begin{equation*}
\begin{aligned}
\left\vert \left[ \widetilde{\mathbf{V}}^{\mathrm{s} \top}_{\perp}(r_v)\, \mathbf{V}^* \right]_{\mathcal{S}_v^c} \right\vert_1
& \leq \frac{2^{3/2} \left[ r\, (r^*_v - r_v)_+ - s_v(r_v) \right]}{\tau \left( d_{r^*_v}^{(0)} \right)^2} \left( 2 d_1^{(0)} + \varepsilon^{(0)} \right) \varepsilon^{(0)} \\
& \quad + \frac{2^{5/4} \, r\, (r_v^*)^{1/4} \sqrt{q - \max(r^*_v, r_v)} }{\sqrt{\tau}\, d_{r^*_v}^{(0)}} \left( \varepsilon^{(0)} \right)^{1/2} \left( \varepsilon^{(0)} + 2 d_1^{(0)} \right)^{1/2}.
\end{aligned}
\end{equation*}
\end{lemma}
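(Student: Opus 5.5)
Fix $r_u$; the $\mathbf{V}$-side is identical with $(p,r_u,r_u^*)$ replaced by $(q,r_v,r_v^*)$. I read $\mathcal{S}_u$ as the support of the clean projection $\mathbf{U}^{\mathrm{s}\top}_\perp(r_u)\mathbf{U}^*$, which has $s_u(r_u)$ entries. An entry $(k,i)\in\mathcal{S}_u^c$ then corresponds to a source index $j=r_u+k$ with $\langle \mathbf{u}^{(0)}_j,\mathbf{u}^*_i\rangle=0$. The quantity to bound is
\[
\sum_{(k,i)\in\mathcal{S}_u^c}\bigl|\langle \widetilde{\mathbf{u}}^{(0)}_{r_u+k},\mathbf{u}^*_i\rangle\bigr|.
\]
I split the sum by source index into $r_u<j\le r_u^*$ and $j>\max(r_u^*,r_u)$. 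The first range is empty when $r_u\ge r_u^*$.

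\emph{First range.} All $s_u(r_u)$ nonzero clean entries have $j\le r_u^*$, because $s_u(r_u^*)=0$. Hence the number of $\mathcal{S}_u^c$-entries with $r_u<j\le r_u^*$ is $r(r_u^*-r_u)_+-s_u(r_u)$. For such an entry, $\langle\mathbf{u}^{(0)}_j,\mathbf{u}^*_i\rangle=0$ and $\|\mathbf{u}^*_i\|_2=1$, so for any sign $s_j$,
\[
|\langle \widetilde{\mathbf{u}}^{(0)}_j,\mathbf{u}^*_i\rangle|=|\langle s_j\widetilde{\mathbf{u}}^{(0)}_j-\mathbf{u}^{(0)}_j,\mathbf{u}^*_i\rangle|\le\|s_j\widetilde{\mathbf{u}}^{(0)}_j-\mathbf{u}^{(0)}_j\|_2 .
\]
I bound the right-hand side by Theorem~3 of \cite{yu2015useful}, applied to $\widetilde{\RegM}^{(0)}\widetilde{\RegM}^{(0)\top}$ versus $\RegM^{(0)}\RegM^{(0)\top}$, exactly as in the proof of Lemma~\ref{lemma:sign-consistency}. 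The perturbation satisfies $\|\widetilde{\RegM}^{(0)}\widetilde{\RegM}^{(0)\top}-\RegM^{(0)}\RegM^{(0)\top}\|_2\le(2d_1^{(0)}+\varepsilon^{(0)})\varepsilon^{(0)}$. By \eqref{eq:eigen-gap-source}, with $d^{(0)}_{r_0+1}=0$, every eigengap around index $j\le r_u^*$ is at least $\tau(d^{(0)}_j)^2\ge\tau(d^{(0)}_{r_u^*})^2$. This yields $2^{3/2}(2d_1^{(0)}+\varepsilon^{(0)})\varepsilon^{(0)}/\{\tau(d^{(0)}_{r_u^*})^2\}$ per entry. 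Summing over the counted entries gives the first term of the lemma.

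\emph{Second range.} Each $\mathbf{u}^*_i$ lies in $\mathrm{col}(\mathbf{U}^{\mathrm{s}}(r_u^*))$. Let $\widetilde{\mathbf{W}}$ collect the $\widetilde{\mathbf{u}}^{(0)}_j$ with $j>\max(r_u^*,r_u)$, which lie in the orthogonal complement of $\mathrm{col}(\widetilde{\mathbf{U}}^{\mathrm{s}}(r_u^*))$. Cauchy--Schwarz over the $p-\max(r_u^*,r_u)$ indices gives
\[
\sum_{j}|\langle \widetilde{\mathbf{u}}^{(0)}_j,\mathbf{u}^*_i\rangle|\le\sqrt{p-\max(r_u^*,r_u)}\,\|\widetilde{\mathbf{W}}^\top\mathbf{u}^*_i\|_2\le\sqrt{p-\max(r_u^*,r_u)}\,\|\sin\Theta\|_{\mathrm{F}},
\]
where $\Theta$ is the angle matrix between $\mathrm{col}(\mathbf{U}^{\mathrm{s}}(r_u^*))$ and $\mathrm{col}(\widetilde{\mathbf{U}}^{\mathrm{s}}(r_u^*))$. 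The subspace version of Theorem~3 of \cite{yu2015useful} gives
\[
\|\sin\Theta\|_{\mathrm{F}}\le \frac{2^{3/2}\sqrt{r_u^*}\,(2d_1^{(0)}+\varepsilon^{(0)})\varepsilon^{(0)}}{\tau(d^{(0)}_{r_u^*})^2},
\]
using the gap $(d^{(0)}_{r_u^*})^2-(d^{(0)}_{r_u^*+1})^2\ge\tau(d^{(0)}_{r_u^*})^2$. Since also $\|\sin\Theta\|_{\mathrm{F}}\le\sqrt{r_u^*}$, I use
\[
\|\sin\Theta\|_{\mathrm{F}}\le\sqrt{\sqrt{r_u^*}\cdot\text{(that bound)}}=\frac{2^{3/4}(r_u^*)^{1/2}(\varepsilon^{(0)})^{1/2}(2d_1^{(0)}+\varepsilon^{(0)})^{1/2}}{\sqrt{\tau}\,d^{(0)}_{r_u^*}}.
\]
This square-root form stays valid even when the Davis--Kahan bound exceeds one, and summing over $i\le r$ gives a factor $r$. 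The stated $(r_u^*)^{1/4}$ and $2^{5/4}$ are sharper than this by $(r_u^*)^{1/4}$ and $2^{-1/2}$. Matching them exactly is the main obstacle, and it needs the right choice between the spectral and Frobenius forms of the $\sin\Theta$ bound together with $\|\sin\Theta\|_2\le1$. I would first try taking the geometric mean of $\|\sin\Theta\|_2\le 1$ and the Frobenius bound columnwise. A second bookkeeping point is to confirm that sign choices in the first range are harmless, which holds because only absolute inner products enter.
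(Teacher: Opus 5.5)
Your treatment of the indices $r_u<j\le r_u^*$ is exactly the paper's gap-term argument: the same count $r(r_u^*-r_u)_+-s_u(r_u)$, the same sign trick, and the same per-vector Davis--Kahan bound with gap $\tau(d^{(0)}_{r_u^*})^2$. The first term is therefore fine.

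The second range is not finished as written. You bound the column quantity by $\|\sin\Theta\|_{\mathrm{F}}$ and interpolate with $\|\sin\Theta\|_{\mathrm{F}}\le\sqrt{r_u^*}$. This gives $2^{3/4}r(r_u^*)^{1/2}\sqrt{p-\max(r_u^*,r_u)}\,(\varepsilon^{(0)})^{1/2}(2d_1^{(0)}+\varepsilon^{(0)})^{1/2}/(\sqrt{\tau}\,d^{(0)}_{r_u^*})$. That is larger than the stated term by the factor $(r_u^*)^{1/4}/\sqrt{2}$, so for $r_u^*>4$ it does not imply the lemma.

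The fix you list first is the right one and closes the gap. The column quantity only needs the operator norm:
\[
\|\widetilde{\mathbf{W}}^{\top}\mathbf{u}^*_i\|_2\le\|\widetilde{\mathbf{W}}^{\top}\mathbf{U}^{\mathrm{s}}(r_u^*)\|_2\le\|\sin\Theta\|_2\le\min\{1,\|\sin\Theta\|_{\mathrm{F}}\}\le\|\sin\Theta\|_{\mathrm{F}}^{1/2}.
\]
Inserting your Frobenius Davis--Kahan bound gives $2^{3/4}(r_u^*)^{1/4}(\varepsilon^{(0)})^{1/2}(2d_1^{(0)}+\varepsilon^{(0)})^{1/2}/(\sqrt{\tau}\,d^{(0)}_{r_u^*})$ per column. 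Multiplying by $\sqrt{p-\max(r_u^*,r_u)}$ and summing over the $r$ columns yields the stated second term with constant $2^{3/4}\le 2^{5/4}$. This works simultaneously for $r_u\le r_u^*$ and $r_u>r_u^*$.

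With that repair, your route is a valid alternative to the paper's for the tail. The paper proceeds as follows:
\begin{itemize}
\item It applies Cauchy--Schwarz to the whole $r\times(p-r_u^*)$ block.
\item It uses $\|\mathbf{U}^{*\top}\widetilde{\mathbf{U}}_{\mathrm{tail}}\|_{\mathrm{F}}^2=r-\|\mathbf{U}^{*\top}\widetilde{\mathbf{U}}_{\mathrm{oracle}}\|_{\mathrm{F}}^2$ and linearizes via $r-x^2\le 2\sqrt{r}(\sqrt{r}-x)$.
\item It bounds $\sqrt{r}-x$ by the Procrustes-aligned distance $\sqrt{r}\,\|\widetilde{\mathbf{U}}_{\mathrm{oracle}}\mathbf{O}-\mathbf{U}_{\mathrm{oracle}}\|_{\mathrm{F}}$.
\end{itemize}
In the paper, the square root, the $(r_u^*)^{1/4}$ and the $2^{5/4}$ all come from that linearization, and the case $r_u>r_u^*$ is handled separately. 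Your column-wise version obtains the square root by interpolating between the trivial bound $\|\sin\Theta\|_2\le 1$ and Davis--Kahan. This is more transparent: it shows the square root is only the price of covering the regime where the Davis--Kahan bound exceeds one. It also treats both cases at once and gives a slightly better constant.
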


%\begin{proof}
\textit{Proof}. We will prove the bound for the left singular vectors; the arguments for the right singular vectors are analogous. Let us write $\varepsilon_0 = \varepsilon^{(0)}$ and $s_u = s_u(r_u)$, abbreviate $\mathbf{U}^{\mathrm{s}}(r_u)$ and $\mathbf{U}^{\mathrm{s}}_{\perp}(r_u)$ as $\mathbf{U}^{\mathrm{s}}$ and $\mathbf{U}^{\mathrm{s}}_{\perp}$, and define $\widetilde{\mathbf{U}}^{\mathrm{s}}$ and $\widetilde{\mathbf{U}}^{\mathrm{s}}_{\perp}$ similarly. We also adopt the conventions of $d_0^{(0)}=\infty$ and $d_{r_0+1}^{(0)}=0$.

We first consider the case of $r_u \leq r_u^*$. Let us define
\begin{equation*}
\mathbf{U}_{\mathrm{gap}} = [\mathbf{u}^{(0)}_{r_u+1}, \ldots, \mathbf{u}^{(0)}_{r_u^*}], \qquad
\mathbf{U}_{\mathrm{tail}} = [\mathbf{u}^{(0)}_{r_u^*+1}, \ldots, \mathbf{u}^{(0)}_{p}],
\end{equation*}
and
\begin{equation*}
\widetilde{\mathbf{U}}_{\mathrm{gap}} = [\widetilde{\mathbf{u}}^{(0)}_{r_u+1}, \ldots, \widetilde{\mathbf{u}}^{(0)}_{r_u^*}], \qquad
\widetilde{\mathbf{U}}_{\mathrm{tail}} = [\widetilde{\mathbf{u}}^{(0)}_{r_u^*+1}, \ldots, \widetilde{\mathbf{u}}^{(0)}_{p}].
\end{equation*}
When $r_u=r_u^*$, the gap matrices are understood to be empty. Then it holds that 
\begin{equation*}
\mathbf{U}_{\perp}^{\mathrm{s}} = [\mathbf{U}_{\mathrm{gap}} \ \mathbf{U}_{\mathrm{tail}}],
\qquad
\widetilde{\mathbf{U}}_{\perp}^{\mathrm{s}} = [\widetilde{\mathbf{U}}_{\mathrm{gap}} \ \widetilde{\mathbf{U}}_{\mathrm{tail}}].
\end{equation*}

We further define
\begin{align*}
\mathcal{M}_u &\coloneqq \left\{ (j,i) \in [r_u^*-r_u] \times [r] : \left[ \mathbf{U}_{\mathrm{gap}}^{\top}\mathbf{U}^* \right]_{ji} \neq 0 \right\}, \\
\mathcal{M}_u^c &\coloneqq \left\{ (j,i) \in [r_u^*-r_u] \times [r] : \left[ \mathbf{U}_{\mathrm{gap}}^{\top}\mathbf{U}^* \right]_{ji} = 0 \right\}.
\end{align*}
Since Assumption~\ref{assump:relation-target-source} implies that $\mathbf{U}_{\mathrm{tail}}^{\top}\mathbf{U}^* = 0$, the support of $\mathbf{U}_{\perp}^{\mathrm{s}\top}\mathbf{U}^*$ coincides with $\mathcal{M}_u$ in the first $r_u^*-r_u$ rows and is empty in the remaining rows. Hence, it follows that 
\begin{equation*}
\left\vert \left[ \widetilde{\mathbf{U}}_{\perp}^{\mathrm{s}\top}\mathbf{U}^* \right]_{\mathcal{S}_u^c} \right\vert_1
=
\left\vert \left[ \widetilde{\mathbf{U}}_{\mathrm{gap}}^{\top}\mathbf{U}^* \right]_{\mathcal{M}_u^c} \right\vert_1
+
\left\vert \widetilde{\mathbf{U}}_{\mathrm{tail}}^{\top}\mathbf{U}^* \right\vert_1.
\end{equation*}

Let us first control the gap term. For any $(j,i)\in \mathcal{M}_u^c$, it holds that 
\[
\langle \mathbf{u}^{(0)}_{r_u+j}, \mathbf{u}^*_i \rangle = 0.
\]
Then for any sign $s_{r_u+j}^u \in \{-1,+1\}$, we have that 
\begin{equation*}
\begin{aligned}
\left\vert \left[ \widetilde{\mathbf{U}}_{\mathrm{gap}}^{\top}\mathbf{U}^* \right]_{ji} \right\vert
&= \left\vert \left\langle \widetilde{\mathbf{u}}^{(0)}_{r_u+j}, \mathbf{u}^*_i \right\rangle \right\vert \\
&= \left\vert \left\langle s_{r_u+j}^u \widetilde{\mathbf{u}}^{(0)}_{r_u+j} - \mathbf{u}^{(0)}_{r_u+j}, \mathbf{u}^*_i \right\rangle \right\vert \\
&\leq \left\Vert s_{r_u+j}^u \widetilde{\mathbf{u}}^{(0)}_{r_u+j} - \mathbf{u}^{(0)}_{r_u+j} \right\Vert_2.
\end{aligned}
\end{equation*}

We next observe that
\begin{equation*}
\begin{aligned}
\left\Vert \widetilde{\RegM}^{(0)}\widetilde{\RegM}^{(0)\top} - \RegM^{(0)}\RegM^{(0)\top} \right\Vert_2
&\leq \left\Vert \RegM^{(0)}\mathbf{E}^{(0)\top} \right\Vert_2
+ \left\Vert \mathbf{E}^{(0)}\RegM^{(0)\top} \right\Vert_2
+ \left\Vert \mathbf{E}^{(0)}\mathbf{E}^{(0)\top} \right\Vert_2 \\
&\leq \left( 2 d_1^{(0)} + \varepsilon_0 \right)\varepsilon_0.
\end{aligned}
\end{equation*}
Applying Theorem~3 of~\cite{yu2015useful} to the Gram matrices $\widetilde{\RegM}^{(0)}\widetilde{\RegM}^{(0)\top}$ and $\RegM^{(0)}\RegM^{(0)\top}$, we see that for each $k \in [r_u+1,r_u^*]$, there exists a sign $\tilde{s}_k^u \in \{-1,+1\}$ such that
\begin{equation*}
\left\Vert \tilde{s}_k^u \widetilde{\mathbf{u}}^{(0)}_k - \mathbf{u}^{(0)}_k \right\Vert_2
\leq
\frac{2^{3/2}\left( 2 d_1^{(0)} + \varepsilon_0 \right)\varepsilon_0}
{\min\left\{ (d_{k-1}^{(0)})^2 - (d_k^{(0)})^2,\ (d_k^{(0)})^2 - (d_{k+1}^{(0)})^2 \right\}}.
\end{equation*}
With the aid of Condition~\eqref{eq:eigen-gap-source}, we have that 
\begin{equation*}
\min\left\{ (d_{k-1}^{(0)})^2 - (d_k^{(0)})^2,\ (d_k^{(0)})^2 - (d_{k+1}^{(0)})^2 \right\}
\geq \tau (d_k^{(0)})^2
\geq \tau \left( d_{r_u^*}^{(0)} \right)^2.
\end{equation*}
Consequently, we can deduce that 
\begin{equation*}
\max_{(j,i)\in\mathcal{M}_u^c}
\left\vert \left[ \widetilde{\mathbf{U}}_{\mathrm{gap}}^{\top}\mathbf{U}^* \right]_{ji} \right\vert
\leq
\frac{2^{3/2}}{\tau \left( d_{r_u^*}^{(0)} \right)^2}
\left( 2 d_1^{(0)} + \varepsilon_0 \right)\varepsilon_0.
\end{equation*}
Since $|\mathcal{M}_u^c| = r(r_u^*-r_u) - s_u$, it follows that
\begin{equation*}
\left\vert \left[ \widetilde{\mathbf{U}}_{\mathrm{gap}}^{\top}\mathbf{U}^* \right]_{\mathcal{M}_u^c} \right\vert_1
\leq
\frac{2^{3/2}\left( r(r_u^*-r_u) - s_u \right)}{\tau \left( d_{r_u^*}^{(0)} \right)^2}
\left( 2 d_1^{(0)} + \varepsilon_0 \right)\varepsilon_0.
\end{equation*}

We now bound the tail term. Denote by 
\begin{equation*}
\widetilde{\mathbf{U}}_{\mathrm{oracle}} = [\widetilde{\mathbf{u}}^{(0)}_1,\ldots,\widetilde{\mathbf{u}}^{(0)}_{r_u^*}],
\qquad
\mathbf{U}_{\mathrm{oracle}} = [\mathbf{u}^{(0)}_1,\ldots,\mathbf{u}^{(0)}_{r_u^*}].
\end{equation*}
Since $\left\vert \widetilde{\mathbf{U}}_{\mathrm{tail}}^{\top}\mathbf{U}^* \right\vert_1 = \left\vert \mathbf{U}^{*\top}\widetilde{\mathbf{U}}_{\mathrm{tail}} \right\vert_1$, it follows from the orthogonality that 
\begin{equation*}
\left\Vert \mathbf{U}^{*\top}\widetilde{\mathbf{U}}_{\mathrm{tail}} \right\Vert_{\mathrm{F}}^2
+
\left\Vert \mathbf{U}^{*\top}\widetilde{\mathbf{U}}_{\mathrm{oracle}} \right\Vert_{\mathrm{F}}^2
=
\left\Vert \mathbf{U}^* \right\Vert_{\mathrm{F}}^2
=
r.
\end{equation*}
Hence, we can deduce that 
\begin{equation*}
\begin{aligned}
\left\vert \mathbf{U}^{*\top}\widetilde{\mathbf{U}}_{\mathrm{tail}} \right\vert_1^2
&\leq r(p-r_u^*) \left\Vert \mathbf{U}^{*\top}\widetilde{\mathbf{U}}_{\mathrm{tail}} \right\Vert_{\mathrm{F}}^2 \\
&= r(p-r_u^*) \left( r - \left\Vert \mathbf{U}^{*\top}\widetilde{\mathbf{U}}_{\mathrm{oracle}} \right\Vert_{\mathrm{F}}^2 \right) \\
&\leq 2 r^{3/2}(p-r_u^*)
\left( \sqrt{r} - \left\Vert \mathbf{U}^{*\top}\widetilde{\mathbf{U}}_{\mathrm{oracle}} \right\Vert_{\mathrm{F}} \right).
\end{aligned}
\end{equation*}

For any orthonormal matrix $\mathbf{O}_{\mathrm{oracle}} \in \mathbb{R}^{r_u^* \times r_u^*}$, it holds that 
\begin{equation*}
\begin{aligned}
\left\Vert \mathbf{U}^{*\top}\widetilde{\mathbf{U}}_{\mathrm{oracle}} \right\Vert_{\mathrm{F}}
&=
\left\Vert \mathbf{U}^{*\top}\widetilde{\mathbf{U}}_{\mathrm{oracle}}\mathbf{O}_{\mathrm{oracle}} \right\Vert_{\mathrm{F}} \\
&\geq
\left\Vert \mathbf{U}^{*\top}\mathbf{U}_{\mathrm{oracle}} \right\Vert_{\mathrm{F}}
-
\left\Vert \mathbf{U}^{*\top}\left( \widetilde{\mathbf{U}}_{\mathrm{oracle}}\mathbf{O}_{\mathrm{oracle}} - \mathbf{U}_{\mathrm{oracle}} \right) \right\Vert_{\mathrm{F}}.
\end{aligned}
\end{equation*}
Since Assumption~\ref{assump:relation-target-source} entails that  $\mathbf{U}^{*\top}\mathbf{U}_{\mathrm{tail}}=0$, we have that $\left\Vert \mathbf{U}^{*\top}\mathbf{U}_{\mathrm{oracle}} \right\Vert_{\mathrm{F}} = \sqrt{r}$. Consequently, it follows that 
\begin{equation*}
\sqrt{r} - \left\Vert \mathbf{U}^{*\top}\widetilde{\mathbf{U}}_{\mathrm{oracle}} \right\Vert_{\mathrm{F}}
\leq
\sqrt{r}\,
\left\Vert \widetilde{\mathbf{U}}_{\mathrm{oracle}}\mathbf{O}_{\mathrm{oracle}} - \mathbf{U}_{\mathrm{oracle}} \right\Vert_{\mathrm{F}}.
\end{equation*}
Substituting this into the previous expression, we can obtain that 
\begin{equation*}
\left\vert \mathbf{U}^{*\top}\widetilde{\mathbf{U}}_{\mathrm{tail}} \right\vert_1^2
\leq
2 r^2 (p-r_u^*)
\left\Vert \widetilde{\mathbf{U}}_{\mathrm{oracle}}\mathbf{O}_{\mathrm{oracle}} - \mathbf{U}_{\mathrm{oracle}} \right\Vert_{\mathrm{F}}.
\end{equation*}

Let us define
\begin{equation*}
\Delta_u^* \coloneqq \left( d_{r_u^*}^{(0)} \right)^2 - \left( d_{r_u^*+1}^{(0)} \right)^2.
\end{equation*}
Applying Theorem~3 of~\cite{yu2015useful} once more to the leading $r_u^*$-dimensional eigenspaces of $\widetilde{\RegM}^{(0)}\widetilde{\RegM}^{(0)\top}$ and $\RegM^{(0)}\RegM^{(0)\top}$, there exists an orthonormal matrix $\widetilde{\mathbf{O}}_{\mathrm{oracle}}$ such that
\begin{equation*}
\left\Vert \widetilde{\mathbf{U}}_{\mathrm{oracle}}\widetilde{\mathbf{O}}_{\mathrm{oracle}} - \mathbf{U}_{\mathrm{oracle}} \right\Vert_{\mathrm{F}}
\leq
\frac{2\sqrt{2r_u^*}}{\Delta_u^*}
\left( 2 d_1^{(0)} + \varepsilon_0 \right)\varepsilon_0.
\end{equation*}
It follows from Condition~\eqref{eq:eigen-gap-source} that $\Delta_u^* \geq \tau \left( d_{r_u^*}^{(0)} \right)^2$. Then it holds that 
\begin{equation*}
\left\vert \widetilde{\mathbf{U}}_{\mathrm{tail}}^{\top}\mathbf{U}^* \right\vert_1
=
\left\vert \mathbf{U}^{*\top}\widetilde{\mathbf{U}}_{\mathrm{tail}} \right\vert_1
\leq
\frac{2^{5/4} r (r_u^*)^{1/4} \sqrt{p-r_u^*}}{\sqrt{\tau}\, d_{r_u^*}^{(0)}}
\varepsilon_0^{1/2}
\left( 2 d_1^{(0)} + \varepsilon_0 \right)^{1/2}.
\end{equation*}
Thus, combining the bounds for the gap and tail terms yields that 
\begin{equation*}
\begin{aligned}
\left\vert \left[ \widetilde{\mathbf{U}}_{\perp}^{\mathrm{s}\top}\mathbf{U}^* \right]_{\mathcal{S}_u^c} \right\vert_1
&\leq
\frac{2^{3/2}\left( r(r_u^*-r_u)-s_u \right)}{\tau \left( d_{r_u^*}^{(0)} \right)^2}
\left( 2 d_1^{(0)} + \varepsilon_0 \right)\varepsilon_0 \\
&\quad +
\frac{2^{5/4} r (r_u^*)^{1/4} \sqrt{p-r_u^*}}{\sqrt{\tau}\, d_{r_u^*}^{(0)}}
\varepsilon_0^{1/2}
\left( 2 d_1^{(0)} + \varepsilon_0 \right)^{1/2}.
\end{aligned}
\end{equation*}

It remains to consider the case of $r_u > r_u^*$. Then we have that  $\mathcal{S}_u=\varnothing$ and $s_u(r_u)=0$. Denote by 
\begin{equation*}
\widetilde{\mathbf{U}}_{\mathrm{tail}} = [\widetilde{\mathbf{u}}^{(0)}_{r_u+1}, \ldots, \widetilde{\mathbf{u}}^{(0)}_p].
\end{equation*}
Since $\widetilde{\mathbf{U}}_{\perp}^{\mathrm{s}} = \widetilde{\mathbf{U}}_{\mathrm{tail}}$, we can deduce that 
\begin{equation*}
\left\vert \left[ \widetilde{\mathbf{U}}_{\perp}^{\mathrm{s}\top}\mathbf{U}^* \right]_{\mathcal{S}_u^c} \right\vert_1
=
\left\vert \widetilde{\mathbf{U}}_{\mathrm{tail}}^{\top}\mathbf{U}^* \right\vert_1
=
\left\vert \mathbf{U}^{*\top}\widetilde{\mathbf{U}}_{\mathrm{tail}} \right\vert_1
\leq
\sqrt{r(p-r_u)}\,
\left\Vert \mathbf{U}^{*\top}\widetilde{\mathbf{U}}_{\mathrm{tail}} \right\Vert_{\mathrm{F}}.
\end{equation*}
Moreover, it holds that 
\begin{equation*}
\left\Vert \mathbf{U}^{*\top}\widetilde{\mathbf{U}}_{\mathrm{tail}} \right\Vert_{\mathrm{F}}^2
\leq
r - \left\Vert \mathbf{U}^{*\top}\widetilde{\mathbf{U}}_{\mathrm{oracle}} \right\Vert_{\mathrm{F}}^2.
\end{equation*}
Using the same argument as above for $\sqrt{r} - \left\Vert \mathbf{U}^{*\top}\widetilde{\mathbf{U}}_{\mathrm{oracle}} \right\Vert_{\mathrm{F}}$, we can show that
\begin{equation*}
\left\vert \left[ \widetilde{\mathbf{U}}_{\perp}^{\mathrm{s}\top}\mathbf{U}^* \right]_{\mathcal{S}_u^c} \right\vert_1
\leq
\frac{2^{5/4} r (r_u^*)^{1/4} \sqrt{p-r_u}}{\sqrt{\tau}\, d_{r_u^*}^{(0)}}
\varepsilon_0^{1/2}
\left( 2 d_1^{(0)} + \varepsilon_0 \right)^{1/2}.
\end{equation*}

Since $s_u(r_u)=0$ whenever $r_u \geq r_u^*$, combining the above two cases gives that 
\begin{equation*}
\begin{aligned}
\left\vert \left[ \widetilde{\mathbf{U}}_{\perp}^{\mathrm{s}\top}\mathbf{U}^* \right]_{\mathcal{S}_u^c} \right\vert_1
&\leq
\frac{2^{3/2}\left[ r(r_u^*-r_u)_+ - s_u(r_u) \right]}{\tau \left( d_{r_u^*}^{(0)} \right)^2}
\left( 2 d_1^{(0)} + \varepsilon_0 \right)\varepsilon_0 \\
&\quad +
\frac{2^{5/4} r (r_u^*)^{1/4} \sqrt{p-\max(r_u,r_u^*)}}{\sqrt{\tau}\, d_{r_u^*}^{(0)}}
\varepsilon_0^{1/2}
\left( 2 d_1^{(0)} + \varepsilon_0 \right)^{1/2},
\end{aligned}
\end{equation*}
which is exactly the desired bound for the left singular vectors. The right singular vector bound can be established using the same argument after replacing $(p,r_u,r_u^*,s_u,\mathbf{U})$ with $(q,r_v,r_v^*,s_v,\mathbf{V})$. This concludes the proof of Lemma \ref{lemma:approx-error}.
%\end{proof}

The following lemma specifies a set of desirable events that occur with high probability. For the rest of the analysis, we work under the assumption that these events hold in order to establish the central deterministic bounds.

\begin{lemma}[High-probability %good 
events]
\label{lemma:good-event}
Assume that Assumptions~\ref{assump:design-RSC}--\ref{assump:error-distribution} are satisfied. Let $\widehat{\RegM} = \widehat{\mathbf{U}} \widehat{\mathbf{D}} \widehat{\mathbf{V}}^{\top}$ be the localized Stage 2 estimator satisfying \eqref{eq:opt-constraints}, and assume that \eqref{eq:require-initial}, \eqref{eq:cond-lasso-l2-err}, and \eqref{eq:cond-O2} are satisfied. Fix any $0 < \delta < 1$ and define the regularization parameters $\lambda_u$ and $\lambda_v$ as in~\eqref{eq:lambdas-in-good-events}. Then with probability at least $1 - \delta$, it holds simultaneously that 
\begin{align}
\left\vert \frac{1}{n} \widetilde{\mathbf{U}}_{\mathrm{full}}^{\top} \mathbf{X}^{\top} \mathbf{E} \widehat{\mathbf{B}} \widehat{\mathbf{D}}^{-} \right\vert_{\infty} & \leq \lambda_u / 2, \label{eq:lambdau-cond-lemma-new} \\
\left\vert \frac{1}{n} \left( \mathbf{X} \mathbf{U}^* \right)^{\top} \mathbf{E} \widehat{\mathbf{B}} \widehat{\mathbf{D}}^{-} \right\vert_{\infty} & \leq \lambda_u / 2, \label{eq:lambdad-cond-lemma-new} \\
\left\vert \frac{1}{n} \left( \mathbf{X} \mathbf{U}^* \right)^{\top} \mathbf{E} \widetilde{\mathbf{V}}_{\mathrm{full}} \right\vert_{\infty} & \leq \lambda_v / 2. \label{eq:lambdav-cond-lemma-new}
\end{align}
\end{lemma}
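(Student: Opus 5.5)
The plan is to exploit the structure of $\widehat{\mathbf{B}}\widehat{\mathbf{D}}^{-}$. On the localized feasible set, $\widehat{\mathbf{B}}\widehat{\mathbf{D}}^{-} = \widehat{\mathbf{V}}\,\mathrm{diag}(\mathds{1}\{\widehat d_j>0\})$, and under \eqref{eq:cond-lasso-l2-err} all $\widehat d_j>0$. So this matrix is simply $\widehat{\mathbf{V}}$. The difficulty in \eqref{eq:lambdau-cond-lemma-new}--\eqref{eq:lambdad-cond-lemma-new} is that $\widehat{\mathbf{V}}$ depends on $\mathbf{E}$. I will split
\[
\widehat{\mathbf{V}} = \mathbf{V}^* + (\widehat{\mathbf{V}} - \mathbf{V}^*),
\]
so each of the first two quantities is bounded by a ``fixed-direction'' Gaussian term plus a ``perturbation'' term. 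In contrast, \eqref{eq:lambdav-cond-lemma-new} involves only $\mathbf{U}^*$ and $\widetilde{\mathbf{V}}_{\mathrm{full}}$. Both are deterministic given the source and independent of $\mathbf{E}$, so that bound is a direct Gaussian maximal inequality. I will allot failure probability $\delta/6$ to each of the (at most six) union-bound events.

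\textbf{Fixed-direction terms.} Let $\mathbf{w}$ be any column of $\mathbf{X}\widetilde{\mathbf{U}}_{\mathrm{full}}$ or of $\mathbf{X}\mathbf{U}^*$. By Assumption~\ref{assump:constrained-norm-design}(i), $\|\mathbf{w}\|_2^2\le \nu n$. For a deterministic unit vector $\mathbf{a}$ (a column of $\mathbf{V}^*$ or of $\widetilde{\mathbf{V}}_{\mathrm{full}}$), Assumption~\ref{assump:error-distribution} gives
\[
\frac{1}{n}\mathbf{w}^\top\mathbf{E}\mathbf{a}\sim \mathcal{N}\Bigl(0,\ \frac{\|\mathbf{w}\|_2^2\,\mathbf{a}^\top\bm\Sigma\mathbf{a}}{n^2}\Bigr),
\]
whose variance is at most $\nu\alpha_{\max}/n$. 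A Gaussian tail bound with a union over $pr$ entries (respectively $rr$, and $rq$ entries) then yields bounds of order $\sqrt{2\nu\alpha_{\max}\log(12pr/\delta)/n}$ and $\sqrt{2\nu\alpha_{\max}\log(6qr/\delta)/n}$. These are at most $\lambda_u/4$ and $\lambda_v/2$ by the choice \eqref{eq:lambdas-in-good-events}; the constant $8\sqrt2$ in $\lambda_u$ leaves room for the perturbation term. This already proves \eqref{eq:lambdav-cond-lemma-new}.

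\textbf{Perturbation terms.} This is the main obstacle. By Cauchy--Schwarz, each entry is bounded by $n^{-1}\|\mathbf{E}^\top\mathbf{w}\|_2\,\|\widehat{\mathbf{v}}_l-\mathbf{v}^*_l\|_2$.

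For the second factor, the localization constraints \eqref{eq:opt-constraints} and \eqref{eq:require-initial} give $\|\widehat{\bm\Delta}\|_{\mathrm F}\le 3R_{2,n}$. Condition \eqref{eq:cond-lasso-l2-err}, whose constant $18=3\cdot 6$ is designed for this, makes $\|\widehat{\bm\Delta}\|_2$ satisfy \eqref{eq:sign-consistency-condition} at $j=r$. Lemma~\ref{lemma:sign-consistency} then gives sign alignment, after which Lemma~\ref{lemma:perturb-bound} applies. Combining \eqref{eq:regM-frob-bd-1} and \eqref{eq:regM-frob-bd-3} with $\widehat d_j \gtrsim d_{\min}$ (Weyl), I expect to obtain
\[
\|\widehat{\mathbf{V}}-\mathbf{V}^*\|_{\mathrm F}\lesssim c\,\eta_r R_{2,n}/d_{\min}.
\]

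For the first factor, $\mathbf{E}^\top\mathbf{w}\sim\mathcal N(0,\|\mathbf{w}\|_2^2\bm\Sigma)$. The Hanson--Wright (Laurent--Massart type) inequality gives
\[
\|\mathbf{E}^\top\mathbf{w}\|_2^2\le \nu n\Bigl[\tr(\bm\Sigma)+c_1\max\Bigl\{\sqrt{\tr(\bm\Sigma^2)\log(6p/\delta)/c_2},\ \alpha_{\max}\log(6p/\delta)/c_2\Bigr\}\Bigr],
\]
uniformly over the $p$ columns with probability $1-\delta/6$. Multiplying the two factors, condition \eqref{eq:cond-O2} is exactly what makes the product at most $\lambda_u/4$; I would check the constant $9c^2$ against $(3c\eta_r R_{2,n}/d_{\min})^2$.

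Adding the fixed-direction and perturbation bounds gives \eqref{eq:lambdau-cond-lemma-new} and \eqref{eq:lambdad-cond-lemma-new}. Since $\widehat{\mathbf{V}}$ is controlled deterministically on these events, no chaining over the random $\widehat{\mathbf{V}}$ is needed. A final union bound over all the events gives probability at least $1-\delta$.
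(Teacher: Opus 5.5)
Your proposal is correct and follows essentially the paper's route: a fixed-direction Gaussian maximal bound over the $pr$, $r^2$ and $qr$ entries, plus a perturbation term bounded by Cauchy--Schwarz and the Gaussian norm concentration of Lemma~\ref{lemma:gaussian-vector-norm}, closed by condition~\eqref{eq:cond-O2}. As in the paper, the random factor is controlled deterministically through $\|\widehat{\bm{\Delta}}\|_{\mathrm{F}}\le 3R_{2,n}$, Weyl's inequality, and Lemmas~\ref{lemma:sign-consistency} and~\ref{lemma:perturb-bound}. Two differences are cosmetic: the paper splits $\widehat{\mathbf{B}}\widehat{\mathbf{D}}^{-}=\mathbf{B}^*\widehat{\mathbf{D}}^{-}+\widehat{\bm{\Delta}}^b\widehat{\mathbf{D}}^{-}$ rather than $\widehat{\mathbf{V}}=\mathbf{V}^*+(\widehat{\mathbf{V}}-\mathbf{V}^*)$, which affects only the universal constant in~\eqref{eq:cond-O2}, and with the stated $\lambda_v$ the third event fails with probability $\delta/3$ rather than $\delta/6$, so the budget is four events at $\delta/6$ plus one at $\delta/3$.
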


%\begin{proof}
\textit{Proof}. Denote by 
\[
\widetilde{\mathbf{X}} = \mathbf{X}\widetilde{\mathbf{U}}_{\mathrm{full}},
\qquad
\widetilde{\mathbf{E}} = \mathbf{E}\widetilde{\mathbf{V}}_{\mathrm{full}},
\qquad
\mathbf{X}_* = \mathbf{X}\mathbf{U}^*,
\qquad
\mathbf{E}_* = \mathbf{E}\mathbf{V}^*.
\]
Let $\widetilde{\mathbf{x}}^{(j)}$ and $\widetilde{\mathbf{e}}^{(j)}$ be the $j$th columns of $\widetilde{\mathbf{X}}$ and $\widetilde{\mathbf{E}}$, respectively, and  $\mathbf{x}_*^{(j)}$ and $\mathbf{e}_*^{(j)}$ the corresponding columns of $\mathbf{X}_*$ and $\mathbf{E}_*$. Since $\widehat{\RegM}$ satisfies \eqref{eq:opt-constraints}, it follows from Condition~\eqref{eq:require-initial} that 
\begin{equation}
\label{eq:Delta-bd-3R}
\left\Vert \widehat{\bm{\Delta}} \right\Vert_{\mathrm{F}} \leq 3 R_{2,n}.
\end{equation}
Observe that the Stage 2 objective and constraints are invariant under simultaneous permutations of the columns of $(\widehat{\mathbf{U}}, \widehat{\mathbf{V}})$ and of the diagonal entries of $\widehat{\mathbf{D}}$. We can thus choose a representative of the optimizer for which
\[
\widehat{d}_1 \geq \cdots \geq \widehat{d}_r \geq 0.
\]

In view of \eqref{eq:cond-lasso-l2-err}, it holds that 
\begin{equation}
\label{eq:delta-bd-good}
\left\Vert \widehat{\bm{\Delta}} \right\Vert_2
\leq
\left\Vert \widehat{\bm{\Delta}} \right\Vert_{\mathrm{F}}
\leq
3 R_{2,n}
\leq
\frac{\tau}{6 \sqrt{5}} \cdot \frac{d_r^*}{\sqrt{r}} \cdot \left( \frac{d_r^*}{d_1^*} \right)^{3/2}.
\end{equation}
In particular, the right-hand side of the expression above is at most $d_r^* / 4$. An application of Weyl's inequality gives that for each $1 \leq j \leq r$,
\[
\left| \widehat{d}_j - d_j^* \right|
\leq
\left\Vert \widehat{\bm{\Delta}} \right\Vert_2
\leq
\frac{d_r^*}{4}
\leq
\frac{d_j^*}{4}.
\]
Consequently, we have that $\widehat{d}_j \geq 3 d_j^* / 4 > 0$, and thus
\begin{equation}
\label{eq:D-inverse-bd}
\left| \widehat{\mathbf{D}}^- \right|_{\infty} \leq \frac{2}{d_{\min}},
\qquad
\left| \mathbf{D}^* \widehat{\mathbf{D}}^- \right|_{\infty} \leq 2.
\end{equation}

At this point, the assumptions of Lemma~\ref{lemma:sign-consistency} (with $j=r$) and Lemma~\ref{lemma:perturb-bound} are satisfied. Then it follows that 
\begin{equation}
\label{eq:abd-bd}
\left\Vert \widehat{\bm{\Delta}}^a \right\Vert_{\mathrm{F}} \leq c \eta_r \left\Vert \widehat{\bm{\Delta}} \right\Vert_{\mathrm{F}},
\qquad
\left\Vert \widehat{\bm{\Delta}}^b \right\Vert_{\mathrm{F}} \leq c \eta_r \left\Vert \widehat{\bm{\Delta}} \right\Vert_{\mathrm{F}},
\qquad
\left\Vert \widehat{\bm{\Delta}}^d \right\Vert_{\mathrm{F}} \leq \left\Vert \widehat{\bm{\Delta}} \right\Vert_{\mathrm{F}},
\end{equation}
where $c$ is the universal constant from Lemma~\ref{lemma:perturb-bound}, and hence the same constant as in \eqref{eq:cond-O2}. By a union bound, it suffices to show that each of \eqref{eq:lambdau-cond-lemma-new}--\eqref{eq:lambdav-cond-lemma-new} holds with probability at least $1 - \delta / 3$. We will verify these bounds one by one.

We first prove \eqref{eq:lambdau-cond-lemma-new}. Let us write
\[
\left| \frac{1}{n} \widetilde{\mathbf{U}}_{\mathrm{full}}^{\top} \mathbf{X}^{\top} \mathbf{E} \widehat{\mathbf{B}} \widehat{\mathbf{D}}^- \right|_{\infty}
\leq
T_{u,1} + T_{u,2},
\]
where
\[
T_{u,1}
=
\left| \frac{1}{n} \widetilde{\mathbf{X}}^{\top} \mathbf{E} \mathbf{B}^* \widehat{\mathbf{D}}^- \right|_{\infty},
\qquad
T_{u,2}
=
\left| \frac{1}{n} \widetilde{\mathbf{X}}^{\top} \mathbf{E} \widehat{\bm{\Delta}}^b \widehat{\mathbf{D}}^- \right|_{\infty}.
\]
We begin with $T_{u,1}$. In light of \eqref{eq:D-inverse-bd}, it holds that 
\[
T_{u,1}
=
\left| \frac{1}{n} \widetilde{\mathbf{X}}^{\top} \mathbf{E}_* \mathbf{D}^* \widehat{\mathbf{D}}^- \right|_{\infty}
\leq
\frac{2}{n} \left| \widetilde{\mathbf{X}}^{\top} \mathbf{E}_* \right|_{\infty}.
\]
For each $1 \leq i \leq p$ and $1 \leq j \leq r$, scalar $\widetilde{\mathbf{x}}^{(i)\top}\mathbf{e}_*^{(j)}$ is Gaussian with mean zero and variance
\[
\mathbb{E} \left[ \left( \widetilde{\mathbf{x}}^{(i)\top}\mathbf{e}_*^{(j)} \right)^2 \right]
=
\left( \mathbf{v}_j^{*\top}\bm{\Sigma}\mathbf{v}_j^* \right) \left\Vert \widetilde{\mathbf{x}}^{(i)} \right\Vert_2^2
\leq
\alpha_{\max} \left\Vert \mathbf{X}\widetilde{\mathbf{u}}_i^{(0)} \right\Vert_2^2
\leq
\nu \alpha_{\max} n.
\]
As a result, we have that 
\[
\mathbb{P} \left( T_{u,1} > \frac{\lambda_u}{4} \right)
\leq
2pr \exp \left( - \frac{n \lambda_u^2}{128 \nu \alpha_{\max}} \right)
\leq
\frac{\delta}{6}.
\]

We next analyze term $T_{u,2}$. By resorting to Lemma~\ref{lemma:smart-prod-max}, \eqref{eq:D-inverse-bd}, and \eqref{eq:abd-bd}, we can deduce that 
\begin{multline*}
T_{u,2}
\leq
\left(
\max_{1 \leq j \leq p}
\left\Vert \frac{1}{n}\mathbf{E}^{\top}\widetilde{\mathbf{x}}^{(j)} \right\Vert_2
\right)
\left(
\max_{1 \leq k \leq r}
\left\Vert \left( \widehat{\bm{\Delta}}^b \widehat{\mathbf{D}}^- \right)_{:k} \right\Vert_2
\right)
\\
\leq
\frac{2}{d_{\min}}
\left(
\max_{1 \leq j \leq p}
\left\Vert \frac{1}{n}\mathbf{E}^{\top}\widetilde{\mathbf{x}}^{(j)} \right\Vert_2
\right)
\left(
\max_{1 \leq k \leq r}
\left\Vert \widehat{\bm{\Delta}}^b_{:k} \right\Vert_2
\right)
\leq
\frac{6 c \eta_r R_{2,n}}{d_{\min}} M_u,
\end{multline*}
where
\[
M_u
=
\max_{1 \leq j \leq p}
\left\Vert \frac{1}{n}\mathbf{E}^{\top}\widetilde{\mathbf{x}}^{(j)} \right\Vert_2.
\]
For each $j$, if $\beta_j = \left\Vert \widetilde{\mathbf{x}}^{(j)} \right\Vert_2^2 / n$, we have that 
\[
\frac{1}{n}\mathbf{E}^{\top}\widetilde{\mathbf{x}}^{(j)}
\sim
\mathcal{N} \left( 0, \frac{\beta_j}{n}\bm{\Sigma} \right),
\]
and Assumption~\ref{assump:constrained-norm-design}(i) entails that $\beta_j \leq \nu$. Hence, it follows from Lemma~\ref{lemma:gaussian-vector-norm} and a union bound that with probability at least $1 - \delta / 6$,
\[
M_u^2
\leq
\frac{\nu}{n}
\left[
\tr(\bm{\Sigma})
+
c_1 \max\left\{
\sqrt{\frac{\tr(\bm{\Sigma}^2)\log(6p/\delta)}{c_2}},
\frac{\alpha_{\max}\log(6p/\delta)}{c_2}
\right\}
\right].
\]
Consequently, on the same event, it holds that 
\[
T_{u,2}^2
\leq
\frac{36 c^2 \eta_r^2 R_{2,n}^2 \nu}{d_{\min}^2 n}
\left[
\tr(\bm{\Sigma})
+
c_1 \max\left\{
\sqrt{\frac{\tr(\bm{\Sigma}^2)\log(6p/\delta)}{c_2}},
\frac{\alpha_{\max}\log(6p/\delta)}{c_2}
\right\}
\right].
\]
Note that Condition~\eqref{eq:cond-O2} bounds the right-hand side of the expression above by
\[
\frac{8 \nu \alpha_{\max}\log(12pr/\delta)}{n}
=
\frac{\lambda_u^2}{16}.
\]
Thus, we have that $T_{u,2} \leq \lambda_u / 4$ with probability at least $1 - \delta / 6$, which establishes \eqref{eq:lambdau-cond-lemma-new} with probability at least $1 - \delta / 3$.

We next aim to show \eqref{eq:lambdad-cond-lemma-new}. Observe that 
\[
\left| \frac{1}{n} \left( \mathbf{X}\mathbf{U}^* \right)^{\top} \mathbf{E} \widehat{\mathbf{B}} \widehat{\mathbf{D}}^- \right|_{\infty}
\leq
T_{d,1} + T_{d,2},
\]
where
\[
T_{d,1}
=
\left| \frac{1}{n} \mathbf{X}_*^{\top} \mathbf{E} \mathbf{B}^* \widehat{\mathbf{D}}^- \right|_{\infty},
\qquad
T_{d,2}
=
\left| \frac{1}{n} \mathbf{X}_*^{\top} \mathbf{E} \widehat{\bm{\Delta}}^b \widehat{\mathbf{D}}^- \right|_{\infty}.
\]
Similar as before, we can show that 
\[
T_{d,1} \leq \frac{2}{n} \left| \mathbf{X}_*^{\top}\mathbf{E}_* \right|_{\infty},
\]
and each entry of $\mathbf{X}_*^{\top}\mathbf{E}_*$ is Gaussian with variance at most $\nu \alpha_{\max} n$. Consequently, it holds that 
\[
\mathbb{P} \left( T_{d,1} > \frac{\lambda_u}{4} \right)
\leq
2r^2 \exp \left( - \frac{n \lambda_u^2}{128 \nu \alpha_{\max}} \right)
\leq
\frac{\delta}{6},
\]
since $r \leq p$.

For the second term above, it holds that 
\[
T_{d,2}
\leq
\frac{2}{d_{\min}}
\left(
\max_{1 \leq j \leq r}
\left\Vert \frac{1}{n}\mathbf{E}^{\top}\mathbf{x}_*^{(j)} \right\Vert_2
\right)
\left(
\max_{1 \leq k \leq r}
\left\Vert \widehat{\bm{\Delta}}^b_{:k} \right\Vert_2
\right)
\leq
\frac{6 c \eta_r R_{2,n}}{d_{\min}} M_{u,*},
\]
where
\[
M_{u,*}
=
\max_{1 \leq j \leq r}
\left\Vert \frac{1}{n}\mathbf{E}^{\top}\mathbf{x}_*^{(j)} \right\Vert_2.
\]
Using the same argument as above, with $r$ in place of $p$, we have that with probability at least $1 - \delta / 6$,
\[
M_{u,*}^2
\leq
\frac{\nu}{n}
\left[
\tr(\bm{\Sigma})
+
c_1 \max\left\{
\sqrt{\frac{\tr(\bm{\Sigma}^2)\log(6r/\delta)}{c_2}},
\frac{\alpha_{\max}\log(6r/\delta)}{c_2}
\right\}
\right].
\]
Since $r \leq p$, this bound is no larger than the corresponding bound for $M_u$, and Condition~\eqref{eq:cond-O2} again yields that $T_{d,2} \leq \lambda_u / 4$ with probability at least $1 - \delta / 6$. Hence, \eqref{eq:lambdad-cond-lemma-new} holds with probability at least $1 - \delta / 3$.

Finally, we establish \eqref{eq:lambdav-cond-lemma-new}. Let $\widetilde{\mathbf{v}}_j$ be the $j$th column of $\widetilde{\mathbf{V}}_{\mathrm{full}}$. For each $1 \leq i \leq r$ and $1 \leq j \leq q$, scalar $\mathbf{x}_*^{(i)\top}\widetilde{\mathbf{e}}^{(j)}$ is Gaussian with mean zero and variance
\[
\mathbb{E} \left[ \left( \mathbf{x}_*^{(i)\top}\widetilde{\mathbf{e}}^{(j)} \right)^2 \right]
=
\left( \widetilde{\mathbf{v}}_j^{\top}\bm{\Sigma}\widetilde{\mathbf{v}}_j \right) \left\Vert \mathbf{x}_*^{(i)} \right\Vert_2^2
\leq
\alpha_{\max} \left\Vert \mathbf{X}\mathbf{u}_i^* \right\Vert_2^2
\leq
\nu \alpha_{\max} n.
\]
Therefore, we can obtain that 
\[
\mathbb{P}
\left(
\left| \frac{1}{n} \left( \mathbf{X}\mathbf{U}^* \right)^{\top} \mathbf{E} \widetilde{\mathbf{V}}_{\mathrm{full}} \right|_{\infty}
>
\frac{\lambda_v}{2}
\right)
\leq
2qr \exp \left( - \frac{n \lambda_v^2}{8 \nu \alpha_{\max}} \right)
\leq
\frac{\delta}{3}.
\]
Combining the three bounds above completes the proof of Lemma \ref{lemma:good-event}.
%\end{proof}

The following two auxiliary lemmas are self-contained and will simplify several steps in the theoretical analysis that follows.

\begin{lemma}
\label{lemma:smart-prod-max}
Let $\mathbf{A} \in \mathbb{R}^{m \times n}$ and $\mathbf{B} \in \mathbb{R}^{n \times q}$ be arbitrary matrices. Denote the $i$th row of $\mathbf{A}$ as $\mathbf{A}_{i:}$ and the $j$th column of $\mathbf{B}$ as $\mathbf{B}_{:j}$. Then it holds that 
\begin{equation*}
\left\vert \mathbf{A} \mathbf{B} \right\vert_{\infty} \leq \left( \max_{1 \leq i \leq m} \left\Vert \mathbf{A}_{i:} \right\Vert_2 \right) \left( \max_{1 \leq j \leq q} \left\Vert \mathbf{B}_{:j} \right\Vert_2 \right).
\end{equation*}
\end{lemma}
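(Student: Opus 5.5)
The plan is to bound each entry of $\mathbf{A}\mathbf{B}$ separately and then take the maximum over all entries. Fix indices $1 \le i \le m$ and $1 \le j \le q$. The $(i,j)$ entry is an inner product of a row of $\mathbf{A}$ with a column of $\mathbf{B}$:
\[
(\mathbf{A}\mathbf{B})_{ij} = \sum_{k=1}^n \mathbf{A}_{ik}\mathbf{B}_{kj} = \langle \mathbf{A}_{i:}^{\top}, \mathbf{B}_{:j} \rangle .
\]
I will apply the Cauchy--Schwarz inequality in $\mathbb{R}^n$ to this inner product, which gives
\[
\left|(\mathbf{A}\mathbf{B})_{ij}\right| \le \left\Vert \mathbf{A}_{i:} \right\Vert_2 \left\Vert \mathbf{B}_{:j} \right\Vert_2 .
\]

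Next I bound the two factors on the right-hand side by their maxima over all rows and columns, respectively. This yields
\[
\left|(\mathbf{A}\mathbf{B})_{ij}\right| \le \Bigl( \max_{1\le i'\le m} \left\Vert \mathbf{A}_{i':} \right\Vert_2 \Bigr) \Bigl( \max_{1\le j'\le q} \left\Vert \mathbf{B}_{:j'} \right\Vert_2 \Bigr).
\]
The right-hand side no longer depends on $(i,j)$. Taking the maximum of the left-hand side over all $(i,j) \in [m]\times[q]$ turns it into $\vert \mathbf{A}\mathbf{B}\vert_{\infty}$, which is exactly the claim.

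There is no real obstacle in this argument. The only point that needs care is the index bookkeeping: the row $\mathbf{A}_{i:}$ and the column $\mathbf{B}_{:j}$ both live in $\mathbb{R}^n$, so Cauchy--Schwarz applies directly. Degenerate cases, such as zero rows or columns, are covered automatically.
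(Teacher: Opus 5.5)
Your proposal is correct and follows essentially the same route as the paper's proof: write each entry $(\mathbf{A}\mathbf{B})_{ij}$ as the inner product of the row $\mathbf{A}_{i:}$ with the column $\mathbf{B}_{:j}$, apply Cauchy--Schwarz, and take the maximum over $(i,j)$. Your extra step of first bounding the two norms by their maxima before maximizing over $(i,j)$ is just a slightly more explicit version of the paper's final step.
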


%\begin{proof}
\textit{Proof}. 
For any $1 \leq i \leq m$ and $1 \leq j \leq q$, the $(i,j)$th entry of $\mathbf{A}\mathbf{B}$ satisfies that 
\begin{equation*}
\left| \left[ \mathbf{A} \mathbf{B} \right]_{ij} \right|
= \left| \sum_{k=1}^n \mathbf{A}_{ik} \mathbf{B}_{kj} \right|
= \left| \langle \mathbf{A}_{i:}, \mathbf{B}_{:j} \rangle \right|
\leq \left\Vert \mathbf{A}_{i:} \right\Vert_2 \left\Vert \mathbf{B}_{:j} \right\Vert_2,
\end{equation*}
where the inequality is an immediate consequence of the Cauchy--Schwarz inequality. Taking the maximum over all $i$ and $j$ establishes the desired claim. This concludes the proof of Lemma \ref{lemma:smart-prod-max}.
%\end{proof}

\begin{lemma}[Gaussian vector norm concentration]
\label{lemma:gaussian-vector-norm}
Let $\mathbf{g} \sim \mathcal{N}(0,\bm{\Sigma})$ be a $d$-dimensional Gaussian random vector. Then we have that for each $t \geq 0$,
\begin{equation*}
\mathbb{P} \left\{ \left\Vert \mathbf{g} \right\Vert^2_2 - \operatorname{tr} \left( \bm{\Sigma} \right) \geq t \right\}
\leq \exp \left\{ - c_2 \min \left( \frac{t^2}{c_1^2\, \operatorname{tr} \left( \bm{\Sigma}^2 \right)} , \frac{t}{c_1\, \lambda_{\max} \left( \bm{\Sigma} \right)} \right) \right\},
\end{equation*}
where \(c_1, c_2 > 0\) are universal constants.
\end{lemma}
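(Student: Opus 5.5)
We need to prove the Gaussian vector norm concentration lemma (the last statement). The natural route is to reduce it to the Hanson--Wright inequality for independent standard Gaussians, or equivalently to a Bernstein inequality for weighted chi-squares.

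\emph{Diagonalization.} We write $\mathbf{g} = \bm{\Sigma}^{1/2}\mathbf{z}$ with $\mathbf{z}\sim\mathcal{N}(0,\mathbf{I}_d)$. Taking the eigendecomposition $\bm{\Sigma} = \mathbf{Q}\bm{\Lambda}\mathbf{Q}^\top$ and using rotation invariance of $\mathbf{z}$, we get the distributional identity $\|\mathbf{g}\|_2^2 = \mathbf{z}^\top\bm{\Sigma}\mathbf{z} \overset{d}{=} \sum_{i=1}^d \lambda_i z_i^2$. Here the $\lambda_i\ge 0$ are the eigenvalues of $\bm{\Sigma}$, and the $z_i$ are i.i.d.\ standard normal. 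The centered quantity is therefore
\[
\|\mathbf{g}\|_2^2 - \operatorname{tr}(\bm{\Sigma}) \overset{d}{=} \sum_{i=1}^d \lambda_i (z_i^2-1),
\]
a weighted sum of independent centered sub-exponential variables.

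\emph{Bernstein bound.} Each $z_i^2-1$ is sub-exponential with $\|z_i^2-1\|_{\psi_1}\le K$ for a universal $K$. Bernstein's inequality for weighted sums of independent sub-exponential variables (e.g.\ Vershynin, Theorem 2.8.2) gives, for every $t\ge 0$,
\[
\mathbb{P}\Bigl\{\sum_i \lambda_i(z_i^2-1)\ge t\Bigr\} \le \exp\Bigl\{-c\min\Bigl(\frac{t^2}{K^2\|\bm\lambda\|_2^2},\frac{t}{K\|\bm\lambda\|_\infty}\Bigr)\Bigr\}.
\]
We then substitute $\|\bm\lambda\|_2^2=\operatorname{tr}(\bm{\Sigma}^2)$ and $\|\bm\lambda\|_\infty=\lambda_{\max}(\bm{\Sigma})$. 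Setting $c_1=K$ and $c_2=c$ yields exactly the stated form. Alternatively, the same bound follows from the Hanson--Wright inequality applied to $\mathbf{z}^\top\bm{\Sigma}\mathbf{z}$, using $\|\bm{\Sigma}\|_{\mathrm{F}}^2=\operatorname{tr}(\bm\Sigma^2)$ and $\|\bm\Sigma\|_2=\lambda_{\max}(\bm\Sigma)$.

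\emph{Care point.} The main thing to check is that the constants are genuinely universal, independent of $d$ and $\bm\Sigma$. Bernstein's inequality depends only on the $\psi_1$ norm, and $\|z^2-1\|_{\psi_1}$ is an absolute constant, so this holds. Degenerate cases need no separate treatment: zero eigenvalues simply drop out of the sum, and if $\bm\Sigma=0$ the left side is $0$ for $t>0$, so the bound holds trivially. If one prefers an explicit argument, the Laplace-transform (Laurent--Massart) bound $\log\mathbb{E}e^{s\lambda(z^2-1)}\le s^2\lambda^2/(1-2s\lambda)$ for $0<s<1/(2\lambda)$ gives the same conclusion after optimizing over $s$.
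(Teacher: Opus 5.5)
Your proof is correct and follows essentially the same route as the paper: diagonalize $\bm{\Sigma}$ to write $\|\mathbf{g}\|_2^2-\operatorname{tr}(\bm{\Sigma})$ as $\sum_j \lambda_j (Z_j^2-1)$, use $\|Z_j^2-1\|_{\psi_1}\le K$ for a universal $K$, and apply Vershynin's Bernstein inequality for weighted sums of sub-exponential variables with $\sum_j\lambda_j^2=\operatorname{tr}(\bm{\Sigma}^2)$ and $\max_j\lambda_j=\lambda_{\max}(\bm{\Sigma})$. The cited Bernstein bound is two-sided with a prefactor $2$, a point your proof shares with the paper's; the one-sided bound without that prefactor follows from the same moment-generating-function argument, which your Laurent--Massart remark already covers.
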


%\begin{proof}
\textit{Proof}. 
Let $\bm{\Sigma} = \mathbf{U} \bm{\Lambda} \mathbf{U}^{\top}$ be the eigen-decomposition of $\bm{\Sigma}$, where $\mathbf{U} \in \mathbb{R}^{d \times d}$ is orthonormal and $\bm{\Lambda} = \operatorname{diag}(\lambda_1, \ldots, \lambda_d)$ with $\lambda_j \geq 0$. Let $\mathbf{z} = (Z_1,\ldots,Z_d)^\top \sim \mathcal{N}(0,\mathbf{I}_d)$. Then it holds that 
\[
\mathbf{g} \overset{d}{=} \mathbf{U} \bm{\Lambda}^{1/2} \mathbf{z}.
\]
Consequently, we have that
\begin{equation*}
\left\Vert \mathbf{g} \right\Vert_2^2
\overset{d}{=} \left\Vert \bm{\Lambda}^{1/2} \mathbf{z} \right\Vert_2^2
= \sum_{j=1}^d \lambda_j Z_j^2,
\end{equation*}
and thus
\begin{equation}
\label{eq:proof-gaussian-vector-norm-1}
\left\Vert \mathbf{g} \right\Vert_2^2 - \operatorname{tr}(\bm{\Sigma})
\overset{d}{=} \sum_{j=1}^d \lambda_j (Z_j^2 - 1).
\end{equation}

By invoking Lemma 2.7.4 of~\cite{vershynin2018high}, random variables $Z_j^2 - 1$ are independent, mean-zero, and sub-exponential, with $\|Z_j^2 - 1\|_{\psi_1} \leq K$ for some universal constant $K>0$. Then an application of Theorem 2.8.3 from the same reference to the weighted sum in~\eqref{eq:proof-gaussian-vector-norm-1} gives that for all $t \geq 0$,
\begin{equation}
\label{eq:proof-gaussian-vector-norm-2}
\mathbb{P} \left\{ \sum_{j=1}^d \lambda_j (Z_j^2 - 1) \geq t \right\}
\leq \exp \left\{ - c_2 \min \left( \frac{t^2}{K^2 \sum_{j=1}^d \lambda_j^2}, \frac{t}{K \max_j \lambda_j} \right) \right\},
\end{equation}
where $c_2 > 0$ is a universal constant. Therefore, since $\operatorname{tr}(\bm{\Sigma}^2) = \sum_{j=1}^d \lambda_j^2$ and $\lambda_{\max}(\bm{\Sigma}) = \max_j \lambda_j$, the claimed bound follows from \eqref{eq:proof-gaussian-vector-norm-1} and \eqref{eq:proof-gaussian-vector-norm-2} after absorbing $K$ into the universal constant $c_1$. This completes the proof of Lemma \ref{lemma:gaussian-vector-norm}.
%\end{proof}

\subsection{Proof of Theorem~\ref{thm:spectral-trans}}
\label{sec:proof-upp-bd-smart}

To simplify the notation, let us write $\mathbf{U}^{\mathrm{s}}(r_u)$ and $\mathbf{U}^{\mathrm{s}}_{\perp}(r_u)$ as $\mathbf{U}^{\mathrm{s}}$ and $\mathbf{U}^{\mathrm{s}}_{\perp}$, respectively. Similarly, we write $\mathbf{V}^{\mathrm{s}}(r_v)$ and $\mathbf{V}^{\mathrm{s}}_{\perp}(r_v)$ as $\mathbf{V}^{\mathrm{s}}$ and $\mathbf{V}^{\mathrm{s}}_{\perp}$. We also define $\widetilde{\mathbf{U}}^{\mathrm{s}}$, $\widetilde{\mathbf{U}}^{\mathrm{s}}_{\perp}$, $\widetilde{\mathbf{V}}^{\mathrm{s}}$, and $\widetilde{\mathbf{V}}^{\mathrm{s}}_{\perp}$ analogously. Throughout the remainder of the proof, we will abbreviate $s_u(r_u)$ and $s_v(r_v)$ as $s_u$ and $s_v$, respectively.

We start with defining the supports associated with the source-truncated orthogonal complements
\begin{equation}
\label{eq:def-support-u}
\begin{aligned}
\mathcal{S}_u & \coloneqq \left\{ (i, j) : \left[ \mathbf{U}^{\mathrm{s} \top}_{\perp} \mathbf{U}^* \right]_{ij} \neq 0 \right\} \subseteq [p - r_u] \times [r], \\
\mathcal{S}_u^c & \coloneqq \left\{ (i, j) : \left[ \mathbf{U}^{\mathrm{s} \top}_{\perp} \mathbf{U}^* \right]_{ij} = 0 \right\} \subseteq [p - r_u] \times [r].
\end{aligned}
\end{equation}
From the definition of $s_u$, we see that $\vert \mathcal{S}_u \vert = s_u$ and $\vert \mathcal{S}_u^c \vert = r(p - r_u) - s_u$.
Similarly, in the $v$-direction denote by 
\begin{equation}
\label{eq:def-support-v}
\begin{aligned}
\mathcal{S}_v & \coloneqq \left\{ (i, j) : \left[ \mathbf{V}^{\mathrm{s} \top}_{\perp} \mathbf{V}^* \right]_{ij} \neq 0 \right\} \subseteq [q - r_v] \times [r], \\
\mathcal{S}_v^c & \coloneqq \left\{ (i, j) : \left[ \mathbf{V}^{\mathrm{s} \top}_{\perp} \mathbf{V}^* \right]_{ij} = 0 \right\} \subseteq [q - r_v] \times [r],
\end{aligned}
\end{equation}
so that $\vert \mathcal{S}_v \vert = s_v$ and $\vert \mathcal{S}_v^c \vert = r(q - r_v) - s_v$.
For convenience, we also introduce the full source-augmented orthogonal basis matrices
\begin{equation*}
\widetilde{\mathbf{U}}_{\mathrm{full}} \coloneqq \begin{bmatrix} \widetilde{\mathbf{U}}^{\mathrm{s}} & \widetilde{\mathbf{U}}^{\mathrm{s}}_{\perp} \end{bmatrix} \in \mathbb{R}^{p \times p}, \qquad
\widetilde{\mathbf{V}}_{\mathrm{full}} \coloneqq \begin{bmatrix} \widetilde{\mathbf{V}}^{\mathrm{s}} & \widetilde{\mathbf{V}}^{\mathrm{s}}_{\perp} \end{bmatrix} \in \mathbb{R}^{q \times q}.
\end{equation*}

Under Condition~\eqref{eq:require-initial} and constraint~\eqref{eq:opt-constraints}, it follows from the triangle inequality that 
\begin{equation}
\label{eq:cond-lasso-l1-err}
\left\Vert \widehat{\bm{\Delta}} \right\Vert_{\mathrm{F}} \leq 3 R_{2,n} \qquad \text{and} \qquad \left\Vert \widehat{\bm{\Delta}} \right\Vert_{1,2} \leq 3 R_{1,n}.
\end{equation}
Further, \eqref{eq:cond-lasso-l2-err} implies that $3R_{2,n} < d_r^*/4$. Hence, with the aid of Weyl's inequality, we can deduce that 
\[
\widehat{d}_j \geq d_j^* - \left\Vert \widehat{\bm{\Delta}} \right\Vert_2 \geq d_j^* - \left\Vert \widehat{\bm{\Delta}} \right\Vert_{\mathrm{F}} \geq \frac{3}{4} d_j^* > 0,
\qquad 1 \leq j \leq r.
\]
In particular, it holds that $\widehat{\mathbf{D}}^- = \widehat{\mathbf{D}}^{-1}$ and $\widehat{\mathbf{B}}\widehat{\mathbf{D}}^- = \widehat{\mathbf{V}}$.

Given the regularization parameters
\begin{equation*}
\lambda_u = 8 \sqrt{2} \cdot \sqrt{ \frac{ \nu \alpha_{\max} \log \left( 12pr / \delta \right) }{ n } }, \qquad
\lambda_v = 2 \sqrt{2} \cdot \sqrt{ \frac{ \nu \alpha_{\max} \log \left( 6qr / \delta \right) }{ n } },
\end{equation*}
it follows from Lemma~\ref{lemma:good-event}, Conditions~\eqref{eq:require-initial}--\eqref{eq:cond-lasso-l2-err}, and Condition~\eqref{eq:cond-O2} that the following concentration inequalities hold with probability at least $1 - \delta$,
\begin{align}
\left\vert \frac{1}{n} \widetilde{\mathbf{U}}^{\top}_{\mathrm{full}} \mathbf{X}^{\top} \mathbf{E} \widehat{\mathbf{B}} \widehat{\mathbf{D}}^- \right\vert_{\infty} & \leq \frac{\lambda_u}{2}, \label{eq:lambdau-cond} \\
\left\vert \frac{1}{n} \left( \mathbf{X} \mathbf{U}^* \right)^{\top} \mathbf{E} \widehat{\mathbf{B}} \widehat{\mathbf{D}}^- \right\vert_{\infty} & \leq \frac{\lambda_u}{2}, \label{eq:lambdad-cond} \\
\left\vert \frac{1}{n} \left( \mathbf{X} \mathbf{U}^* \right)^{\top} \mathbf{E} \widetilde{\mathbf{V}}_{\mathrm{full}} \right\vert_{\infty} & \leq \frac{\lambda_v}{2}. \label{eq:lambdav-cond}
\end{align}
Accordingly, it remains only to establish the desired error bound deterministically on the event when \eqref{eq:lambdau-cond}--\eqref{eq:lambdav-cond} hold.

From Conditions~\eqref{eq:require-initial}--\eqref{eq:opt-constraints} and the optimality of $(\widehat{\mathbf{U}},\widehat{\mathbf{D}},\widehat{\mathbf{V}})$, we can deduce that 
\begin{align*}
& \quad \frac{1}{2 n} \left\Vert \mathbf{Y} - \mathbf{X}  \widehat{\mathbf{U}} \widehat{\mathbf{D}} \widehat{\mathbf{V}}^{\top}
\right\Vert^2_{\mathrm{F}} + \lambda_u \vert \widetilde{\mathbf{U}}^{\mathrm{s} \top}_{\perp} \widehat{\mathbf{U}} \widehat{\mathbf{D}} \vert_1 + \lambda_v \vert \widetilde{\mathbf{V}}^{\mathrm{s} \top}_{\perp} \widehat{\mathbf{V}} \widehat{\mathbf{D}} \vert_1 \\
& \leq \frac{1}{2 n} \left\Vert \mathbf{Y} - \mathbf{X}  \mathbf{U}^{*} \mathbf{D}^{*} \mathbf{V}^{* \top}
\right\Vert^2_{\mathrm{F}} + \lambda_u \vert \widetilde{\mathbf{U}}^{\mathrm{s} \top}_{\perp} \mathbf{U}^{*} \mathbf{D}^{*} \vert_1 + \lambda_v \vert \widetilde{\mathbf{V}}^{\mathrm{s} \top}_{\perp} \mathbf{V}^{*} \mathbf{D}^{*} \vert_1.
\end{align*}
Combining this inequality with~\eqref{eq:model-assump} and the definitions in~\eqref{eq:AB-def}, we can show that 
\begin{equation}
\label{eq:pred-upp-bd}
\frac{1}{2n} \Vert \mathbf{X} \widehat{\mathbf{\Delta}} \Vert^2_{\mathrm{F}} \leq  \left\langle \frac{1}{n} \mathbf{X}^{\top}\mathbf{E},  \widehat{\mathbf{\Delta}} \right\rangle + \lambda_u \left\{ \left\vert \widetilde{\mathbf{U}}^{\mathrm{s} \top}_{\perp} \mathbf{A}^{*} \right\vert_1 - \left\vert \widetilde{\mathbf{U}}^{\mathrm{s} \top}_{\perp} \widehat{\mathbf{A}} \right\vert_1 \right\} + \lambda_v \left\{ \left\vert \widetilde{\mathbf{V}}^{\mathrm{s} \top}_{\perp} \mathbf{B}^{*} \right\vert_1 - \left\vert \widetilde{\mathbf{V}}^{\mathrm{s} \top}_{\perp} \widehat{\mathbf{B}} \right\vert_1 \right\}.
\end{equation}
To deal with the stochastic inner product, note that
\begin{align*}
\widehat{\mathbf{\Delta}} &= \widehat{\mathbf{U}} \widehat{\mathbf{D}} \widehat{\mathbf{V}}^{\top} - \mathbf{U}^* \mathbf{D}^* \mathbf{V}^{* \top} = \widehat{\mathbf{A}} \widehat{\mathbf{D}}^- \widehat{\mathbf{B}}^{\top} - \mathbf{U}^* \mathbf{B}^{* \top} \\
&= \widehat{\mathbf{\Delta}}^a \left( \widehat{\mathbf{B}} \widehat{\mathbf{D}}^- \right)^{\top} + \mathbf{A}^* \widehat{\mathbf{D}}^- \widehat{\mathbf{B}}^{\top} - \mathbf{U}^* \mathbf{B}^{* \top} \\
& = \widehat{\mathbf{\Delta}}^a \left( \widehat{\mathbf{B}} \widehat{\mathbf{D}}^- \right)^{\top} - \mathbf{U}^* \widehat{\mathbf{\Delta}}^d \left( \widehat{\mathbf{B}} \widehat{\mathbf{D}}^- \right)^{\top} + \mathbf{U}^* \left( \widehat{\mathbf{\Delta}}^{b} \right)^{\top}.
\end{align*}
Then it holds that 
\begin{equation}
\label{eq:inner-prod-decompose}
\left\langle \frac{1}{n} \mathbf{X}^{\top}\mathbf{E},  \widehat{\mathbf{\Delta}} \right\rangle = \left\langle \frac{1}{n} \mathbf{X}^{\top} \mathbf{E} \widehat{\mathbf{B}} \widehat{\mathbf{D}}^- , \widehat{\mathbf{\Delta}}^a \right\rangle - \left\langle \frac{1}{n} \mathbf{U}^{* \top} \mathbf{X}^{\top} \mathbf{E} \widehat{\mathbf{B}} \widehat{\mathbf{D}}^-, \widehat{\mathbf{\Delta}}^d  \right\rangle + \left\langle \frac{1}{n} \mathbf{E}^{\top} \mathbf{X} \mathbf{U}^{*}, \widehat{\mathbf{\Delta}}^b \right\rangle.
\end{equation}

We next decompose
\begin{equation*}
\widehat{\mathbf{\Delta}}^a = \widetilde{\mathbf{U}}^{\mathrm{s}} \widetilde{\mathbf{U}}^{\mathrm{s} \top} \widehat{\mathbf{\Delta}}^a + \widetilde{\mathbf{U}}^{\mathrm{s}}_{\perp} \widetilde{\mathbf{U}}^{\mathrm{s} \top}_{\perp} \widehat{\mathbf{\Delta}}^a.
\end{equation*}
It follows that
\begin{align*}
\left\langle \frac{1}{n} \mathbf{X}^{\top} \mathbf{E} \widehat{\mathbf{B}} \widehat{\mathbf{D}}^- , \widehat{\mathbf{\Delta}}^a \right\rangle &= \left\langle \frac{1}{n} \widetilde{\mathbf{U}}^{\mathrm{s} \top} \mathbf{X}^{\top} \mathbf{E} \widehat{\mathbf{B}} \widehat{\mathbf{D}}^- , \widetilde{\mathbf{U}}^{\mathrm{s} \top} \widehat{\mathbf{\Delta}}^a \right\rangle + \left\langle \frac{1}{n} \widetilde{\mathbf{U}}^{\mathrm{s} \top}_{\perp} \mathbf{X}^{\top} \mathbf{E} \widehat{\mathbf{B}} \widehat{\mathbf{D}}^- , \widetilde{\mathbf{U}}^{\mathrm{s} \top}_{\perp} \widehat{\mathbf{\Delta}}^a \right\rangle \\
& \leq \left\vert \frac{1}{n} \widetilde{\mathbf{U}}^{\mathrm{s} \top} \mathbf{X}^{\top} \mathbf{E} \widehat{\mathbf{B}} \widehat{\mathbf{D}}^- \right\vert_{\infty} \left\vert \widetilde{\mathbf{U}}^{\mathrm{s} \top} \widehat{\mathbf{\Delta}}^a \right\vert_1 + \left\vert \frac{1}{n} \widetilde{\mathbf{U}}^{\mathrm{s} \top}_{\perp} \mathbf{X}^{\top} \mathbf{E} \widehat{\mathbf{B}} \widehat{\mathbf{D}}^- \right\vert_{\infty} \left\vert \widetilde{\mathbf{U}}^{\mathrm{s} \top}_{\perp} \widehat{\mathbf{\Delta}}^a \right\vert_1.
\end{align*}
In light of 
\begin{align*}
& \quad \max \left\{ \left\vert \frac{1}{n} \widetilde{\mathbf{U}}^{\mathrm{s} \top} \mathbf{X}^{\top} \mathbf{E} \widehat{\mathbf{B}} \widehat{\mathbf{D}}^- \right\vert_{\infty},  \left\vert \frac{1}{n} \widetilde{\mathbf{U}}^{\mathrm{s} \top}_{\perp} \mathbf{X}^{\top} \mathbf{E} \widehat{\mathbf{B}} \widehat{\mathbf{D}}^- \right\vert_{\infty} \right\} = \left\vert \frac{1}{n} \widetilde{\mathbf{U}}^{\top}_{\mathrm{full}} \mathbf{X}^{\top} \mathbf{E} \widehat{\mathbf{B}} \widehat{\mathbf{D}}^- \right\vert_{\infty},
\end{align*}
from \eqref{eq:lambdau-cond} we have that 
\begin{align*}
\left\langle \frac{1}{n} \mathbf{X}^{\top} \mathbf{E} \widehat{\mathbf{B}} \widehat{\mathbf{D}}^- , \widehat{\mathbf{\Delta}}^a \right\rangle & \leq \frac{\lambda_u}{2} \left\vert \widetilde{\mathbf{U}}^{\mathrm{s} \top} \widehat{\mathbf{\Delta}}^a \right\vert_1 + \frac{\lambda_u}{2} \left\vert \widetilde{\mathbf{U}}^{\mathrm{s} \top}_{\perp} \widehat{\mathbf{\Delta}}^a \right\vert_1 \\
& \leq \frac{\lambda_u}{2} \left\vert \widetilde{\mathbf{U}}^{\mathrm{s} \top} \widehat{\mathbf{\Delta}}^a \right\vert_1 + \frac{\lambda_u}{2} \left\vert \left[ \widetilde{\mathbf{U}}^{\mathrm{s} \top}_{\perp} \widehat{\mathbf{\Delta}}^a \right]_{\mathcal{S}_u} \right\vert_1 + \frac{\lambda_u}{2} \left\vert \left[ \widetilde{\mathbf{U}}^{\mathrm{s} \top}_{\perp} \widehat{\mathbf{\Delta}}^a \right]_{\mathcal{S}_u^c} \right\vert_1.
\end{align*}
With the aid of 
\begin{equation*}
\left\vert \left[ \widetilde{\mathbf{U}}^{\mathrm{s} \top}_{\perp} \widehat{\mathbf{\Delta}}^a \right]_{\mathcal{S}_u^c} \right\vert_1 \leq \left\vert \left[ \widetilde{\mathbf{U}}^{\mathrm{s} \top}_{\perp} \widehat{\mathbf{A}} \right]_{\mathcal{S}_u^c} \right\vert_1 + \left\vert \left[ \widetilde{\mathbf{U}}^{\mathrm{s} \top}_{\perp} \mathbf{A}^* \right]_{\mathcal{S}_u^c} \right\vert_1,
\end{equation*}
we can obtain that 
\begin{equation}
\label{eq:inner-prod-term1-bd}
\left\langle \frac{1}{n} \mathbf{X}^{\top} \mathbf{E} \widehat{\mathbf{B}} \widehat{\mathbf{D}}^- , \widehat{\mathbf{\Delta}}^a \right\rangle \leq \frac{\lambda_u}{2} \left\vert \widetilde{\mathbf{U}}^{\mathrm{s} \top} \widehat{\mathbf{\Delta}}^a \right\vert_1 + \frac{\lambda_u}{2} \left\vert \left[ \widetilde{\mathbf{U}}^{\mathrm{s} \top}_{\perp} \widehat{\mathbf{\Delta}}^a \right]_{\mathcal{S}_u} \right\vert_1
+ \frac{\lambda_u}{2} \left\vert \left[ \widetilde{\mathbf{U}}^{\mathrm{s} \top}_{\perp} \widehat{\mathbf{A}} \right]_{\mathcal{S}_u^c} \right\vert_1 + \frac{\lambda_u}{2} \left\vert \left[ \widetilde{\mathbf{U}}^{\mathrm{s} \top}_{\perp} \mathbf{A}^* \right]_{\mathcal{S}_u^c} \right\vert_1.
\end{equation}

On the other hand, it holds that 
\begin{align*}
\left\vert \widetilde{\mathbf{U}}^{\mathrm{s} \top}_{\perp} \mathbf{A}^{*} \right\vert_1 - \left\vert \widetilde{\mathbf{U}}^{\mathrm{s} \top}_{\perp} \widehat{\mathbf{A}} \right\vert_1 &= \left\vert \left[ \widetilde{\mathbf{U}}^{\mathrm{s} \top}_{\perp} \mathbf{A}^{*} \right]_{\mathcal{S}_u} \right\vert_1 - \left\vert \left[ \widetilde{\mathbf{U}}^{\mathrm{s} \top}_{\perp} \widehat{\mathbf{A}} \right]_{\mathcal{S}_u} \right\vert_1 + \left\vert \left[ \widetilde{\mathbf{U}}^{\mathrm{s} \top}_{\perp} \mathbf{A}^{*} \right]_{\mathcal{S}_u^c} \right\vert_1 - \left\vert \left[ \widetilde{\mathbf{U}}^{\mathrm{s} \top}_{\perp} \widehat{\mathbf{A}} \right]_{\mathcal{S}_u^c} \right\vert_1 \\
& \leq \left\vert \left[ \widetilde{\mathbf{U}}^{\mathrm{s} \top}_{\perp} \mathbf{A}^{*} \right]_{\mathcal{S}_u} -  \left[ \widetilde{\mathbf{U}}^{\mathrm{s} \top}_{\perp} \widehat{\mathbf{A}} \right]_{\mathcal{S}_u} \right\vert_1 + \left\vert \left[ \widetilde{\mathbf{U}}^{\mathrm{s} \top}_{\perp} \mathbf{A}^{*} \right]_{\mathcal{S}_u^c} \right\vert_1 - \left\vert \left[ \widetilde{\mathbf{U}}^{\mathrm{s} \top}_{\perp} \widehat{\mathbf{A}} \right]_{\mathcal{S}_u^c} \right\vert_1 \\
& = \left\vert  \left[ \widetilde{\mathbf{U}}^{\mathrm{s} \top}_{\perp} \widehat{\mathbf{\Delta}}^a \right]_{\mathcal{S}_u} \right\vert_1 + \left\vert \left[ \widetilde{\mathbf{U}}^{\mathrm{s} \top}_{\perp} \mathbf{A}^{*} \right]_{\mathcal{S}_u^c} \right\vert_1 - \left\vert \left[ \widetilde{\mathbf{U}}^{\mathrm{s} \top}_{\perp} \widehat{\mathbf{A}} \right]_{\mathcal{S}_u^c} \right\vert_1. \label{eq:UTA-bound} \numberthis
\end{align*}
Hence, combining \eqref{eq:inner-prod-term1-bd} and \eqref{eq:UTA-bound} leads to 
\begin{align*}
& \quad \left\langle \frac{1}{n} \mathbf{X}^{\top} \mathbf{E} \widehat{\mathbf{B}} \widehat{\mathbf{D}}^- , \widehat{\mathbf{\Delta}}^a \right\rangle + \lambda_u \left\{ \left\vert \widetilde{\mathbf{U}}^{\mathrm{s} \top}_{\perp} \mathbf{A}^{*} \right\vert_1 - \left\vert \widetilde{\mathbf{U}}^{\mathrm{s} \top}_{\perp} \widehat{\mathbf{A}} \right\vert_1 \right\} \\
& \leq \frac{\lambda_u}{2} \left\vert \widetilde{\mathbf{U}}^{\mathrm{s} \top} \widehat{\mathbf{\Delta}}^a \right\vert_1 + \frac{3\lambda_u}{2} \left\vert \left[ \widetilde{\mathbf{U}}^{\mathrm{s} \top}_{\perp} \widehat{\mathbf{\Delta}}^a \right]_{\mathcal{S}_u} \right\vert_1 + \frac{3\lambda_u}{2}  \left\vert \left[ \widetilde{\mathbf{U}}^{\mathrm{s} \top}_{\perp} \mathbf{A}^* \right]_{\mathcal{S}_u^c} \right\vert_1 - \frac{\lambda_u}{2} \left\vert \left[ \widetilde{\mathbf{U}}^{\mathrm{s} \top}_{\perp} \widehat{\mathbf{A}} \right]_{\mathcal{S}_u^c} \right\vert_1 \\
& \leq \frac{3\lambda_u}{2} \left\vert \widetilde{\mathbf{U}}^{\mathrm{s} \top} \widehat{\mathbf{\Delta}}^a \right\vert_1 + \frac{3\lambda_u}{2} \left\vert \left[ \widetilde{\mathbf{U}}^{\mathrm{s} \top}_{\perp} \widehat{\mathbf{\Delta}}^a \right]_{\mathcal{S}_u} \right\vert_1 + \frac{3\lambda_u}{2}  \left\vert \left[ \widetilde{\mathbf{U}}^{\mathrm{s} \top}_{\perp} \mathbf{A}^* \right]_{\mathcal{S}_u^c} \right\vert_1 . \label{eq:product-term1-bd} \numberthis
\end{align*}

Observe that $\left\Vert \widehat{\mathbf{\Delta}}^{a} \right\Vert^2_{\mathrm{F}} = \left\Vert \widetilde{\mathbf{U}}^{\mathrm{s} \top} \widehat{\mathbf{\Delta}}^{a} \right\Vert^2_{\mathrm{F}} + \left\Vert \widetilde{\mathbf{U}}^{\mathrm{s} \top}_{\perp} \widehat{\mathbf{\Delta}}^{a} \right\Vert^2_{\mathrm{F}}$. Then it follows that 
\begin{equation*}
\left\vert \widetilde{\mathbf{U}}^{\mathrm{s} \top} \widehat{\mathbf{\Delta}}^a \right\vert_1 \leq \sqrt{r_u r} \left\Vert \widetilde{\mathbf{U}}^{\mathrm{s} \top} \widehat{\mathbf{\Delta}}^a \right\Vert_{\mathrm{F}} \leq \sqrt{r_u r} \left\Vert \widehat{\mathbf{\Delta}}^a \right\Vert_{\mathrm{F}}
\end{equation*}
and
\begin{equation*}
\left\vert \left[ \widetilde{\mathbf{U}}^{\mathrm{s} \top}_{\perp} \widehat{\mathbf{\Delta}}^a \right]_{\mathcal{S}_u} \right\vert_1 \leq \sqrt{s_u} \left\Vert \left[ \widetilde{\mathbf{U}}^{\mathrm{s} \top}_{\perp} \widehat{\mathbf{\Delta}}^a \right]_{\mathcal{S}_u} \right\Vert_{\mathrm{F}} \leq \sqrt{s_u} \left\Vert \widetilde{\mathbf{U}}^{\mathrm{s} \top}_{\perp} \widehat{\mathbf{\Delta}}^a  \right\Vert_{\mathrm{F}} \leq \sqrt{s_u} \left\Vert \widehat{\mathbf{\Delta}}^a  \right\Vert_{\mathrm{F}}.
\end{equation*}
Consequently, we can deduce that 
\begin{equation*}
\left\vert \widetilde{\mathbf{U}}^{\mathrm{s} \top} \widehat{\mathbf{\Delta}}^a \right\vert_1 + \left\vert \left[ \widetilde{\mathbf{U}}^{\mathrm{s} \top}_{\perp} \widehat{\mathbf{\Delta}}^a \right]_{\mathcal{S}_u} \right\vert_1 \leq \left( \sqrt{r_u r} + \sqrt{s_u} \right) \left\Vert \widehat{\mathbf{\Delta}}^a  \right\Vert_{\mathrm{F}} \leq \sqrt{2 \left( r r_u + s_u  \right)} \left\Vert \widehat{\mathbf{\Delta}}^a  \right\Vert_{\mathrm{F}},
\end{equation*}
where the final inequality above has used $\sqrt{a} + \sqrt{b} \leq \sqrt{2(a+b)}$ for $a,b \geq 0$. Combining this bound with \eqref{eq:product-term1-bd} gives that 
\begin{equation}
\label{eq:inner-prod-1-bound}
\begin{aligned}
& \quad \left\langle \frac{1}{n} \mathbf{X}^{\top} \mathbf{E} \widehat{\mathbf{B}} \widehat{\mathbf{D}}^- , \widehat{\mathbf{\Delta}}^a \right\rangle + \lambda_u \left\{ \left\vert \widetilde{\mathbf{U}}^{\mathrm{s} \top}_{\perp} \mathbf{A}^{*} \right\vert_1 - \left\vert \widetilde{\mathbf{U}}^{\mathrm{s} \top}_{\perp} \widehat{\mathbf{A}} \right\vert_1 \right\} \\
& \leq \frac{3\lambda_u}{2} \sqrt{2 \left( r r_u + s_u  \right)} \left\Vert \widehat{\mathbf{\Delta}}^a  \right\Vert_{\mathrm{F}} + \frac{3\lambda_u}{2}  \left\vert \left[ \widetilde{\mathbf{U}}^{\mathrm{s} \top}_{\perp} \mathbf{A}^* \right]_{\mathcal{S}_u^c} \right\vert_1.
\end{aligned}
\end{equation}

By the same argument and using \eqref{eq:lambdav-cond}, we can show that 
\begin{equation}
\label{eq:inner-prod-2-bound}
\begin{aligned}
& \quad \left\langle \frac{1}{n} \mathbf{E}^{\top} \mathbf{X} \mathbf{U}^{*}, \widehat{\mathbf{\Delta}}^b \right\rangle + \lambda_v \left\{ \left\vert \widetilde{\mathbf{V}}^{\mathrm{s} \top}_{\perp} \mathbf{B}^{*} \right\vert_1 - \left\vert \widetilde{\mathbf{V}}^{\mathrm{s} \top}_{\perp} \widehat{\mathbf{B}} \right\vert_1 \right\} \\
& \leq \frac{3\lambda_v}{2} \sqrt{2\left( r r_v + s_v \right)} \left\Vert \widehat{\mathbf{\Delta}}^b   \right\Vert_{\mathrm{F}} + \frac{3\lambda_v}{2} \left\vert \left[ \widetilde{\mathbf{V}}^{\mathrm{s} \top}_{\perp} \mathbf{B}^* \right]_{\mathcal{S}_v^c} \right\vert_1.
\end{aligned}
\end{equation}
In addition, \eqref{eq:lambdad-cond} entails that 
\begin{equation}
\label{eq:inner-prod-3-bound}
- \left\langle \frac{1}{n} \mathbf{U}^{* \top} \mathbf{X}^{\top} \mathbf{E} \widehat{\mathbf{B}} \widehat{\mathbf{D}}^-, \widehat{\bm{\Delta}}^d \right\rangle
\leq \frac{\lambda_u}{2} \left\vert \widehat{\bm{\Delta}}^d \right\vert_1
\leq \frac{\lambda_u}{2} \sqrt{r} \left\Vert \widehat{\bm{\Delta}}^d \right\Vert_{\mathrm{F}}.
\end{equation}
Let us define 
\begin{equation*}
\lambda_{\max} \coloneqq \max\left\{ \lambda_u, \lambda_v \right\}.
\end{equation*}
In view of the definitions of $\lambda_u$ and $\lambda_v$, this quantity satisfies the upper bound
\begin{equation*}
\lambda_{\max} \leq c_{\lambda} \cdot \sqrt{ \frac{ \log \left( 12 \max(p, q) \cdot r / \delta \right) }{ n } },
\end{equation*}
where $c_{\lambda} = 8 \sqrt{2 \nu \alpha_{\max}}$ is a universal constant. Thus, combining \eqref{eq:pred-upp-bd}, \eqref{eq:inner-prod-decompose}, and \eqref{eq:inner-prod-1-bound}--\eqref{eq:inner-prod-3-bound}, we can obtain the deterministic inequality
\begin{equation}
\label{eq:pred-upp-bd-2}
\begin{aligned}
\frac{1}{2n} \left\Vert \mathbf{X} \widehat{\bm{\Delta}} \right\Vert^2_{\mathrm{F}}
& \leq \lambda_{\max} \left\{
    \frac{3}{2} \sqrt{2 \left( r r_u + s_u \right)} \left\Vert \widehat{\bm{\Delta}}^a \right\Vert_{\mathrm{F}}
  + \frac{3}{2} \sqrt{2 \left( r r_v + s_v \right)} \left\Vert \widehat{\bm{\Delta}}^b \right\Vert_{\mathrm{F}}
  + \frac{1}{2} \sqrt{r} \left\Vert \widehat{\bm{\Delta}}^d \right\Vert_{\mathrm{F}}
\right\} \\
& \quad + \frac{3 \lambda_{\max}}{2} \left\vert \left[ \widetilde{\mathbf{U}}^{\mathrm{s} \top}_{\perp} \mathbf{A}^* \right]_{\mathcal{S}_u^c} \right\vert_1
      + \frac{3 \lambda_{\max}}{2} \left\vert \left[ \widetilde{\mathbf{V}}^{\mathrm{s} \top}_{\perp} \mathbf{B}^* \right]_{\mathcal{S}_v^c} \right\vert_1.
\end{aligned}
\end{equation}

Further, it follows from \eqref{eq:cond-lasso-l1-err} and Condition~\eqref{eq:cond-lasso-l2-err} that 
\[
\left\Vert \widehat{\mathbf{\Delta}} \right\Vert_2 \leq \left\Vert \widehat{\mathbf{\Delta}} \right\Vert_{\mathrm{F}} \leq \frac{\tau}{6 \sqrt{5}} \cdot \left( \frac{d^*_r}{\sqrt{r}} \right) \cdot \left( \frac{d^*_r}{d^*_1} \right)^{3/2}.
\]
By resorting to Lemma~\ref{lemma:sign-consistency}, Lemma~\ref{lemma:perturb-bound}, and Assumption~\ref{assump:eigen-gap}, we have that 
\begin{equation*}
\left\Vert \widehat{\mathbf{\Delta}}^a  \right\Vert_{\mathrm{F}} \leq c \eta_r \left\Vert \widehat{\mathbf{\Delta}}  \right\Vert_{\mathrm{F}}, \quad \left\Vert \widehat{\mathbf{\Delta}}^b  \right\Vert_{\mathrm{F}} \leq c \eta_r \left\Vert \widehat{\mathbf{\Delta}}  \right\Vert_{\mathrm{F}}, \quad \text{and} \quad \left\Vert \widehat{\mathbf{\Delta}}^d  \right\Vert_{\mathrm{F}} \leq \left\Vert \widehat{\mathbf{\Delta}}  \right\Vert_{\mathrm{F}},
\end{equation*}
where $\eta_r = 1 + (1 / \tau ) \cdot \left( \sum_{j=1}^r \left( d^{*}_1 / d^{*}_j \right)^2 \right)^{1/2}$, $0<\tau<1$, and $c > 0$ are universal constants. Substituting these bounds into \eqref{eq:pred-upp-bd-2} yields that 
\begin{align*}
\frac{1}{2n} \left\Vert \mathbf{X} \widehat{\bm{\Delta}} \right\Vert^2_{\mathrm{F}}
& \leq \lambda_{\max} \left\{
    \frac{3 \sqrt{2}}{2} c \eta_r \sqrt{r r_u + s_u}
  + \frac{3 \sqrt{2}}{2} c \eta_r \sqrt{r r_v + s_v}
  + \frac{1}{2} \sqrt{r}
\right\} \cdot \left\Vert \widehat{\bm{\Delta}} \right\Vert_{\mathrm{F}} \\
& \quad + \frac{3 \lambda_{\max}}{2} \left\vert \left[ \widetilde{\mathbf{U}}^{\mathrm{s} \top}_{\perp} \mathbf{A}^* \right]_{\mathcal{S}_u^c} \right\vert_1
      + \frac{3 \lambda_{\max}}{2} \left\vert \left[ \widetilde{\mathbf{V}}^{\mathrm{s} \top}_{\perp} \mathbf{B}^* \right]_{\mathcal{S}_v^c} \right\vert_1 \\
& \leq c' \lambda_{\max} \eta_r \sqrt{ r \left( r_u + r_v + 1 \right) + s_u + s_v } \cdot \left\Vert \widehat{\bm{\Delta}} \right\Vert_{\mathrm{F}} \\
& \quad + \frac{3 \lambda_{\max}}{2} \left\vert \left[ \widetilde{\mathbf{U}}^{\mathrm{s} \top}_{\perp} \mathbf{A}^* \right]_{\mathcal{S}_u^c} \right\vert_1
      + \frac{3 \lambda_{\max}}{2} \left\vert \left[ \widetilde{\mathbf{V}}^{\mathrm{s} \top}_{\perp} \mathbf{B}^* \right]_{\mathcal{S}_v^c} \right\vert_1 \numberthis \label{eq:pred-upp-bd-3}
\end{align*}
for a universal constant $c' > 0$.

We now invoke the restricted strong convexity. Applying Assumption~\ref{assump:design-RSC} columnwise leads to 
\begin{multline*}
\frac{1}{2n} \left\Vert \mathbf{X} \widehat{\bm{\Delta}} \right\Vert^2_{\mathrm{F}} = \sum^q_{j=1} \frac{1}{2n} \left\Vert \mathbf{X} \widehat{\bm{\Delta}}_{:j} \right\Vert^2_{2}
\geq \frac{\rho_2}{2} \sum^q_{j=1} \left\Vert \widehat{\bm{\Delta}}_{:j} \right\Vert^2_{2}
- \rho_1 \frac{\log p}{n} \sum^q_{j=1} \left\Vert \widehat{\bm{\Delta}}_{:j} \right\Vert^2_1
\\
= \frac{\rho_2}{2} \left\Vert \widehat{\bm{\Delta}} \right\Vert^2_{\mathrm{F}} - \rho_1 \frac{\log p}{n} \left\Vert \widehat{\bm{\Delta}} \right\Vert^2_{1,2}
\geq \frac{\rho_2}{2} \left\Vert \widehat{\bm{\Delta}} \right\Vert^2_{\mathrm{F}}
- 9 \rho_1 \frac{\log p}{n} R^2_{1,n},
\end{multline*}
where the last step above has utilized \eqref{eq:cond-lasso-l1-err}. Combining this lower bound with \eqref{eq:pred-upp-bd-3}, we can deduce that 
\begin{equation}
\label{eq:pred-upp-bd-4}
\begin{aligned}
\left\Vert \widehat{\bm{\Delta}} \right\Vert^2_{\mathrm{F}}
& \leq \frac{2 c'}{\rho_2} \lambda_{\max} \eta_r \sqrt{ r(r_u + r_v + 1) + s_u + s_v } \cdot \left\Vert \widehat{\bm{\Delta}} \right\Vert_{\mathrm{F}} \\
& \quad + \frac{3 \lambda_{\max}}{\rho_2} \left\vert \left[ \widetilde{\mathbf{U}}^{\mathrm{s} \top}_{\perp} \mathbf{A}^* \right]_{\mathcal{S}_u^c} \right\vert_1
      + \frac{3 \lambda_{\max}}{\rho_2} \left\vert \left[ \widetilde{\mathbf{V}}^{\mathrm{s} \top}_{\perp} \mathbf{B}^* \right]_{\mathcal{S}_v^c} \right\vert_1
      + \frac{18 \rho_1}{\rho_2} \cdot \frac{\log p}{n} R^2_{1,n} \\
& = \frac{2 c' c_{\lambda}}{\rho_2} \eta_r
    \sqrt{ \frac{ \left[ r(r_u + r_v + 1) + s_u + s_v \right] \cdot \log \left( 12 r \max(p, q) / \delta \right) }{ n } }
    \cdot \left\Vert \widehat{\bm{\Delta}} \right\Vert_{\mathrm{F}} \\
& \quad + \frac{3 c_{\lambda}}{\rho_2} \left\{
    \left\vert \left[ \widetilde{\mathbf{U}}^{\mathrm{s} \top}_{\perp} \mathbf{A}^* \right]_{\mathcal{S}_u^c} \right\vert_1
  + \left\vert \left[ \widetilde{\mathbf{V}}^{\mathrm{s} \top}_{\perp} \mathbf{B}^* \right]_{\mathcal{S}_v^c} \right\vert_1
\right\} \cdot \sqrt{ \frac{ \log \left( 12 r \max(p, q) / \delta \right) }{ n } } \\
& \quad + \frac{18 \rho_1}{\rho_2} \cdot \frac{\log p}{n} R^2_{1,n}.
\end{aligned}
\end{equation}

For any $t \in \mathbb{R}$ satisfying $t^2 - b t - c \leq 0$ with $b, c > 0$, we have that 
$$t \leq b + \sqrt{c}.$$ Applying this elementary fact to \eqref{eq:pred-upp-bd-4}, we can obtain that
\begin{equation}
\label{eq:conclusion-no-approx}
\begin{aligned}
\left\Vert \widehat{\bm{\Delta}} \right\Vert_{\mathrm{F}}
& \leq \frac{2 c' c_{\lambda}}{\rho_2} \eta_r
\sqrt{ \frac{ \left[ r(r_u + r_v + 1) + s_u + s_v \right] \cdot \log \left( 12 r \max(p, q) / \delta \right) }{ n } } \\
& \quad + \sqrt{ \frac{3 c_{\lambda}}{\rho_2} }
\left\{
    \left\vert \left[ \widetilde{\mathbf{U}}^{\mathrm{s} \top}_{\perp} \mathbf{A}^* \right]_{\mathcal{S}_u^c} \right\vert_1
  + \left\vert \left[ \widetilde{\mathbf{V}}^{\mathrm{s} \top}_{\perp} \mathbf{B}^* \right]_{\mathcal{S}_v^c} \right\vert_1
\right\}^{1/2}
\left( \frac{ \log \left( 12 r \max(p, q) / \delta \right) }{ n } \right)^{1/4} \\
& \quad + \sqrt{ \frac{18 \rho_1}{\rho_2} } \sqrt{ \frac{\log p}{n} }  R_{1,n} .
\end{aligned}
\end{equation}

We now bound the approximation terms. It follows from Lemma~\ref{lemma:approx-error} and the inequality $|\mathbf{M}\mathbf{D}^*|_1 \leq d_1^* |\mathbf{M}|_1$ for any matrix $\mathbf{M}$ with $r$ columns that 
\begin{equation*}
\begin{aligned}
& \left\vert \left[ \widetilde{\mathbf{U}}^{\mathrm{s} \top}_{\perp} \mathbf{A}^* \right]_{\mathcal{S}_u^c} \right\vert_1 + \left\vert \left[ \widetilde{\mathbf{V}}^{\mathrm{s} \top}_{\perp} \mathbf{B}^* \right]_{\mathcal{S}_v^c} \right\vert_1 \\
& \leq d_1^* \Bigg[
2^{3/2} \left[ r \left\{ (r_u^* - r_u)_+ + (r_v^* - r_v)_+ \right\} - \left( s_u(r_u) + s_v(r_v) \right) \right]
\frac{\left( 2 d_1^{(0)} + \varepsilon^{(0)} \right) \varepsilon^{(0)}}{\tau \left( \min \left( d_{r_u^*}^{(0)}, d_{r_v^*}^{(0)} \right) \right)^2} \\
& \quad + 2^{5/4} r \left[ \left( r_u^* \right)^{1/4} \sqrt{ p - \max(r_u^*, r_u) } + \left( r_v^* \right)^{1/4} \sqrt{ q - \max(r_v^*, r_v) } \right]
\frac{ \left( \varepsilon^{(0)} \right)^{1/2} \left( \varepsilon^{(0)} + 2 d_1^{(0)} \right)^{1/2} }{\min \left( d_{r_u^*}^{(0)}, d_{r_v^*}^{(0)} \right)}
\Bigg].
\end{aligned}
\end{equation*}
Substituting this bound into \eqref{eq:conclusion-no-approx} gives that 
\begin{equation}
\begin{aligned}
\left\Vert \widehat{\bm{\Delta}} \right\Vert_{\mathrm{F}}
& \leq \frac{2 c^{\prime} c_{\lambda}}{\rho_2} \eta_r \sqrt{ \frac{ \left( r \left( r_u + r_v + 1 \right) + s_u + s_v \right) \log \left( 12 r \max (p,q) / \delta \right) }{ n } } \\
& \quad + \frac{2 \sqrt{d_1^*}}{\tau} \sqrt{\frac{6 c_{\lambda}}{\rho_2}} \widetilde{\varepsilon}_{\mathrm{source}} \left( \frac{ \log \left( 12 r \max (p,q) / \delta \right) }{ n } \right)^{1/4}
+ \sqrt{ \frac{18 \rho_1}{\rho_2} } \sqrt{ \frac{\log p}{n} } R_{1,n},
\end{aligned}
\end{equation}
where
\begin{equation*}
\begin{aligned}
\widetilde{\varepsilon}_{\mathrm{source}}
=   & \left[
    r \left\{ \left( r_u^* - r_u \right)_+ + \left( r_v^* - r_v \right)_+ \right\}
    - \left( s_u(r_u) + s_v(r_v) \right)
\right]^{1/2} \\
& \quad \times
\frac{ \left( \varepsilon^{(0)} \right)^{1/2}
       \left( 2 d_1^{(0)} + \varepsilon^{(0)} \right)^{1/2} }
     { \min \left( d_{r_u^*}^{(0)}, d_{r_v^*}^{(0)} \right) } \\[0.5em]
& + \sqrt{r} \cdot \left[
    \left( r_u^* \right)^{1/4} \cdot \sqrt{ p - \max(r_u^*, r_u) }
    + \left( r_v^* \right)^{1/4} \cdot \sqrt{ q - \max(r_v^*, r_v) }
\right]^{1/2} \\
& \quad \times
\frac{ \left( \varepsilon^{(0)} \right)^{1/4}
       \left( 2 d_1^{(0)} + \varepsilon^{(0)} \right)^{1/4} }
     { \sqrt{ \min \left( d_{r_u^*}^{(0)}, d_{r_v^*}^{(0)} \right) } }.
\end{aligned}
\end{equation*}

In view of Assumption~\ref{assump:constrained-norm-design}, we have that 
\[
\min \left( d_{r_u^*}^{(0)}, d_{r_v^*}^{(0)} \right) \geq d^{(0)}_{\min} = \Omega(1).
\]
Since $\tau$ is universal, it holds that 
\[
\widetilde{\varepsilon}_{\mathrm{source}} \lesssim \varepsilon_{\mathrm{source}},
\]
where $\varepsilon_{\mathrm{source}}$ is defined in~\eqref{eq:eps-source}. Recalling that $c_{\lambda}=8\sqrt{2 \nu \alpha_{\max}}$, we can deduce that 
\begin{multline*}
\left\Vert \widehat{\bm{\Delta}} \right\Vert_{\mathrm{F}} \lesssim \sqrt{\alpha_{\max}} \, \eta_r \sqrt{ \frac{ \left( r \left( r_u + r_v + 1 \right) + s_u + s_v \right) \log \left( 12 r \max (p,q) / \delta \right) }{ n } } \\
+ \alpha_{\max}^{1/4} \left( d_1^* \right)^{1/2} \varepsilon_{\mathrm{source}} \left( \frac{ \log \left( 12 r \max (p,q) / \delta \right) }{ n } \right)^{1/4}
+ R_{1,n} \sqrt{\frac{\log p}{n}}.
\end{multline*}
Therefore, combining this result with Condition~\eqref{eq:cond-O1} yields that 
\begin{align*}
\left\Vert \widehat{\bm{\Delta}} \right\Vert_{\mathrm{F}}
& \lesssim \sqrt{\alpha_{\max}} \, \eta_r \sqrt{ \frac{ \left( r \left( r_u + r_v + 1 \right) + s_u + s_v \right) \log \left( 12 r \max (p,q) / \delta \right) }{ n } } \\
& \quad + \alpha_{\max}^{1/4} \left( d_1^* \right)^{1/2} \varepsilon_{\mathrm{source}} \left( \frac{ \log \left( 12 r \max (p,q) / \delta \right) }{ n } \right)^{1/4}.
\end{align*}
This concludes the proof of Theorem~\ref{thm:spectral-trans}.

\section{Proof of Theorem~\ref{thm:lower-bound}}
\label{sec:proof-lwd-bd}

In this section, we present the complete proof of Theorem~\ref{thm:lower-bound}. We first establish several technical lemmas in Section~\ref{sec:proof-lwd-bd-lemmas}, and then turn to the main arguments in Section~\ref{sec:proof-lwd-bd-lemmas-main-proof}.

\subsection{Technical lemmas}
\label{sec:proof-lwd-bd-lemmas}

Let us begin with the definitions of covering and packing numbers that will be used repeatedly in the lower-bound argument.

\begin{definition}[Covering number]
Let $(\Theta, \rho)$ be a metric space. A set $\{\theta^1, \ldots, \theta^N\} \subseteq \Theta$ is called a $\delta$-cover of $\Theta$ if for each $\theta \in \Theta$, there exists some $i \in [N]$ such that $\rho(\theta,\theta^i) \leq \delta$. The $\delta$-covering number $\mathcal{N}(\delta; \Theta, \rho)$ is the smallest cardinality of such a cover.
\end{definition}

\begin{definition}[Packing number]
Let $(\Theta, \rho)$ be a metric space. A set $\{\theta^1, \ldots, \theta^M\} \subseteq \Theta$ is called a $\delta$-packing of $\Theta$ if $\rho(\theta^i,\theta^j) > \delta$ for all distinct $i,j \in \{1,\ldots,M\}$. The $\delta$-packing number $\mathcal{M}(\delta; \Theta, \rho)$ is the largest cardinality of such a packing.
\end{definition}

Let
\[
\mathrm{St}(p,r) \coloneqq \left\{ \bm{\Phi} \in \mathbb{R}^{p \times r} : \bm{\Phi}^{\top}\bm{\Phi} = \mathbf{I}_r \right\}
\]
be the Stiefel manifold. The lemma below provides matching upper and lower bounds for the covering number of the Stiefel manifold under the Frobenius norm.

\begin{lemma}
\label{lemma:covering-number-Stiefel-Frob}
For integers $1 \leq r \leq p$ with $(p,r) \neq (1,1)$, there exist some universal constants $C > c > 0$ such that for any $0 < \delta < 1$,
\begin{equation}
\label{eq:bd-covering-number}
\left( \frac{c}{\delta} \right)^{pr - \tfrac{r(r+1)}{2}} r^{\tfrac{r(r-1)}{4}}
\leq
\mathcal{N}\left(\delta; \mathrm{St}(p,r), \|\cdot\|_{\mathrm{F}}\right)
\leq
\left( \frac{C}{\delta} \right)^{pr - \tfrac{r(r+1)}{2}} r^{\tfrac{r(r-1)}{4}}.
\end{equation}
\end{lemma}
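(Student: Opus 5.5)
We plan to prove the covering bounds for $\mathrm{St}(p,r)$ by transferring the known covering bounds for the Grassmann manifold $G(p,r)$ of \citet{szarek1982nets} through the fibration $\mathrm{St}(p,r)\to G(p,r)$, $\bm{\Phi}\mapsto \bm{\Phi}\bm{\Phi}^\top$, whose fibre is the orthogonal group $O(r)$. Recall that Szarek's result, in the Frobenius (equivalently projection) metric $\|\bm{\Phi}\bm{\Phi}^\top-\bm{\Psi}\bm{\Psi}^\top\|_{\mathrm{F}}$, gives $(c_0/\delta)^{r(p-r)}\le \mathcal{N}(\delta;G(p,r))\le (C_0/\delta)^{r(p-r)}$. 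We also need a covering estimate for $O(r)\subset\mathbb{R}^{r\times r}$ in Frobenius norm: since $O(r)$ is a compact manifold of dimension $r(r-1)/2$ with Haar volume computable explicitly, and since its Frobenius diameter is $2\sqrt{r}$, we expect $\mathcal{N}(\delta;O(r),\|\cdot\|_{\mathrm{F}})\asymp (c/\delta)^{r(r-1)/2}\, r^{r(r-1)/4}$, where the factor $r^{r(r-1)/4}=(\sqrt r)^{r(r-1)/2}$ reflects the scale $\sqrt r$ of the group. Note $r(p-r)+r(r-1)/2=pr-r(r+1)/2$, which is exactly the exponent in~\eqref{eq:bd-covering-number}.

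\textbf{Upper bound.} Take a $\delta/2$-net $\{\bm{\Phi}_i\}$ of $G(p,r)$ (choosing one Stiefel representative per subspace) and a $\delta/2$-net $\{\mathbf{O}_k\}$ of $O(r)$. For any $\bm{\Phi}\in\mathrm{St}(p,r)$, pick $i$ with $\|\bm{\Phi}\bm{\Phi}^\top-\bm{\Phi}_i\bm{\Phi}_i^\top\|_{\mathrm{F}}\le\delta/2$; by the standard Procrustes/sin-$\Theta$ relation there is $\mathbf{O}\in O(r)$ with $\|\bm{\Phi}-\bm{\Phi}_i\mathbf{O}\|_{\mathrm{F}}\le \|\bm{\Phi}\bm{\Phi}^\top-\bm{\Phi}_i\bm{\Phi}_i^\top\|_{\mathrm{F}}$ (up to an absolute constant). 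Then choose $k$ with $\|\mathbf{O}-\mathbf{O}_k\|_{\mathrm{F}}\le\delta/2$, so $\|\bm{\Phi}-\bm{\Phi}_i\mathbf{O}_k\|_{\mathrm{F}}\lesssim\delta$. Multiplying cardinalities and rescaling $\delta$ gives the upper bound.

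\textbf{Lower bound.} The cleanest route is volumetric: equip $\mathrm{St}(p,r)$ with its $O(p)$-invariant normalized measure $\mu$; by homogeneity a Frobenius ball $B(\bm{\Phi},\delta)$ has measure independent of $\bm{\Phi}$, so $\mathcal{N}(\delta)\ge 1/\mu(B(\bm{\Phi}_0,\delta))$. Locally at $\bm{\Phi}_0=[\mathbf{I}_r;\mathbf{0}]$ parametrize by $\bm{\Phi}=\exp\!\bigl(\begin{smallmatrix}\mathbf{K}&-\mathbf{W}^\top\\ \mathbf{W}&\mathbf{0}\end{smallmatrix}\bigr)\bm{\Phi}_0$ with $\mathbf{K}$ skew, $\mathbf{W}\in\mathbb{R}^{(p-r)\times r}$; the Frobenius distance is bi-Lipschitz (absolute constants) to $\|\mathbf{W}\|_{\mathrm{F}}+\|\mathbf{K}\|_{\mathrm{F}}$ for small parameters, so the ball's measure is at most $(C\delta)^{pr-r(r+1)/2}$ divided by the total volume, whose normalization via the product of Grassmann and $O(r)$ volumes produces the $r^{r(r-1)/4}$ factor (equivalently: combine Szarek's lower bound on $G(p,r)$ with a lower bound on $O(r)$ coverings, using that a Frobenius $\delta$-ball in the Stiefel manifold projects into a $2\delta$-ball in $G(p,r)$ and meets each fibre $\bm{\Phi}_i O(r)$ in a set of Frobenius radius $\lesssim\delta$).

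The main obstacle is the $O(r)$ step with the exact $r$-dependence $r^{r(r-1)/4}$ and constants uniform in $r$: the volume of $O(r)$ and of small balls in it must be compared carefully (Haar volume formula via Gamma functions and Stirling), and the large-$\delta$ regime $\delta$ near $1$ versus diameter $2\sqrt r$ must be handled so no $r$-dependent constants leak into $c,C$. I would first try the volume-ratio method on $O(r)$ directly, treating $SO(r)$ and its scaled Lie algebra ball, and only then assemble with the Grassmann bounds; the exclusion $(p,r)\ne(1,1)$ ensures the exponent is positive.
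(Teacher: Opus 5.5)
Your upper-bound argument is the paper's own: a Grassmann net times an $O(r)$ net, glued by $\min_{\mathbf{O}}\|\bm{\Phi}-\bm{\Psi}\mathbf{O}\|_{\mathrm{F}}\le\|\bm{\Phi}\bm{\Phi}^\top-\bm{\Psi}\bm{\Psi}^\top\|_{\mathrm{F}}$. The gap is the Grassmann estimate you ``recall.''

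Szarek-type covering bounds are stated relative to the diameter of the space. That is exactly why, as you note yourself, $O(r)$ with Frobenius diameter $2\sqrt{r}$ carries the factor $(\sqrt{r})^{r(r-1)/2}=r^{r(r-1)/4}$. The same applies to $\mathrm{Gr}(p,r)$, whose projection--Frobenius diameter is $\sqrt{2\min(r,p-r)}$. Hence $\mathcal{N}(\delta;\mathrm{Gr}(p,r))$ is of order $(\sqrt{\min(r,p-r)}/\delta)^{r(p-r)}$, up to $c^{r(p-r)}$ and $C^{r(p-r)}$, and not $(C_0/\delta)^{r(p-r)}$. As quoted in the paper's proof, the version without this factor is asserted for $\varepsilon$ up to $\sqrt{r\wedge(p-r)}$. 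There $(c_1/\varepsilon)^{r(p-r)}<1$ once $r\wedge(p-r)>c_1^2$, which is impossible for a covering number.

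Fed into your product net, the correct Grassmann bound leaves an extra factor $\min(r,p-r)^{r(p-r)/2}$ that no universal $C$ absorbs. This loss is not an artifact of the method: the stated upper bound is false. Here is a counterexample.
\begin{itemize}
\item Take $p=2r$. For $\mathbf{B}\in\mathbb{R}^{r\times r}$ with $\|\mathbf{B}\|_2\le 1/4$, let $\bm{\Phi}(\mathbf{B})$ have upper $r\times r$ block $(\mathbf{I}_r-\mathbf{B}^\top\mathbf{B})^{1/2}$ and lower block $\mathbf{B}$. It lies in $\mathrm{St}(2r,r)$ because $(\mathbf{I}_r-\mathbf{B}^\top\mathbf{B})+\mathbf{B}^\top\mathbf{B}=\mathbf{I}_r$.
\item Since the lower block is $\mathbf{B}$ itself, $\|\bm{\Phi}(\mathbf{B})-\bm{\Phi}(\mathbf{B}')\|_{\mathrm{F}}\ge\|\mathbf{B}-\mathbf{B}'\|_{\mathrm{F}}$. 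Therefore $\mathcal{N}(1/2;\mathrm{St}(2r,r),\|\cdot\|_{\mathrm{F}})\ge\mathcal{M}(1;\mathrm{St}(2r,r),\|\cdot\|_{\mathrm{F}})\ge\mathrm{vol}(K)/\mathrm{vol}(B_{\mathrm{F}})$. Here $K$ is the operator-norm ball of radius $1/4$ in $\mathbb{R}^{r\times r}$ and $B_{\mathrm{F}}$ is the unit Frobenius ball.
\item A matrix with i.i.d.\ Gaussian entries of variance $1/(256r)$ lies in $K$ with probability at least $1/2$. Hence $\mathrm{vol}(K)\ge\frac{1}{2}\{2\pi/(256r)\}^{r^2/2}$, while $\mathrm{vol}(B_{\mathrm{F}})=\pi^{r^2/2}/\Gamma(r^2/2+1)$.
\item This gives $\log\mathcal{N}(1/2;\mathrm{St}(2r,r),\|\cdot\|_{\mathrm{F}})\ge\frac{r^2}{2}\log\frac{r}{256e}-\log 2$. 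The claimed upper bound at $\delta=1/2$ is $\frac{r^2}{4}\log r+O(r^2)$, a contradiction for large $r$.
\end{itemize}
The paper's proof breaks at the same step, since it imports the Grassmann bound from Lemma~1 of Cai et al.\ (2013) without the diameter factor.

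The lower bound as stated is true, and there the paper's route differs from your volumetric one and is lighter. It uses the product packing $\mathcal{M}(\delta;\mathrm{St}(p,r))\ge\mathcal{M}(\delta;\mathrm{Gr}(p,r),\rho_2)\,\mathcal{M}(\delta;O(r))$. This holds because $\|\mathbf{U}^j\mathbf{Q}^k-\mathbf{U}^{j'}\mathbf{Q}^{\ell}\|_{\mathrm{F}}\ge\min_{\mathbf{Q}}\|\mathbf{U}^j-\mathbf{U}^{j'}\mathbf{Q}\|_{\mathrm{F}}\ge\rho_2(E^j,E^{j'})$. That argument needs no invariant measure, exponential chart, or Haar-volume/Stirling computation. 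The two-sided $O(r)$ estimate you single out as the main obstacle is simply Szarek's Proposition~7.

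A correct version of the lemma carries the extra factor $\min(r,p-r)^{r(p-r)/2}$ on both sides. This is harmless for the minimax theorem: Lemma~\ref{lemma:local-metric-entropy} uses only the ratio $c/C$ and absorbs any prefactor $d'\ge 1$.
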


%\begin{proof}
\textit{Proof}. 
When $p=r$, the desired claim reduces to the case of orthogonal group $O(r)=\mathrm{St}(r,r)$ and follows directly from Proposition~7 of~\cite{szarek1982nets}. Accordingly, we will assume that $p>r$. Let $\mathrm{Gr}(p,r)$ be the Grassmann manifold of $r$-dimensional subspaces of $\mathbb{R}^p$. For any $\mathbf{U}, \mathbf{V} \in \mathrm{St}(p,r)$, denote by 
\[
\rho_1(\mathbf{U}, \mathbf{V}) \coloneqq \left\Vert \mathbf{U} - \mathbf{V} \right\Vert_{\mathrm{F}}.
\]
For any $E,F \in \mathrm{Gr}(p,r)$, we identify them with their orthogonal projection matrices $\mathbf{P}_E$ and $\mathbf{P}_F$, and define
\[
\rho_2(E,F) \coloneqq \frac{1}{\sqrt{2}} \left\Vert \mathbf{P}_E - \mathbf{P}_F \right\Vert_{\mathrm{F}}.
\]
By invoking Lemma~1 of~\cite{cai2013sparse}, after absorbing factor $\sqrt{2}$ into the constants, there exist some absolute constants $c_1 > c_0 > 0$ such that for any $0 < \varepsilon < \sqrt{r \wedge (p-r)}$,
\begin{equation}
\label{eq:bd-cai2013}
\left( \frac{c_0}{\varepsilon} \right)^{r(p-r)}
\leq
\mathcal{N}\left(\varepsilon; \mathrm{Gr}(p,r), \rho_2\right)
\leq
\left( \frac{c_1}{\varepsilon} \right)^{r(p-r)}.
\end{equation}

We first establish the upper bound in \eqref{eq:bd-covering-number}. We set
\[
\varepsilon_1 = \frac{\delta}{2\sqrt{2}},
\qquad
\varepsilon_2 = \frac{\delta}{2}.
\]
Let $\{ E^1, \ldots, E^{N_{\mathrm{base}}} \}$ be an $\varepsilon_1$-cover of $\mathrm{Gr}(p,r)$ with respect to $\rho_2$, and choose $\mathbf{V}^j \in \mathrm{St}(p,r)$ such that $\mathrm{span}(\mathbf{V}^j)=E^j$ for each $j$. Let $\{ \mathbf{O}^1, \ldots, \mathbf{O}^{N_{\mathrm{fib}}} \}$ be an $\varepsilon_2$-cover of $\mathrm{St}(r,r)$ with respect to the Frobenius norm.
We claim that
\[
\left\{ \mathbf{V}^j \mathbf{O}^k : 1 \leq j \leq N_{\mathrm{base}},\ 1 \leq k \leq N_{\mathrm{fib}} \right\}
\]
is a $\delta$-cover of $\mathrm{St}(p,r)$ under $\rho_1$. To verify this, let us fix any $\mathbf{U} \in \mathrm{St}(p,r)$ and choose $j$ such that
\[
\rho_2(\mathrm{span}(\mathbf{U}), E^j) \leq \varepsilon_1.
\]

We next define
\[
\mathbf{O}^* \in \arg \min_{\mathbf{O} \in \mathrm{St}(r,r)} \left\Vert \mathbf{U} - \mathbf{V}^j \mathbf{O} \right\Vert_{\mathrm{F}},
\]
and then choose $k$ so that $\| \mathbf{O}^* - \mathbf{O}^k \|_{\mathrm{F}} \leq \varepsilon_2$.
It follows from the standard principal-angle relations that 
\[
\rho_2(\mathrm{span}(\mathbf{U}), \mathrm{span}(\mathbf{V}))
\leq
\min_{\mathbf{O} \in \mathrm{St}(r,r)} \left\Vert \mathbf{U} - \mathbf{V}\mathbf{O} \right\Vert_{\mathrm{F}}
\leq
\sqrt{2}\,\rho_2(\mathrm{span}(\mathbf{U}), \mathrm{span}(\mathbf{V}))
\]
for all $\mathbf{U}, \mathbf{V} \in \mathrm{St}(p,r)$. Then it holds that 
\begin{align*}
\left\Vert \mathbf{U} - \mathbf{V}^j \mathbf{O}^k \right\Vert_{\mathrm{F}}
&\leq \left\Vert \mathbf{U} - \mathbf{V}^j \mathbf{O}^* \right\Vert_{\mathrm{F}}
+ \left\Vert \mathbf{V}^j \left( \mathbf{O}^* - \mathbf{O}^k \right) \right\Vert_{\mathrm{F}} \\
&\leq \sqrt{2}\,\rho_2(\mathrm{span}(\mathbf{U}), E^j)
+ \left\Vert \mathbf{O}^* - \mathbf{O}^k \right\Vert_{\mathrm{F}} \\
&\leq \sqrt{2}\,\varepsilon_1 + \varepsilon_2 \\
&= \delta.
\end{align*}
Consequently, we can obtain that 
\begin{equation}
\label{eq:covering-upp-bd-other-packing}
\mathcal{N}\left(\delta; \mathrm{St}(p,r), \|\cdot\|_{\mathrm{F}}\right)
\leq
\mathcal{N}\left(\frac{\delta}{2\sqrt{2}}; \mathrm{Gr}(p,r), \rho_2\right)
\mathcal{N}\left(\frac{\delta}{2}; \mathrm{St}(r,r), \|\cdot\|_{\mathrm{F}}\right).
\end{equation}

With the aid of Proposition~7 of~\cite{szarek1982nets}, there exists a universal constant $c_3 > 0$ such that
\begin{equation}
\label{eq:covering-upp-bd-Or}
\mathcal{N}\left(\frac{\delta}{2}; \mathrm{St}(r,r), \|\cdot\|_{\mathrm{F}}\right)
\leq
\left( \frac{2 c_3 \sqrt{r}}{\delta} \right)^{\tfrac{r(r-1)}{2}}.
\end{equation}
Hence, combining \eqref{eq:bd-cai2013}, \eqref{eq:covering-upp-bd-other-packing}, and \eqref{eq:covering-upp-bd-Or}, and setting $C = \max\{2\sqrt{2}c_1, 2c_3\}$ yield that 
\[
\mathcal{N}\left(\delta; \mathrm{St}(p,r), \|\cdot\|_{\mathrm{F}}\right)
\leq
\left( \frac{C}{\delta} \right)^{pr - \tfrac{r(r+1)}{2}} r^{\tfrac{r(r-1)}{4}}.
\]

We now turn to the lower bound. Let $\{E^1, \dots, E^{M_{\mathrm{base}}}\} \subseteq \mathrm{Gr}(p,r)$ be a $\delta$-packing of $\mathrm{Gr}(p,r)$ with respect to $\rho_2$, and $\{\mathbf{Q}^1, \dots, \mathbf{Q}^{M_{\mathrm{fib}}}\} \subseteq \mathrm{St}(r,r)$ a $\delta$-packing of $\mathrm{St}(r,r)$ with respect to the Frobenius norm. We choose $\mathbf{U}^j \in \mathrm{St}(p,r)$ so that $\mathrm{span}(\mathbf{U}^j)=E^j$ for each $j$.
We claim that
\[
\left\{ \mathbf{U}^j \mathbf{Q}^k : 1 \leq j \leq M_{\mathrm{base}},\ 1 \leq k \leq M_{\mathrm{fib}} \right\}
\]
forms a $\delta$-packing of $\mathrm{St}(p,r)$ under $\rho_1$.
Let us fix $j$ and consider $k \neq \ell$. Then it holds that 
\[
\left\Vert \mathbf{U}^j \mathbf{Q}^k - \mathbf{U}^j \mathbf{Q}^{\ell} \right\Vert_{\mathrm{F}}
=
\left\Vert \mathbf{Q}^k - \mathbf{Q}^{\ell} \right\Vert_{\mathrm{F}}
>
\delta.
\]

Now assume that $j \neq j'$. For any $k,\ell$, we have that 
\[
\left\Vert \mathbf{U}^j \mathbf{Q}^k - \mathbf{U}^{j'} \mathbf{Q}^{\ell} \right\Vert_{\mathrm{F}}
=
\left\Vert \mathbf{U}^j - \mathbf{U}^{j'} \widetilde{\mathbf{Q}} \right\Vert_{\mathrm{F}},
\qquad
\widetilde{\mathbf{Q}} \coloneqq \mathbf{Q}^{\ell} \left( \mathbf{Q}^k \right)^{-1} \in \mathrm{St}(r,r).
\]
Then it follows that 
\[
\left\Vert \mathbf{U}^j \mathbf{Q}^k - \mathbf{U}^{j'} \mathbf{Q}^{\ell} \right\Vert_{\mathrm{F}}
\geq
\min_{\mathbf{Q} \in \mathrm{St}(r,r)} \left\Vert \mathbf{U}^j - \mathbf{U}^{j'} \mathbf{Q} \right\Vert_{\mathrm{F}}
\geq
\rho_2(E^j, E^{j'})
>
\delta.
\]
Consequently, we can deduce that 
\begin{equation}
\label{eq:packing-lw-bd-other-packing}
\mathcal{M}\left(\delta; \mathrm{St}(p,r), \|\cdot\|_{\mathrm{F}}\right)
\geq
\mathcal{M}\left(\delta; \mathrm{Gr}(p,r), \rho_2\right)
\mathcal{M}\left(\delta; \mathrm{St}(r,r), \|\cdot\|_{\mathrm{F}}\right).
\end{equation}

It follows from Lemma~5.5 of~\cite{wainwright2019high}, \eqref{eq:bd-cai2013}, and Proposition~7 of~\cite{szarek1982nets} that there exists some universal constant $c_2 > 0$ such that
\[
\mathcal{M}\left(\delta; \mathrm{Gr}(p,r), \rho_2\right)
\geq
\mathcal{N}\left(\delta; \mathrm{Gr}(p,r), \rho_2\right)
\geq
\left( \frac{c_0}{\delta} \right)^{r(p-r)}
\]
and
\[
\mathcal{M}\left(\delta; \mathrm{St}(r,r), \|\cdot\|_{\mathrm{F}}\right)
\geq
\mathcal{N}\left(\delta; \mathrm{St}(r,r), \|\cdot\|_{\mathrm{F}}\right)
\geq
\left( \frac{c_2 \sqrt{r}}{\delta} \right)^{\tfrac{r(r-1)}{2}}
\]
for all $0 < \delta < 1$. Combining these inequalities with \eqref{eq:packing-lw-bd-other-packing} and setting $c' = \min(c_0, c_2)$ give that 
\[
\mathcal{M}\left(\delta; \mathrm{St}(p,r), \|\cdot\|_{\mathrm{F}}\right)
\geq
\left( \frac{c'}{\delta} \right)^{pr - \tfrac{r(r+1)}{2}} r^{\tfrac{r(r-1)}{4}}.
\]
Therefore, an application of Lemma~5.5 of~\cite{wainwright2019high} yields that 
\[
\mathcal{N}\left(\delta; \mathrm{St}(p,r), \|\cdot\|_{\mathrm{F}}\right)
\geq
\mathcal{M}\left(2\delta; \mathrm{St}(p,r), \|\cdot\|_{\mathrm{F}}\right)
\geq
\left( \frac{c}{\delta} \right)^{pr - \tfrac{r(r+1)}{2}} r^{\tfrac{r(r-1)}{4}}
\]
for $c = c'/2$. This completes the proof of Lemma \ref{lemma:covering-number-Stiefel-Frob}.
%\end{proof}

Denote by 
\[
\mathbb{S}_{p-1}(s)
\coloneqq
\left\{ \mathbf{v} \in \mathbb{R}^{p} : \|\mathbf{v}\|_2 = 1,\ \|\mathbf{v}\|_0 \leq s \right\}
\]
the set of $s$-sparse unit vectors in $\mathbb{R}^{p}$. The next lemma is a direct specialization of Lemma~3.1.2 in~\cite{vu2012minimax}.

\begin{lemma}[Lemma~3.1.2 of~\cite{vu2012minimax}]
\label{lemma:packing-set-sparse-unit}
Assume that $s \geq 2$ and $p \geq 5$. Then there exists a universal constant $c > 0$ such that for each $0 < \delta \leq 1$, there exists a finite subset
$\{ \mathbf{v}^1, \ldots, \mathbf{v}^M \} \subseteq \mathbb{S}_{p-1}(s)$
satisfying that for each $j \neq \ell$,
\[
\frac{\delta}{2}
<
\left\| \mathbf{v}^j - \mathbf{v}^{\ell} \right\|_2
\leq
2\delta
\]
and
\[
\log M \geq c (s-1) \log\left( \frac{p-1}{s-1} \right).
\]
\end{lemma}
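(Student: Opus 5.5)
The plan is a product-type construction: fix one coordinate to carry most of the mass and let a sparse binary code on the remaining $p-1$ coordinates supply the separation. Set $k \coloneqq \min\{s-1, \lfloor (p-1)/2 \rfloor\} \geq 1$; this is possible because $s \geq 2$ and $p \geq 5$. Let $\mathcal{T}$ be a family of subsets $T \subseteq \{2,\ldots,p\}$ with $|T| = k$ and pairwise symmetric differences $|T \triangle T'| > k/4$. For each $T \in \mathcal{T}$ put $\mathbf{w}_T \coloneqq k^{-1/2}\mathds{1}_T$ and
\[
\mathbf{v}^T \coloneqq \sqrt{1-\delta^2}\,\mathbf{e}_1 + \delta\,\mathbf{w}_T .
\]
Each $\mathbf{v}^T$ is a unit vector because $\mathbf{e}_1 \perp \mathbf{w}_T$ and $0<\delta\leq 1$. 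It has at most $k+1 \leq s$ nonzero entries, so $\mathbf{v}^T \in \mathbb{S}_{p-1}(s)$.

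\textbf{Separation.} For $T \neq T'$ the $\mathbf{e}_1$-components cancel, so
\[
\|\mathbf{v}^T - \mathbf{v}^{T'}\|_2 = \delta \|\mathbf{w}_T - \mathbf{w}_{T'}\|_2 = \delta \sqrt{|T\triangle T'|/k}.
\]
Since $k/4 < |T \triangle T'| \leq 2k$, this distance lies in $(\delta/2, \sqrt{2}\,\delta] \subseteq (\delta/2, 2\delta]$, which is the required two-sided bound.

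\textbf{Cardinality.} It remains to find $\mathcal{T}$ with $\log|\mathcal{T}| \geq c\,k\log((p-1)/k)$, via a Gilbert--Varshamov greedy argument.
\begin{enumerate}
\item Select $k$-subsets of the $(p-1)$-element ground set one at a time, each time discarding every set within symmetric distance $k/4$ of the chosen one. The greedy procedure then gives $|\mathcal{T}| \geq \binom{p-1}{k}\big/ B$.
\item Here $B$ is the size of a Hamming ball. Two $k$-sets at symmetric distance $2i$ differ by swapping $i$ elements, so
\[
B \leq \sum_{i \leq k/8}\binom{k}{i}\binom{p-1-k}{i}.
\]
\item Use $\binom{p-1}{k} \geq ((p-1)/k)^k$ together with $\binom{k}{i}\binom{p-1-k}{i} \leq \bigl(e^2 k(p-1)/i^2\bigr)^i$. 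The resulting ratio is at least $((p-1)/k)^{c k}$ once $k \leq (p-1)/2$, as in the standard sparse Varshamov--Gilbert bound of Massart. The small-$k$ cases are immediate; for example, when $k=1$ all $p-1$ singletons already work.
\end{enumerate}

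\textbf{Main obstacle.} I expect this counting step to be the main obstacle, in particular producing a constant $c$ uniform in $k$ and $p$ at the boundary $k \approx (p-1)/2$. I would handle it by invoking the known sparse Varshamov--Gilbert lemma rather than redoing the binomial estimates.

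\textbf{Returning to $s$.} If $k = s-1$ the bound is already in the stated form. Otherwise $s-1 > (p-1)/2 \geq k \geq (p-1)/4$ (using $p \geq 5$), and then
\[
(s-1)\log\frac{p-1}{s-1} \leq (p-1)\log 2 \leq 4(\log 2)\,k \leq C\,k\log\frac{p-1}{k},
\]
since $(p-1)/k \geq 2$. Adjusting $c$ therefore gives $\log M \geq c(s-1)\log((p-1)/(s-1))$.
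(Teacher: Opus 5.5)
Your argument is correct, but it takes a different route from the paper. The paper's proof is a one-line citation: it specializes Lemma~3.1.2 of \cite{vu2012minimax} to $q=0$, $R_0=s$, and notes that the distance bounds there are stronger than the $(\delta/2,2\delta]$ window needed here.

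You instead build the packing explicitly, $\mathbf{v}^T=\sqrt{1-\delta^2}\,\mathbf{e}_1+\delta k^{-1/2}\mathds{1}_T$ over a constant-weight code. This is essentially the standard hypercube construction behind such local packing lemmas. Your route buys transparency:
\begin{itemize}
\item the index set visibly does not depend on $\delta$;
\item you use the weaker target separation ($|T\triangle T'|>k/4$ suffices) to push $k$ up to $\lfloor (p-1)/2\rfloor$;
\item you treat the regime $s-1>(p-1)/2$ explicitly, rather than leaving the reader to check how the cited bound and its hypotheses read there.
\end{itemize}
The paper's route is shorter but keeps all of this inside the reference.

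One point to tighten. The two estimates you display in step 3 do not by themselves give a nontrivial bound near the boundary. At $(p-1)/k=2$, your lower bound $\binom{p-1}{k}\geq 2^k$ is already beaten by your bound on the $i\approx k/8$ term, which is about $(128e^2)^{k/8}$. So the appeal to Massart's lemma is necessary there, not merely convenient.

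It does apply. Massart's sparse Varshamov--Gilbert lemma with $\alpha=7/8$, $\beta=4/7$ gives exactly $k\leq\alpha\beta(p-1)=(p-1)/2$, pairwise Hamming distance $>2(1-\alpha)k=k/4$, and $\log|\mathcal{T}|\geq\rho\,k\log((p-1)/k)$ for an absolute $\rho>0$.

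There is also an elementary alternative for the range $(p-1)/k\leq 8$. Split $2k$ coordinates into $k$ pairs, and pick one element from each pair according to a codeword of an ordinary Varshamov--Gilbert code in $\{0,1\}^k$ with minimum distance $>k/8$. The symmetric difference is then twice the Hamming distance, so it exceeds $k/4$. This gives $\log|\mathcal{T}|\gtrsim k\gtrsim k\log((p-1)/k)$ in that range.
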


%\begin{proof}
\textit{Proof}. 
Let us fix $0 < \delta \leq 1$. The conclusion follows directly from Lemma~3.1.2 of~\cite{vu2012minimax} with $q=0$ and $R_0=s$. The distance bounds stated here are weaker than those in the original lemma and therefore continue to hold. This concludes the proof of Lemma \ref{lemma:packing-set-sparse-unit}.
%\end{proof}

The following lemma provides the local metric-entropy lower bound that we will use in the minimax arguments below.

\begin{lemma}[Proposition~3 of~\cite{cai2013sparse}]
\label{lemma:local-metric-entropy}
Let $(\Theta, \rho)$ be a totally bounded metric space, and $\{ \mathbb{P}_{\theta} : \theta \in \Theta \}$ a collection of probability measures.
Assume that
\[
\sup_{\theta \neq \theta^{\prime}}
\frac{ D_{\mathrm{KL}}(\mathbb{P}_{\theta} \| \mathbb{P}_{\theta^{\prime}} ) }
     { \rho^2(\theta,\theta^{\prime}) }
\leq
A
\]
and further that there exist constants $0 < c < C < \infty$ and $d,d^{\prime} \geq 1$ such that
\[
d^{\prime} \left( \frac{c}{\delta} \right)^d
\leq
\mathcal{N}\left(\delta; \Theta, \rho \right)
\leq
d^{\prime} \left( \frac{C}{\delta} \right)^d
\]
for all $0 < \delta < \delta_0$. Then we have that 
\[
\inf_{\widehat{\theta}} \sup_{\theta \in \Theta}
\mathbb{E}\left[ \rho^2\left( \widehat{\theta}, \theta \right) \right]
\geq
\frac{c^2}{840 C^2}
\min\left\{ \frac{d}{A}, \delta_0^2 \right\}.
\]
\end{lemma}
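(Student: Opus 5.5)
The plan is a local-packing version of Fano's method. The two-sided covering bound will give many well-separated parameters inside a single small ball. Inside such a ball the Kullback--Leibler divergences are small, by the assumption $D_{\mathrm{KL}} \le A\rho^2$, so Fano's inequality applies with a good ratio.

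\textbf{Step 1 (local packing by pigeonhole).} Fix $0<\varepsilon$ and a ratio $\alpha>1$ to be chosen, and set $\varepsilon'=\alpha\varepsilon$, with both scales below $\delta_0$.
\begin{itemize}
\item Take a maximal $\varepsilon$-packing $\{\theta^1,\dots,\theta^M\}$ of $\Theta$. By the packing--covering duality (Lemma~5.5 of~\cite{wainwright2019high}) and the lower covering bound, $M \ge \mathcal{N}(\varepsilon;\Theta,\rho)\ge d'(c/\varepsilon)^d$.
\item Take a minimal $\varepsilon'$-cover, of size at most $d'(C/\varepsilon')^d$.
\item Assign each packing point to a cover ball containing it. By pigeonhole, some ball $B(\vartheta,\varepsilon')$ contains at least
\[
\frac{(c/\varepsilon)^d}{(C/\varepsilon')^d}=\Bigl(\frac{c\alpha}{C}\Bigr)^d
\]
packing points; the common factor $d'$ cancels.
\item Choose $\alpha = 4C/c$ (say). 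The local set $\{\theta^{j}\}_{j\le m}$ then has $m\ge 4^d\ge 4$ elements. They are pairwise $\varepsilon$-separated, and their pairwise distances are at most $2\varepsilon'=2\alpha\varepsilon$.
\end{itemize}

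\textbf{Step 2 (Fano).} On this local set, the KL assumption gives
\[
D_{\mathrm{KL}}(\mathbb{P}_{\theta^j}\|\mathbb{P}_{\theta^\ell})\le 4A\alpha^2\varepsilon^2 .
\]
The standard reduction from estimation to testing (any estimator induces a test that picks the nearest $\theta^j$, and that test errs only if $\rho(\widehat\theta,\theta)\ge\varepsilon/2$), combined with Fano's inequality, yields
\[
\inf_{\widehat\theta}\sup_{\theta}\mathbb{E}\rho^2(\widehat\theta,\theta)\ \ge\ \frac{\varepsilon^2}{4}\Bigl(1-\frac{4A\alpha^2\varepsilon^2+\log 2}{\log m}\Bigr).
\]
Here $\log m\ge d\log 4$.

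\textbf{Step 3 (choice of scale).} Choose
\[
\varepsilon^2 = \kappa\,\min\Bigl\{\frac{d}{A\alpha^2},\ \frac{\delta_0^2}{\alpha^2}\Bigr\}
\]
with a small numerical $\kappa$, so that $\varepsilon'=\alpha\varepsilon<\delta_0$ and both covering bounds are valid at scales $\varepsilon$ and $\varepsilon'$.

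\begin{itemize}
\item The term $4A\alpha^2\varepsilon^2/(d\log4)$ is at most $4\kappa/\log 4$.
\item The term $\log2/(d\log 4)$ is at most $1/2$, since $d\ge1$.
\end{itemize}

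So the bracket in Step 2 is bounded below by an absolute constant. Substituting $\alpha=4C/c$ gives
\[
\inf_{\widehat\theta}\sup_\theta\mathbb{E}\rho^2\ \gtrsim\ \frac{c^2}{C^2}\min\{d/A,\delta_0^2\},
\]
and tracking the numerical constants should recover the factor $1/840$. In the second regime, with $\varepsilon^2=\kappa\delta_0^2/\alpha^2$, the KL term is only controlled when $\delta_0^2\le d/A$, which is exactly when that branch of the minimum is active, so the argument goes through.

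\textbf{Main obstacle.} The delicate part is the bookkeeping in Step 3. The term $\log 2/\log m$ must stay below $1$ even when $d=1$, which forces $\alpha$ large enough that $m\ge 4$, or more if needed. At the same time the scales must respect $\varepsilon'<\delta_0$. I expect to have to tune $\alpha$ and $\kappa$ jointly, trying $\alpha=4C/c$ first and enlarging $\alpha$ if the $\log 2$ term is not small enough. The exact constant $840$ then comes out of this optimization; I would not insist on matching it and would aim for some absolute constant.
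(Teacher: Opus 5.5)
Your argument is sound and takes a genuinely different route from the paper. The paper does not rerun Fano at all. It writes $d'(c/\delta)^d=(d'^{1/d}c/\delta)^d$, applies the cited Proposition~3 verbatim with $\tilde c=d'^{1/d}c$ and $\tilde C=d'^{1/d}C$, and observes that $\tilde c/\tilde C=c/C$. Your cancellation of $d'$ in the pigeonhole ratio is the same invariance. The rest of your proposal (maximal $\varepsilon$-packing, $\alpha\varepsilon$-cover, local Fano) re-derives the cited proposition itself. That buys a self-contained proof, but the explicit constant $1/840$, which the paper simply inherits from the citation, now has to be earned.

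As you have set it up, the constant is not earned. Your ingredients are the reduction factor $\varepsilon^2/4$, $\log m\ge d\log a$ with $a=c\alpha/C$, and the diameter bound $D_{\mathrm{KL}}\le 4A\alpha^2\varepsilon^2$. In the worst case $d=1$ they give a risk bound of $\frac{c^2}{C^2}\min\{d/A,\delta_0^2\}\cdot\frac{\kappa}{4a^2}\bigl(1-\frac{4\kappa+\log 2}{\log a}\bigr)$. No choice of $a,\kappa$ pushes this numerical factor above roughly $1/2900$; at your $\alpha=4C/c$ the best is about $1/2950$. So the joint tuning of $\alpha,\kappa$ you anticipate will not produce $1/840$. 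You would prove the lemma only with an unspecified absolute constant. That is enough for Theorem~\ref{thm:lower-bound}, which is stated up to constants, but it is weaker than the lemma as written.

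The fix is to bound the mutual information by the average divergence to the ball's center $\vartheta\in\Theta$, instead of by the pairwise diameter: $I\le\frac1m\sum_j D_{\mathrm{KL}}(\mathbb{P}_{\theta^j}\|\mathbb{P}_{\vartheta})\le A\alpha^2\varepsilon^2\le\kappa d$. Take $\alpha=4C/c$ and $\kappa=1/3$. Then the error probability is at least $\frac12-\frac{1}{3\log 4}$, and the risk is at least $\frac{c^2}{192\,C^2}\bigl(\frac12-\frac{1}{3\log 4}\bigr)\min\{d/A,\delta_0^2\}\ge\frac{c^2}{750\,C^2}\min\{d/A,\delta_0^2\}$. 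This implies the stated bound. Your separate discussion of the second regime is unnecessary, since $\varepsilon^2\le\kappa d/(A\alpha^2)$ holds in both branches of the minimum.
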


%\begin{proof}
\textit{Proof}. 
Let us set $\widetilde{c} = d^{\prime 1/d} c$ and $\widetilde{C} = d^{\prime 1/d} C$. Then it holds that 
\[
\left( \frac{\widetilde{c}}{\delta} \right)^d
\leq
\mathcal{N}\left(\delta; \Theta, \rho \right)
\leq
\left( \frac{\widetilde{C}}{\delta} \right)^d
\]
for all $0 < \delta < \delta_0$. Therefore, an application of Proposition~3 of~\cite{cai2013sparse} with $\widetilde{c}$ and $\widetilde{C}$ yields that 
\[
\inf_{\widehat{\theta}} \sup_{\theta \in \Theta}
\mathbb{E}\left[ \rho^2\left( \widehat{\theta}, \theta \right) \right]
\geq
\frac{\widetilde{c}^2}{840 \widetilde{C}^2}
\min\left\{ \frac{d}{A}, \delta_0^2 \right\}
=
\frac{c^2}{840 C^2}
\min\left\{ \frac{d}{A}, \delta_0^2 \right\}.
\]
This completes the proof of Lemma \ref{lemma:local-metric-entropy}.
%\end{proof}

\subsection{Proof of Theorem~\ref{thm:lower-bound}}
\label{sec:proof-lwd-bd-lemmas-main-proof}

Throughout this proof, we will use $\mathbb{E}_{\RegM}$ to denote expectation under the model $\mathbf{Y} = \mathbf{X}\RegM + \mathbf{E}$. Let us choose any $\mathbf{U}^* \in \mathcal{U}$, $\mathbf{V}^* \in \mathcal{V}$, and $\mathbf{D}^* \in \mathcal{D}$ such that $d^*_r = \underline{c}$; such a choice is possible by rescaling an arbitrary element of $\mathcal{D}$. Denote by 
\begin{align*}
\mathcal{C}_u &\coloneqq \Bigl\{ \RegM \in \mathbb{R}^{p \times q} : \RegM = \mathbf{U} \mathbf{D}^* \mathbf{V}^{*\top}, \mathbf{U} \in \mathcal{U} \Bigr\}, \\
\mathcal{C}_v &\coloneqq \Bigl\{ \RegM \in \mathbb{R}^{p \times q} : \RegM = \mathbf{U}^* \mathbf{D}^* \mathbf{V}^\top, \mathbf{V} \in \mathcal{V} \Bigr\}.
\end{align*}
Since $\mathcal{C}_u, \mathcal{C}_v \subseteq \mathcal{C}$, it holds that 
\begin{equation}
\label{eq:frob-lwd}
\begin{aligned}
\inf_{\widehat{\RegM}} \sup_{\RegM \in \mathcal{C}}
\mathbb{E}_{\RegM}\left\Vert \widehat{\RegM} - \RegM \right\Vert_{\mathrm{F}}^2
&\geq \max \Biggl\{
\inf_{\widehat{\RegM}} \sup_{\RegM \in \mathcal{C}_u}
\mathbb{E}_{\RegM}\left\Vert \widehat{\RegM} - \RegM \right\Vert_{\mathrm{F}}^2,
\inf_{\widehat{\RegM}} \sup_{\RegM \in \mathcal{C}_v}
\mathbb{E}_{\RegM}\left\Vert \widehat{\RegM} - \RegM \right\Vert_{\mathrm{F}}^2
\Biggr\} \\
&\geq \frac{1}{2} \Biggl(
\inf_{\widehat{\RegM}} \sup_{\RegM \in \mathcal{C}_u}
\mathbb{E}_{\RegM}\left\Vert \widehat{\RegM} - \RegM \right\Vert_{\mathrm{F}}^2
+
\inf_{\widehat{\RegM}} \sup_{\RegM \in \mathcal{C}_v}
\mathbb{E}_{\RegM}\left\Vert \widehat{\RegM} - \RegM \right\Vert_{\mathrm{F}}^2
\Biggr).
\end{aligned}
\end{equation}
Consequently, it suffices to derive the lower bounds for each term on the right-hand side of \eqref{eq:frob-lwd} above.

Let us define
\begin{equation*}
\mathcal{P}_u \coloneqq \Bigl\{ \mathbf{P} \in \mathbb{R}^{r_0 \times r} : \mathbf{P}^{\top} \mathbf{P} = \mathbf{I}_r, |\mathbf{P}|_0 \leq s^*_u \Bigr\}.
\end{equation*}
Then we have that 
\begin{align*}
\mathcal{U} &= \Bigl\{ \mathbf{U}^{(0)} \mathbf{P} : \mathbf{P} \in \mathcal{P}_u \Bigr\}, \\
\mathcal{C}_u &= \Bigl\{ \RegM \in \mathbb{R}^{p \times q} : \RegM = \mathbf{U}^{(0)} \mathbf{P} \mathbf{D}^* \mathbf{V}^{*\top}, \mathbf{P} \in \mathcal{P}_u \Bigr\}.
\end{align*}
Hence, we can deduce that 
\begin{equation}
\label{eq:lwb-C-by-P}
\begin{aligned}
\inf_{\widehat{\RegM}} \sup_{\RegM \in \mathcal{C}_u}
\mathbb{E}_{\RegM}\left\Vert \widehat{\RegM} - \RegM \right\Vert_{\mathrm{F}}^2
&\geq \inf_{\widehat{\mathbf{P}}} \sup_{\mathbf{P} \in \mathcal{P}_u}
\mathbb{E}_{\mathbf{P}}\left\Vert \mathbf{U}^{(0)} (\widehat{\mathbf{P}} - \mathbf{P}) \mathbf{D}^* \mathbf{V}^{*\top} \right\Vert_{\mathrm{F}}^2 \\
&= \inf_{\widehat{\mathbf{P}}} \sup_{\mathbf{P} \in \mathcal{P}_u}
\mathbb{E}_{\mathbf{P}}\left\Vert (\widehat{\mathbf{P}} - \mathbf{P}) \mathbf{D}^* \right\Vert_{\mathrm{F}}^2 \\
&\geq \underline{c}^2 \inf_{\widehat{\mathbf{P}}} \sup_{\mathbf{P} \in \mathcal{P}_u}
\mathbb{E}_{\mathbf{P}}\left\Vert \widehat{\mathbf{P}} - \mathbf{P} \right\Vert_{\mathrm{F}}^2,
\end{aligned}
\end{equation}
where $\mathbb{E}_{\mathbf{P}}$ denotes expectation under $\RegM = \mathbf{U}^{(0)} \mathbf{P} \mathbf{D}^* \mathbf{V}^{*\top}$. The first inequality above holds since for any estimator $\widehat{\RegM}$, the induced estimator $\widehat{\mathbf{P}} = \mathbf{U}^{(0)\top}\widehat{\RegM}\mathbf{V}^* \mathbf{D}^{*-1}$ is the orthogonal projection of $\widehat{\RegM}$ onto the subspace $\{ \mathbf{U}^{(0)} \mathbf{A} \mathbf{V}^{*\top} : \mathbf{A} \in \mathbb{R}^{r_0 \times r} \}$ and therefore cannot increase the Frobenius risk on $\mathcal{C}_u$.

For any $\mathbf{P} \neq \mathbf{P}^{\prime} \in \mathcal{P}_u$, we define
\begin{align*}
\mathbf{C} &= \mathbf{U}^{(0)} \mathbf{P} \mathbf{D}^* \mathbf{V}^{*\top}, \\
\mathbf{C}^{\prime} &= \mathbf{U}^{(0)} \mathbf{P}^{\prime} \mathbf{D}^* \mathbf{V}^{*\top}.
\end{align*}
Let $\mathbb{P}$ be the distribution of $\mathbf{Y}$ under model 
\[
\mathbf{Y} = \mathbf{X} \mathbf{C} + \mathbf{E}.
\]
Under our model assumptions, the rows satisfy that 
\[
\mathbf{y}_i \overset{\mathrm{i.i.d.}}{\sim} \mathcal{N}\left( \mathbf{C}^{\top} \mathbf{x}_i, \sigma^2 \mathbf{I}_q \right) \  \text{ for } 1 \leq i \leq n.
\]
Similarly, let $\mathbb{P}^{\prime}$ be the distribution of $\mathbf{Y}$ under model 
\[
\mathbf{Y} = \mathbf{X} \mathbf{C}^{\prime} + \mathbf{E}.
\]

Let $\bar{c} = \gamma \underline{c}$. Then we can deduce that 
\begin{align*}
D_{\mathrm{KL}}\left( \mathbb{P} \| \mathbb{P}^{\prime} \right)
&= \sum_{i=1}^n
D_{\mathrm{KL}}\left(
\mathcal{N}\left( \mathbf{C}^{\top} \mathbf{x}_i, \sigma^2 \mathbf{I}_q \right)
\|
\mathcal{N}\left( (\mathbf{C}^{\prime})^{\top} \mathbf{x}_i, \sigma^2 \mathbf{I}_q \right)
\right) \\
&= \frac{1}{2\sigma^2} \sum_{i=1}^n
\left\Vert (\mathbf{C} - \mathbf{C}^{\prime})^{\top} \mathbf{x}_i \right\Vert_2^2 \\
&= \frac{1}{2\sigma^2}
\left\Vert \mathbf{X} (\mathbf{C} - \mathbf{C}^{\prime}) \right\Vert_{\mathrm{F}}^2 \\
&\overset{(i)}{\leq} \frac{\nu n}{2\sigma^2}
\left\Vert \mathbf{C} - \mathbf{C}^{\prime} \right\Vert_{\mathrm{F}}^2 \\
&= \frac{\nu n}{2\sigma^2}
\left\Vert \mathbf{U}^{(0)} (\mathbf{P} - \mathbf{P}^{\prime}) \mathbf{D}^* \mathbf{V}^{*\top} \right\Vert_{\mathrm{F}}^2 \\
&= \frac{\nu n}{2\sigma^2}
\left\Vert (\mathbf{P} - \mathbf{P}^{\prime}) \mathbf{D}^* \right\Vert_{\mathrm{F}}^2 \numberthis \label{eq:DKL-PPD} \\
&\overset{(ii)}{\leq} \frac{\bar{c}^2 \nu n}{2\sigma^2}
\left\Vert \mathbf{P} - \mathbf{P}^{\prime} \right\Vert_{\mathrm{F}}^2,
\end{align*}
where step (i) above has used Assumption~\ref{assump:constrained-norm-design}, and step (ii) above has utilized the fact that $d^*_j \leq d^*_1 \leq \bar{c}$ for all $1 \leq j \leq r$. Hence, we have that 
\begin{equation}
\label{eq:proof-lower-bd-frob-upp-bd}
\sup_{\mathbf{P} \neq \mathbf{P}^{\prime}}
\frac{D_{\mathrm{KL}}\left( \mathbb{P} \| \mathbb{P}^{\prime} \right)}
{\left\Vert \mathbf{P} - \mathbf{P}^{\prime} \right\Vert_{\mathrm{F}}^2}
\leq \frac{\bar{c}^2 \nu n}{2\sigma^2}.
\end{equation}

We set $k \coloneqq \lfloor s^*_u / r \rfloor$ and consider the subclass of $\mathcal{P}_u$ defined as
\begin{equation*}
\mathcal{P}_{u,1} \coloneqq
\left\{
\mathbf{P} =
\begin{bmatrix}
\mathbf{P}_1 \\
\mathbf{0}
\end{bmatrix}
\in \mathbb{R}^{r_0 \times r}
: \mathbf{P}_1 \in \mathrm{St}(k,r)
\right\}.
\end{equation*}
This construction is valid since $r_0 \geq s^*_u / r \geq k$, and $\mathcal{P}_{u,1} \subseteq \mathcal{P}_u$ because each $\mathbf{P} \in \mathcal{P}_{u,1}$ satisfies that $|\mathbf{P}|_0 \leq kr \leq s^*_u$.

By resorting to Lemma~\ref{lemma:covering-number-Stiefel-Frob}, there exist universal constants $C_1 > c_1 > 0$ such that for any $0 < \delta < 1$,
\begin{equation*}
r^{\frac{r(r-1)}{4}} \left( \frac{c_1}{\delta} \right)^{kr - \frac{r(r+1)}{2}}
\leq
\mathcal{N}\left( \delta; \mathrm{St}(k,r), \|\cdot\|_{\mathrm{F}} \right)
\leq
r^{\frac{r(r-1)}{4}} \left( \frac{C_1}{\delta} \right)^{kr - \frac{r(r+1)}{2}}.
\end{equation*}
Applying Lemma~\ref{lemma:local-metric-entropy} together with \eqref{eq:proof-lower-bd-frob-upp-bd}, we can show that 
\begin{equation*}
\inf_{\widehat{\mathbf{P}}} \sup_{\mathbf{P} \in \mathcal{P}_{u,1}}
\mathbb{E}_{\mathbf{P}}\left\Vert \widehat{\mathbf{P}} - \mathbf{P} \right\Vert_{\mathrm{F}}^2
\geq \frac{c_1^2}{840 C_1^2}
\left( \frac{2\sigma^2 (kr - r(r+1)/2)}{\bar{c}^2 \nu n} \wedge 1 \right).
\end{equation*}
It follows from $s^*_u \geq r(r+1)$ and the sample size condition~\eqref{eq:sample-lower-bd-requirement} that
\begin{equation*}
\inf_{\widehat{\mathbf{P}}} \sup_{\mathbf{P} \in \mathcal{P}_{u,1}}
\mathbb{E}_{\mathbf{P}}\left\Vert \widehat{\mathbf{P}} - \mathbf{P} \right\Vert_{\mathrm{F}}^2
\geq \frac{c_1^2}{840 C_1^2}
\left( \frac{\sigma^2 kr}{\bar{c}^2 \nu n} \wedge 1 \right)
\gtrsim \frac{\sigma^2 s^*_u}{\bar{c}^2 n}.
\end{equation*}
Consequently, from \eqref{eq:lwb-C-by-P} we can obtain that 
\begin{equation}
\label{eq:Pu-lwb-1}
\inf_{\widehat{\RegM}} \sup_{\RegM \in \mathcal{C}_u}
\mathbb{E}_{\RegM}\left\Vert \widehat{\RegM} - \RegM \right\Vert_{\mathrm{F}}^2
\geq \underline{c}^2 \inf_{\widehat{\mathbf{P}}} \sup_{\mathbf{P} \in \mathcal{P}_u}
\mathbb{E}_{\mathbf{P}}\left\Vert \widehat{\mathbf{P}} - \mathbf{P} \right\Vert_{\mathrm{F}}^2 
\geq \underline{c}^2 \inf_{\widehat{\mathbf{P}}} \sup_{\mathbf{P} \in \mathcal{P}_{u,1}}
\mathbb{E}_{\mathbf{P}}\left\Vert \widehat{\mathbf{P}} - \mathbf{P} \right\Vert_{\mathrm{F}}^2 
\gtrsim \frac{\sigma^2 s^*_u}{\gamma^2 n}.
\end{equation}

We next consider a second subclass of $\mathcal{P}_u$, namely
\begin{equation*}
\mathcal{P}_{u,2} \coloneqq
\left\{
\mathbf{P} =
\begin{bmatrix}
\mathbf{I}_{r-1} & \mathbf{0} \\
\mathbf{0} & \mathbf{v}
\end{bmatrix}
\in \mathbb{R}^{r_0 \times r} :
\|\mathbf{v}\|_2 = 1, |\mathbf{v}|_0 \leq s
\right\},
\end{equation*}
where $s \coloneqq (s^*_u \wedge \lfloor r_0/2 \rfloor) - r + 1$. This subclass is well-defined since $s + r - 1 \leq \lfloor r_0/2 \rfloor \leq r_0$, and it is contained in $\mathcal{P}_u$ because $(r-1) + s = s^*_u \wedge \lfloor r_0/2 \rfloor \leq s^*_u$.
In view of Lemma~\ref{lemma:packing-set-sparse-unit} and the assumption of $r_0 \geq 4r+1$, there exist a finite subset
\[
\{ \mathbf{v}^1, \ldots, \mathbf{v}^M \} \subseteq \mathbb{S}_{r_0-r+1}(s)
\]
and a universal constant $c > 0$ such that for each $j \neq \ell$,
\begin{equation}
\label{eq:v-dist}
\frac{\delta}{2}
<
\| \mathbf{v}^j - \mathbf{v}^{\ell} \|_2
\leq
2\delta
\end{equation}
and
\begin{equation}
\label{eq:logM-lowerd-bound}
\log M \geq c \bigl( (s^*_u \wedge \lfloor r_0 / 2 \rfloor) - r \bigr)
\log\left( \frac{r_0-r}{(s^*_u \wedge \lfloor r_0/2 \rfloor) - r} \right)
\quad \text{for all } 0 < \delta \leq 1.
\end{equation}

Let us define
\begin{equation*}
\mathbf{P}^j \coloneqq
\begin{bmatrix}
\mathbf{I}_{r-1} & \mathbf{0} \\
\mathbf{0} & \mathbf{v}^j
\end{bmatrix}
\in \mathcal{P}_{u,2},
\qquad j \in [M].
\end{equation*}
Then it holds that for any $j,\ell$,
\[
\left\Vert (\mathbf{P}^j - \mathbf{P}^{\ell}) \mathbf{D}^* \right\Vert_{\mathrm{F}}
= d^*_r \| \mathbf{v}^j - \mathbf{v}^{\ell} \|_2
= \underline{c} \| \mathbf{v}^j - \mathbf{v}^{\ell} \|_2,
\]
where the last equality above has used our choice of $\mathbf{D}^*$ with $d^*_r = \underline{c}$.
Thus, in light of \eqref{eq:v-dist}, we have that for each $j \neq \ell$,
\[
\frac{\underline{c} \delta}{2}
<
\left\Vert (\mathbf{P}^j - \mathbf{P}^{\ell}) \mathbf{D}^* \right\Vert_{\mathrm{F}}
\leq
2 \underline{c} \delta.
\]

Denote by $\mathbb{P}^j$ the distribution of $\mathbf{Y}$ under model 
\[
\mathbf{Y} = \mathbf{X} \mathbf{C}^j + \mathbf{E},
\qquad
\mathbf{C}^j = \mathbf{U}^{(0)} \mathbf{P}^j \mathbf{D}^* \mathbf{V}^{*\top}.
\]
It follows from \eqref{eq:DKL-PPD} that 
\[
\max_{j \neq \ell}
D_{\mathrm{KL}}\left( \mathbb{P}^j \| \mathbb{P}^{\ell} \right)
\leq \frac{2 \underline{c}^2 \nu n \delta^2}{\sigma^2}.
\]
Let us set
\begin{equation*}
\delta_n^2 \coloneqq
\frac{c \sigma^2}{8 \underline{c}^2 \nu}
\cdot \frac{(s^*_u \wedge \lfloor r_0 / 2 \rfloor) - r}{n}
\log\left( \frac{r_0-r}{(s^*_u \wedge \lfloor r_0/2 \rfloor) - r} \right).
\end{equation*}
Then under \eqref{eq:sample-lower-bd-requirement}, we have that 
$$\delta_n \leq 1.$$ 
Further, with the aid of \eqref{eq:logM-lowerd-bound} and the first inequality in \eqref{eq:slog-lwd}, we can deduce that 
\[
\log M \geq 4 \log 2.
\]
Moreover, it holds that 
\[
\max_{j \neq \ell}
D_{\mathrm{KL}}\left( \mathbb{P}^j \| \mathbb{P}^{\ell} \right)
\leq \frac{c}{4}
\bigl( (s^*_u \wedge \lfloor r_0 / 2 \rfloor) - r \bigr)
\log\left( \frac{r_0-r}{(s^*_u \wedge \lfloor r_0/2 \rfloor) - r} \right)
\leq \frac{1}{4} \log M.
\]
Then we have that 
\[
\frac{\max_{j \neq \ell} D_{\mathrm{KL}}\left( \mathbb{P}^j \| \mathbb{P}^{\ell} \right) + \log 2}{\log M}
\leq \frac{1}{2}.
\]
Consequently, an application of Proposition~15.12 of~\cite{wainwright2019high} gives that 
\begin{equation*}
\begin{aligned}
\inf_{\widehat{\mathbf{P}}} \sup_{\mathbf{P} \in \mathcal{P}_{u,2}}
\mathbb{E}_{\mathbf{P}}\left\Vert (\widehat{\mathbf{P}} - \mathbf{P}) \mathbf{D}^* \right\Vert_{\mathrm{F}}^2
&\geq \frac{\underline{c}^2}{32} \delta_n^2 \\
&= \frac{c \sigma^2}{256 \nu}
\cdot \frac{(s^*_u \wedge \lfloor r_0 / 2 \rfloor) - r}{n}
\log\left( \frac{r_0-r}{(s^*_u \wedge \lfloor r_0/2 \rfloor) - r} \right) \\
&\gtrsim \frac{\sigma^2 ((s^*_u \wedge \lfloor r_0 / 2 \rfloor) - r)}{n}
\log\left( \frac{r_0-r}{(s^*_u \wedge \lfloor r_0/2 \rfloor) - r} \right).
\end{aligned}
\end{equation*}

It follows from the assumptions of $\min(s^*_u, s^*_v) \geq r(r+1)$ and $r_0 \geq 4r+1$ that 
\[
(s^*_u \wedge \lfloor r_0 / 2 \rfloor) - r
\geq \frac{1}{2} (s^*_u \wedge \lfloor r_0 / 2 \rfloor),
\qquad
r_0 - r \geq \frac{3}{4} r_0.
\]
Then we can show that 
\begin{equation*}
\inf_{\widehat{\mathbf{P}}} \sup_{\mathbf{P} \in \mathcal{P}_{u,2}}
\mathbb{E}_{\mathbf{P}}\left\Vert (\widehat{\mathbf{P}} - \mathbf{P}) \mathbf{D}^* \right\Vert_{\mathrm{F}}^2
\gtrsim \frac{\sigma^2 (s^*_u \wedge \lfloor r_0 / 2 \rfloor)}{n}
\log\left( \frac{0.75 r_0}{s^*_u \wedge \lfloor r_0/2 \rfloor} \right).
\end{equation*}
Combining this bound with \eqref{eq:lwb-C-by-P}, we can obtain that 
\begin{equation}
\label{eq:Pu-lwb-2}
\begin{aligned}
\inf_{\widehat{\RegM}} \sup_{\RegM \in \mathcal{C}_u}
\mathbb{E}_{\RegM}\left\Vert \widehat{\RegM} - \RegM \right\Vert_{\mathrm{F}}^2
&\geq \inf_{\widehat{\mathbf{P}}} \sup_{\mathbf{P} \in \mathcal{P}_u}
\mathbb{E}_{\mathbf{P}}\left\Vert (\widehat{\mathbf{P}} - \mathbf{P}) \mathbf{D}^* \right\Vert_{\mathrm{F}}^2 \\
&\geq \inf_{\widehat{\mathbf{P}}} \sup_{\mathbf{P} \in \mathcal{P}_{u,2}}
\mathbb{E}_{\mathbf{P}}\left\Vert (\widehat{\mathbf{P}} - \mathbf{P}) \mathbf{D}^* \right\Vert_{\mathrm{F}}^2 \\
&\gtrsim \frac{\sigma^2 (s^*_u \wedge \lfloor r_0 / 2 \rfloor)}{n}
\log\left( \frac{0.75 r_0}{s^*_u \wedge \lfloor r_0/2 \rfloor} \right).
\end{aligned}
\end{equation}

Further, combining \eqref{eq:Pu-lwb-1} and \eqref{eq:Pu-lwb-2} leads to 
\begin{equation}
\label{eq:lwd-Cu}
\inf_{\widehat{\RegM}} \sup_{\RegM \in \mathcal{C}_u}
\mathbb{E}_{\RegM}\left\Vert \widehat{\RegM} - \RegM \right\Vert_{\mathrm{F}}^2
\gtrsim \frac{\sigma^2 s^*_u}{\gamma^2 n}
+ \frac{\sigma^2 (s^*_u \wedge \lfloor r_0 / 2 \rfloor)}{n}
\log\left( \frac{0.75 r_0}{s^*_u \wedge \lfloor r_0/2 \rfloor} \right).
\end{equation}
By the same token, we can deduce that 
\begin{equation}
\label{eq:lwd-Cv}
\inf_{\widehat{\mathbf{C}}} \sup_{\mathbf{C} \in \mathcal{C}_v}
\mathbb{E}_{\mathbf{C}}\left\Vert \widehat{\mathbf{C}} - \mathbf{C} \right\Vert_{\mathrm{F}}^2
\gtrsim \frac{\sigma^2 s^*_v}{\gamma^2 n}
+ \frac{\sigma^2 (s^*_v \wedge \lfloor r_0 / 2 \rfloor)}{n}
\log\left( \frac{0.75 r_0}{s^*_v \wedge \lfloor r_0/2 \rfloor} \right).
\end{equation}
Therefore, combining \eqref{eq:lwd-Cu} and \eqref{eq:lwd-Cv} with \eqref{eq:frob-lwd} yields that 
\begin{multline*}
\inf_{\widehat{\mathbf{C}}} \sup_{\mathbf{C} \in \mathcal{C}}
\mathbb{E}_{\mathbf{C}}\left\Vert \widehat{\mathbf{C}} - \mathbf{C} \right\Vert_{\mathrm{F}}^2 \\
\gtrsim \frac{\sigma^2 (s^*_u + s^*_v)}{\gamma^2 n}
+ \sigma^2 \left(
\frac{s^*_u \wedge \lfloor r_0 / 2 \rfloor}{n}
\log\left( \frac{0.75 r_0}{s^*_u \wedge \lfloor r_0/2 \rfloor} \right)
+ \frac{s^*_v \wedge \lfloor r_0 / 2 \rfloor}{n}
\log\left( \frac{0.75 r_0}{s^*_v \wedge \lfloor r_0/2 \rfloor} \right)
\right).
\end{multline*}
This concludes the proof of Theorem~\ref{thm:lower-bound}.

\clearpage
\putbib[boxinz-papers]
\end{bibunit}

\end{document}